\newtheorem{lemma}{Lemma}
\newtheorem{corollary}{Corollary}
\newtheorem{theorem}{Theorem}
\theoremstyle{definition}\newtheorem{remark}{Remark}
\begin{document}
\title{Secure Mobile Edge Computing in IoT \center{via Collaborative Online Learning}}
\author{Bingcong Li, Tianyi Chen, and Georgios B. Giannakis\\
	
\thanks {Work in this paper was supported by NSF 1509040, 1508993, and 1711471.
B. Li, T. Chen and G. B. Giannakis are with the Department of Electrical and Computer Engineering and the Digital Technology Center, University of Minnesota, Minneapolis, MN 55455 USA. Emails: \{lixx5599, chen3827, georgios\}@umn.edu
}
}
\markboth{}{}
\maketitle

%

\begin{abstract}
	To accommodate heterogeneous tasks in Internet of Things (IoT), a new communication and computing paradigm termed mobile edge computing emerges that extends computing services from the cloud to edge, but at the same time exposes new challenges on security. 
	The present paper studies online security-aware edge computing under jamming attacks. 
Leveraging online learning tools, novel algorithms abbreviated as SAVE-S and SAVE-A are developed to cope with the stochastic and adversarial forms of jamming, respectively. 
Without utilizing extra resources such as spectrum and transmission power to evade jamming attacks, 
	SAVE-S and SAVE-A can 	select the most reliable server to offload computing tasks with minimal privacy and security concerns.
It is analytically established that without any prior information on future jamming and server security risks, the proposed schemes can achieve ${\cal O}\big(\sqrt{T}\big)$ regret.
	Information sharing among devices can accelerate the security-aware computing tasks. Incorporating the information shared by other devices, SAVE-S and SAVE-A offer impressive improvements on the sublinear regret, which is guaranteed by what is termed ``value of cooperation.'' Effectiveness of the proposed schemes is tested on both synthetic and real datasets.
\end{abstract}
\begin{IEEEkeywords}
	Cyber security, mobile edge computing, online learning, multi-armed bandit, jamming.
\end{IEEEkeywords}
\section{Introduction}\label{s.model}

Internet of Things (IoT) impact every aspect of daily life ranging from healthcare, video recognition in smart homes and smart cities, to monitoring smart grids \cite{chiang2016}. Among these, various latency-sensitive applications such as autonomous driving and virtual reality raise new challenges to the current IoT paradigms; e.g., a critical question is how to simultaneously meet the demands of huge volume and latency-sensitive data requests under limited computation power of IoT devices. 
Clearly, cloud computing is encountering growing challenges in meeting such new requirements in IoT.
Since the concept was first proposed in 2009 \cite{satyanarayanan2009}, edge computing has been viewed as a promising solution by subsiding the computing capability from cloud to edge servers to facilitate real-time computation services \cite{pan2018,wang2018,chen2016efficient,chiang2016,chen2017c}.  

Although edge computing enables offloading computationally intensive tasks to the edge, security issues could prevent one from fully embracing its advantages \cite{salameh2018,garnaev2016,ni2018,yang2017}. As an example, although edge computing facilitates location based-services (e.g., social networks),  users' location information is also exposed to edge nodes as well, from which a ``malicious'' edge node can pry into users' privacy \cite{ni2018}. In addition, data collected from critical infrastructure such as smart grids need also to be veiled in order to prevent blackouts and other malicious attacks. However, the high-complexity encryption techniques with complicated cipher-decipher processes are usually not computationally affordable for IoT devices (e.g., sensors in smart grids) \cite{ni2018}. One approach to coping with such privacy concerns is to allow devices to choose their trusted service on-demand with the so-called ``transparent computing model'' \cite{ren2017}, while the trustworthiness of edge servers could be evaluated by the trust management services \cite{noor2016}.

Besides privacy concerns, jamming and eavesdropping - the two main attacks at the physical layer \cite{zou2016,roman2018} -  are still an issue for IoT systems. This work mainly focuses on security-aware edge computing under jamming attacks that will block the communication link between IoT devices and edge servers; see \cite{hu2018} for schemes dealing with eavesdropping. 

Existing works dealing with jamming attacks can be categorized to those on anomaly detection \cite{lu2014} and jamming mitigation \cite{salameh2018,garnaev2016,oro2017,zhang2016,wang2012,chang2017,zhou2017}. 
Most incorporate extra resources such as power and spectrum to alleviate the effect of jamming. 
Optimal power allocation schemes for jamming attacks were studied in \cite{oro2017}. Assuming a low-power jammer, Bayesian game based anti-jamming strategies were reported in \cite{garnaev2016,zhang2016}.
However, for IoT devices with limited battery capacity, excess power consumption is not always affordable.
On the other hand, spectrum allocation to evade jamming was considered in \cite{salameh2018} for cognitive radio based IoT networks. Frequency hopping strategies without pre-shared secrets, a.k.a. uncoordinated frequency hopping, was adopted in \cite{wang2012}, where a multi-armed bandit (MAB) scheme was introduced to allocate frequency bands only considering one transmitter-receiver pair; and in \cite{chang2017} for large-scale and more sophisticated cognitive radio networks. Recent efforts were also devoted to improve energy efficiency via channel hopping based anti-jamming schemes. An MAB-based channel selection with an additional online gradient ascent to maximize energy efficiency was proposed in \cite{zhou2017}.
However, due to IoT devices operating over different frequency bands, spectrum expansion-based schemes have limited appeal especially for the spectrum-scarce IoT setups.

In this paper, we develop novel approaches to tackle the security-aware edge computing problem based on MAB with sleeping arms \cite{kanade2009,kleinberg2010,li2018}, where the number of accessible servers can be time-varying. 
Specifically, each IoT device progressively learns the risk associated with edge servers and adaptively chooses an edge server to offload computing tasks among all available servers per slot. 
In addition, we further consider the case where IoT devices can share information to cooperatively achieve security-aware edge computing. 
To account for the mobility of IoT devices, we model the communication links as a time-varying directed graph. Under this model, performance gain of leveraging cooperation in securing edge computing is also rigorously established. 

Our main contributions are summarized as follows:
\vspace{-0.1cm}
\begin{enumerate}
	\item[\textbf{c1)}] Leveraging the MAB framework, we develop SAVE-S and SAVE-A algorithms to deal with edge computing under stochastic and adversarial jamming attacks, respectively. Incorporating the information shared by allied devices, SAVE-S and SAVE-A can accelerate the security-aware computing.
	\item[\textbf{c2)}] We analytically establish that an ${\cal O}\big(\sqrt{T}\big)$ regret can be achieved by both SAVE-S and SAVE-A, and benefit from device cooperation with markedly lower risks - a quantifiable performance gain of cooperation that we call the \emph{value of cooperation}.  
	\item[\textbf{c3)}] Simulation tests have been conducted using both synthetic and real data to showcase the effectiveness of the proposed schemes, and confirm the value of cooperation for security-aware computing.
\end{enumerate}

\noindent\textbf{Notations:} $\mathbb{E}$ denotes the expectation; $\mathds{1}$ denotes the indicator function; $(\cdot)^{\top}$ stands for vector transposition; $\mathbf{x}$, while inequalities for vectors are defined entry-wise.

\section{Models and Problem Statements}

This section introduces the models and the formulation of the security-aware problem under jamming attacks.

\subsection{Modeling preliminaries}
Consider an IoT network with a set of ${\cal K}:= \{1,2,\ldots, K\}$ edge servers to serve computational requestes for a set of IoT devices ${\cal J}:= \{1,2,\ldots, J\}$. Per slot $t$, the task of device $j$ is denoted by a tuple $(c_t^j,s_t^j)$ with $c_t^j$ denoting the required resources (e.g., CPU cycles) to complete the task, and $s_t^j$ the size of computation task (the data input and the associated processing code) \cite{chen2016efficient,chen2018task}.

\emph{Security risk.}
When an edge server is being attacked, it can behave unfaithfully or intentionally sabotage the computation tasks. Therefore, to alleviate such a compromise on privacy, an IoT device has to select the most reliable edge server for offloading computing tasks; see a diagram for security-aware edge computing in Fig. \ref{fig.app}. To measure the edge server's reliability, we rely on the security risk, which depends on e.g., the number of attacks within a slot duration \cite{noor2016}. Let $\gamma_{1,t}(k)$ denote the unit risk of computing at server $k$, and $\gamma_{2,t}(k)$ the unit risk for privacy information leakage at server $k$, measured as in \cite{noor2016}. 
Furthermore, let $\rho_j \in [0,1]$ denote a device-specific weight on security; e.g., a larger $\rho_j$ can be used for safety-sensitive tasks involved in autonomous vehicle and healthcare, while a smaller $\rho_j$ can be adopted by sensors in smart homes and smart grids where privacy is the first priority. 
Per slot $t$, the security risk for device $j$ choosing server $k$ is modeled by \cite{chen2017b}
\begin{equation}\label{eq.risk}
	r_t^j (k) = \rho_j c_t^j \gamma_{1,t} (k)  + (1-\rho_j) s_t^j \gamma_{2,t} (k).
\end{equation}
Note that the up-to-date security evaluation is available only after the IoT devices communicate with edge servers; that is, $\gamma_{1,t} (k)$ and $\gamma_{2,t} (k)$ are obtained at the end of slot $t$.


\emph{Jamming attacks.}
Malicious attackers can also sabotage the IoT devices with jamming attacks, which block their access to edge servers \cite{zou2016,roman2018}. Jammers can be classified as \textit{stochastic jammers}, which attack the IoT devices with a fixed probability; or, as \textit{adversarial jammers} intelligently deciding which edge servers to block \cite{garnaev2016}. With the presence of jammers, collect the accessible servers for device $j$ in ${\cal K}_t^j \subseteq {\cal K}$, from which device $j$ selects the most reliable server. Due to the directionality of jamming, for two different devices $j$ and $j'$, it is possible that ${\cal K}_t^j \neq {\cal K}_t^{j'}$. Note that $r_t^j (k)$ for jammed server $k \notin {\cal K}_t^j$ is still well-defined, but server $k$ cannot be selected by device $j$ and thus $r_t^j (k)$ is not revealed. 

\emph{Device cooperation.}
Information sharing can assist the security-aware computation task at the edge. In this setup, information sharing takes places at the edge server as well as at the IoT devices side. Cooperation is forbidden at the edge server side, since it can expose serious privacy concerns by providing an extra medium for malware transportation \cite{cheng2017}. To expedite the server selection procedure, information sharing among devices can be beneficial. Per slot $t$, after device $j$ observes $\gamma_{1,t} (k)$ and $\gamma_{2,t} (k)$, it can communicate this information to its neighbors. Note that since the communication among devices demands extra energy, whether to participate in cooperation is device specific. Assume that information sharing is unidirectional, meaning that it is possible for $j'$ to receive information from $j$, but not vise versa. 

\begin{figure}[t]
	\vspace{-0.1cm}
	\centering
	\includegraphics[height=0.24\textwidth]{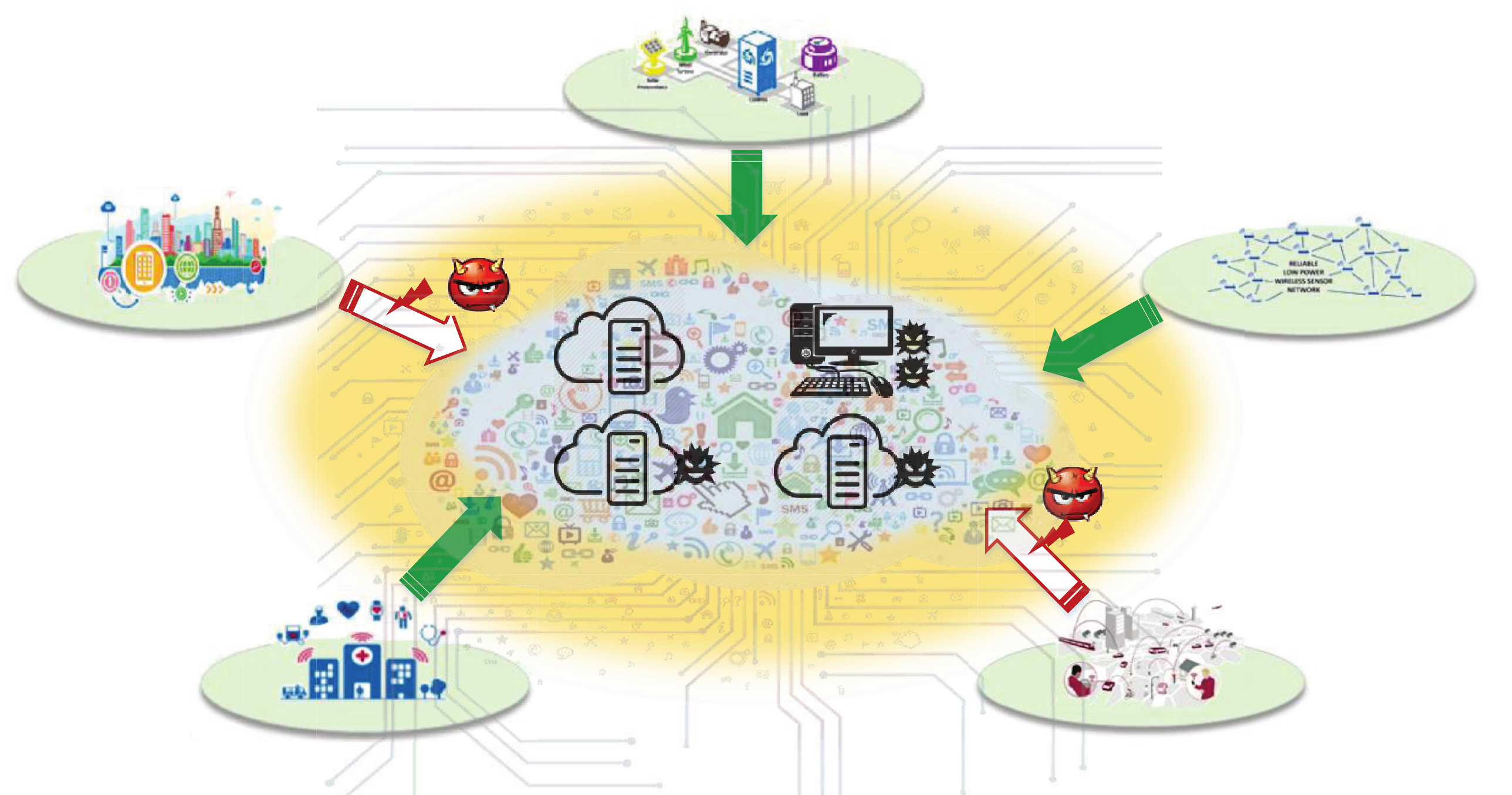}
	\caption{Security-aware edge computing diagram.}
	\label{fig.app}
	\vspace{-0.3cm}
\end{figure}

In a nutshell, we wish to solve the following problem in an online fashion,
\begin{equation}\label{eq.prob1}
\min_{\{a_t^j \in{\cal K}_t^j, \forall t, \forall j\}}\,\sum_{t=1}^{T}\sum_{k=1}^{K}\sum_{j=1}^{J} r_t^j (k) \mathds{1}(a_t^j = k)
\end{equation}
where $a_t^j$ denotes the selected server by device $j$. In addition to solving \eqref{eq.prob1} with stochastic or adversarial jammers, the benefit of cooperation among devices is also of interest.
Note that when the present formulation consider a setting without explicit constraints, it can be readily extended to incorporate long-term constraints; see e.g., \cite{li2018,chen2017tsp}.

\subsection{Reformulation as online linear programming}

Since the security risk $r_t^j (a_t^j)$ is revealed afterwards, intuitively, the device should select servers according to the historical performance of servers, while allowing for flexibility to choose other servers. In par with this guideline, suppose device $j$ is allowed to randomly select a server from a given distribution $a_t^j\sim \mathbf{p}_t^j\in\mathds{R}^K$. Problem \eqref{eq.prob1} can be then reformulated as optimization over $\{\mathbf{p}_t^j\}_{t=1}^T$, namely
\begin{equation}\label{eq.prob2}
	\min_{\{\mathbf{p}_t^j\in{\Delta({\cal K}_t^j)},\forall t, j\}}\,\sum_{t=1}^{T}\sum_{j=1}^{J} \big(\mathbf{p}_t^{j}\big)^\top \mathbf{r}_t^j
\end{equation}
where the ${\cal K}_t^j$-related ``probability simplex'' is defined as
\begin{equation}
\Delta({\cal K}_t^j):=\left\{\mathbf{p} \in\mathds{R}_+^K\Bigg|\sum_{k\in {\cal K}_t} p(k)=1;\, p(k)\!=\!0, k\notin {\cal K}_t^j\right\}
\end{equation}
with $p(k)$ denoting the $k$-th entry of $\mathbf{p}$. 
The logic behind the probability $\mathbf{p}_t^j$ in \eqref{eq.prob2} is that a randomized server selection scheme may have better performance in expectation than deterministic schemes in the worst case \cite[Theorem~1.1]{hazan2016}. For security considerations, a deterministic algorithm implicitly impairs security since adversaries can potentially infer the strategies in use given knowledge of the implemented algorithms. 
The problem \eqref{eq.prob2} is simply a linear program, and thus can be readily solved had we known the sequence $\{\mathbf{r}_t^j\}_{t=1}^T$ as well as the accessible server sets $\{{\cal K}_t^j\}_{t=1}^T$. 
However, the challenge arises due to the causal knowledge of the security risks $\{\mathbf{r}_t^j\}_{t=1}^T$, and the difficulty here also comes from the need of adaptively choosing the edge server according to a time-varying feasibility set $\mathbf{p}_t^j\in{\Delta({\cal K}_t^j)}$. 



\section{Edge Computing under Stochastic Jamming}\label{sec.sto}
This section deals with edge server selection problem for stochastic jammers, where ${\cal K}_t^j$ follows a random process with an unknown but fixed distribution.

\subsection{Cooperation via a graph-encoded feedback}

The information sharing among devices can be modeled through directed graphs  \cite{kocak2014,kocak2016,alon2017}. 
Consider a single IoT device $j$, which at the end of slot $t$ obtains the security risk of the selected edge server $a_t^j$ as well as other servers' security risk (a.k.a. side observations) shared by other devices.

\begin{figure}[t]
	\centering
	\includegraphics[height=0.25\textwidth]{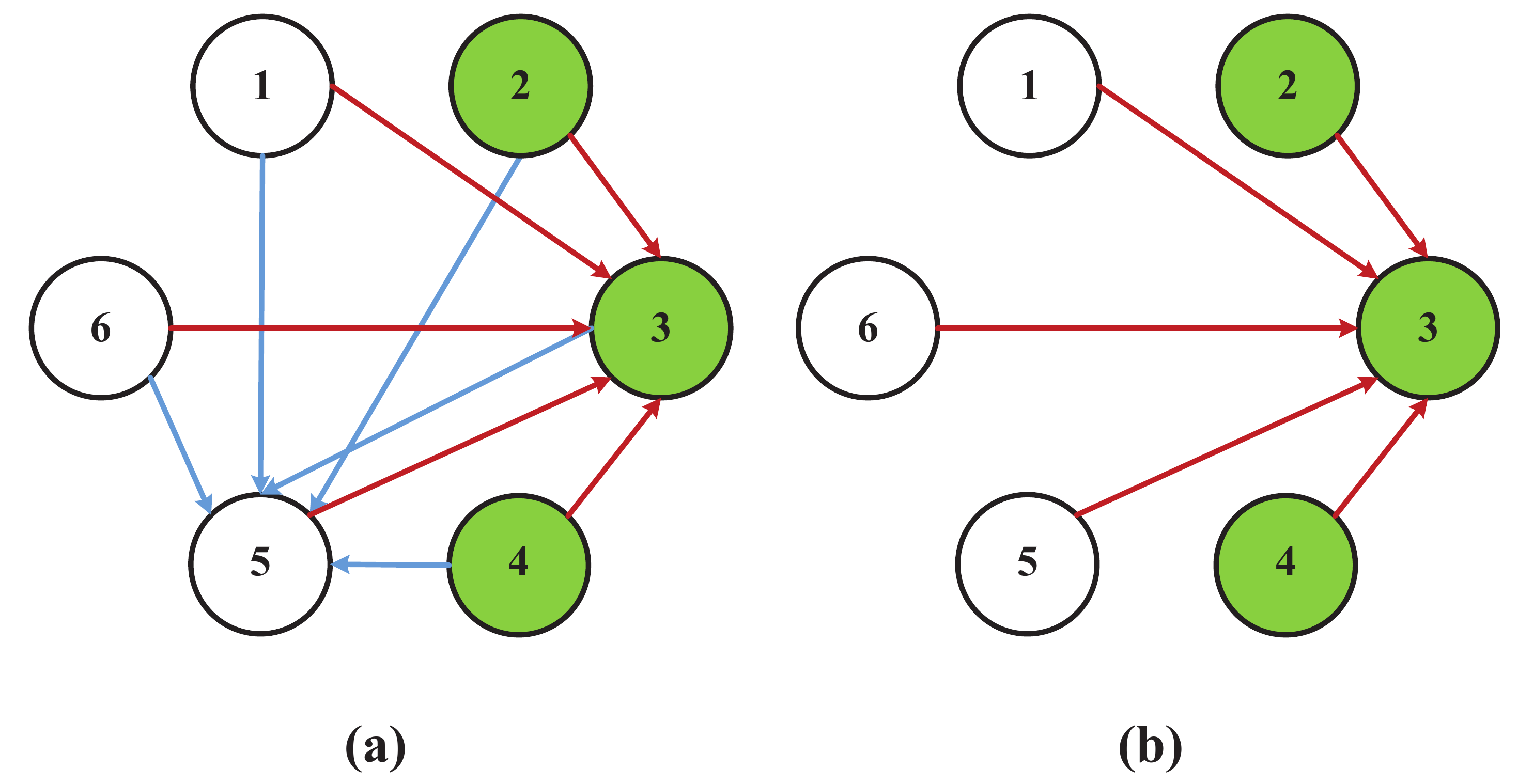}
	\vspace{-0.5cm}
	\caption{A side observation graph with node set ${\cal K} = \{1,\cdots, 6\}$.
		The available servers set for device $j$ is ${\cal K}_t^j = \{2,3,4\}$ with green nodes. The arrow pointed to each node itself is omitted for better visibility. (a) considers $a_t^j = 2$ and side observations are the security risks of server 3 and server 5. (b) considers the $a_t^j = 3$ and the only information shared is also about server 3.}
	\label{fig.soexample}
	\vspace{-0.2cm}
\end{figure}

To account for the mobility of IoT devices, and the asymmetry of communication links, we model the side observations for device $j$ via a time-varying directed graph ${\cal G}_t^j$. In this graph, the node sets are the server set $\cal K$; and the side observations form a subset of the node sets; thus, ${\cal S}_t^j \subseteq {\cal K}$. Note the ${\cal S}_t^j$ is not necessarily a subset of ${\cal K}_t^j$, meaning that it is possible for device $j$ to obtain the security risk of inaccessible edge servers via information sharing. The directed edge $(k_1,k_2)$ from node (server) $k_1$ to node (server) $k_2$ indicates that when device $j$ chooses server $k_1$, it can also observe (via information shared by other devices) the security risk associated with server $k_2$. For each node, there is an edge pointing to itself; see an example in Fig. \ref{fig.soexample}. The underlying graph associated with each device $j$ captures the observability relationship among servers. 

To facilitate the analysis and explicitly quantify the impact of information sharing, a few notations from graph theory are introduced \cite{alon2017}. An \textit{independent set} of an undirected graph is a set of vertices that are not connected by any edges; while the so-termed \textit{independence number} $\alpha_t^j$ is the cardinality of the maximum independent set. 
For the example in Fig. \ref{fig.soexample} (a), we have $\alpha_t^j = 4$ with the largest independent set being $\{1,2,4,6\}$; while in (b) $\alpha_t^j = 5$, with independent set $\{1,2,4,5,6\}$. When the device $j$ does not receive any side observation, we have $\alpha_t^j = K$, which is also the upper bound on $\alpha_t^j$. 
Intuitively, per device $j$ at slot $t$, $\alpha_t^j$ reflects its knowledge of the global information on security risks, with a smaller value $\alpha_t^j$ indicating high connectivity of the information graph (more side observations), thus implying more secure computing performance.

\subsection{SAVE-S for stochastic jamming}
Given the underlying side observation graphs, we are ready to develop the proposed algorithm that we call Security-Aware edge serVer sElection under stochastic jamming (SAVE-S). The SAVE-S algorithm leverages side information shared by other devices to assist the learning procedure, while also accounting for the heterogeneity of devices via adaptive stepsizes for improved performance.

Aiming to solve \eqref{eq.prob2} for each device, per slot $t$, SAVE-S first evaluates each server's historical performance via
\begin{equation}\label{eq.weight}
w_t^j(k) = \exp\Big(-\eta_t^j \hat{R}_{t-1}^j (k)\Big), ~ \forall k \in {\cal K}
\end{equation}
where $\eta_t^j$ plays the role of a stepsize, and $\hat{R}_{t-1}^j (k)$ is the accumulated security risk, given by
\begin{equation}
	\hat{R}_{t-1}^j (k) = \sum_{\tau =1}^{t-1} \hat{r}_{\tau}^j(k), ~\forall k \in {\cal K}
\end{equation}
where $\hat{r}_{t}^j (k)$ is the estimated security risk, which will be formally defined soon. Then after the server set ${\cal K}_t^j$ is revealed, device $j$ finds the server selection probability as
\begin{equation}\label{eq.prob}
p_t^j(k) = \frac{w_t^j(k) \mathds{1}\big( k \in {\cal K}_t \big) }{ \sum_{m \in {\cal K}_t^j } w_t^j(m)}.
\end{equation}
It is clear that for servers $k \notin {\cal K}_t^j$, we have $p_t^j(k) = 0$. With $\mathbf{p}_t^j$, a server $a_t^j$ is chosen, and the corresponding security risk is revealed after the edge computing task is finished. Information sharing among IoT devices is then effected, from which device $j$ is informed about the security risks of servers ${\cal S}_t^j \subseteq {\cal K}$. Note that it is possible to have $a_t^j \in {\cal S}_t^j$.
Usually in MAB algorithms, an unbiased estimator is usually adopted \cite{auer2002a,hazan2016,bubeck2012}. Differently, here we adopt a biased but reduced-variance estimator \cite{kocak2014}, given by
\begin{equation}\label{eq.est}
\hat{r}_t^j (k) =\left\lbrace \begin{array}{ll}
 	 \frac{r_t^j (k) \mathds{1}\big( k = a_t^j \big) }{\mu_t^j + p_t^j(k)}, ~&\forall k \notin {\cal S}_t^j\\
 	~~~~\frac{ r_t^j (k)}{\mu_t^j+1}, ~&\forall k \in {\cal S}_t^j
 \end{array}\!\right.
	\end{equation}	
which can be compactly written as
\begin{equation}\label{eq.est3}
\hat{r}_t^j (k) =\frac{ r_t^j (k) \mathds{1} \big( k \in \{ a_t^j \cup {\cal S}_t^j\} \big)}{\mu_t^j + \sum_{(m, k) \in {\cal G}_t^j} p_t^j(m)}, ~\forall k \in {\cal K} 
\end{equation}
where $(m, k) \in {\cal G}_t^j$ denotes all the edges pointing from node $m$ to node $k$ in the underlying graph ${\cal G}_t^j$. 
The rationale behind the biased estimators \eqref{eq.est} stems from well known bias-variance tradeoff encountered with MAB. It is clear from \eqref{eq.est} that since $\mu_t^j > 0$, the security risk is always underestimated
\begin{equation}
\mathbb{E}\big[ \hat{r}_{t}^{j}(k) \big] =  \frac{\bigg(\sum_{(m, k) \in {\cal G}_t^j} p_t^j(m)\bigg)r_t^j (k)}{ \mu_t^j+\sum_{(m, k) \in {\cal G}_t^j} p_t^j(m)} < r_t^j (k).
\end{equation}
For the squared mean, we have
\begin{align}
\!\!\mathbb{E}\Big[ \big( \hat{r}_{t}^{j} (k) \big)^2  \Big] &= \! \frac{ \big(r_t^j (k) \big)^2 \sum_{(m, k) \in {\cal G}_t^j} p_t^j(m) }{\Big( \mu_t^j\!+\!\sum_{(m, k) \in {\cal G}_t^j} p_t^j(m) \Big)^2}\nonumber\\
&<\! \frac{\big(r_t^j (k) \big)^2}{ \mu_t^j\!+\!\sum_{(m, k) \in {\cal G}_t^j} p_t^j(m)}.
\end{align}
Comparing with EXP3 in \cite{auer2002a,bubeck2012}, the mean-square of the SAVE-S estimator is smaller due to the presence of $\mu_t^j$. The SAVE-S algorithm is summarized in Alg. \ref{algo1}.

\begin{algorithm}[t]
	\caption{SAVE-S for IoT device $j$}\label{algo1}
	\begin{algorithmic}[1]
		\State \textbf{Initialize:}
		weight $\mathbf{w}_1^j = \mathbf{1}/K$, implicit explore factor $\mu_t^j$, learning rate $\eta_t^j$.
		\For {$t=1,2,\dots, T$}
		\State Compute $\mathbf{w}_t^j$ via \eqref{eq.weight}.
		\State Available server set ${\cal K}_t^j$ is revealed.
		\State Compute $\mathbf{p}_t^j$ via \eqref{eq.prob} and choose server $a_t^j \sim \mathbf{p}_t^j$.
		\State Receive $\gamma_{1,t}(a_t^j)$ and $\gamma_{2,t}(a_t^j)$. 
		\State Broadcast $\!\gamma_{1,t}(a_t^j)\!$ and $\!\gamma_{2,t} (a_t^j)\!$ to devices in $\{i\,|\, j \!\in\!{\cal S}_t^i\}$.
		\State Compute security risk for $\{a_t^j\} \cup {\cal S}_t^j$ via \eqref{eq.risk}.
		\State Estimate the security risk via \eqref{eq.est}.
		\EndFor
	\end{algorithmic}
\end{algorithm}

\subsection{Regret analysis}
To evaluate the performance of SAVE-S, the first step is to seek a suitable benchmark for this setting. Unlike the best fixed server or server distribution in a standard MAB setup \cite{hazan2016,bubeck2012,auer2002a}, here the benchmark accounting for jammed servers is selected to be the best fixed server list \cite{kanade2009,kleinberg2010}.
To concretely define this benchmark, we first introduce the notion of a server list $\phi^j\in\mathds{N}^K$ for device $j$ that is a permutation of server indices. Then let $\Phi^j({\cal K}_t^j)$ represent the highest ranked server in list $\phi^j$ that is also in ${\cal K}_t^j$; e.g., for $K=3$ servers, if $\phi^j:=\{2,3,1\}$, and ${\cal K}_t^j=\{1,3\}$, then we have $\Phi^j({\cal K}_t^j)=3$. Furthermore, let $\phi^{j*}$ denote the best server list satisfying
\begin{equation}\label{eq.permutaion}
\sum_{t=1}^{T}r_t^j\big(\phi^{j*}(1)\big)\leq \cdots \leq \sum_{t=1}^{T}r_t^j\big(\phi^{j*}(K)\big)
\end{equation}
where $\phi^{j*}(i)$ is the $i$-th ranked server in $\phi^{j*}$. Correspondingly, define $\Phi^{j*}\big({\cal K}_t^j\big)$, which maps ${\cal K}_t^j$ to the highest ranked index in server list  $\phi^{j*}$. Then device $j$ incurs regret
\begin{equation}\label{eq.sreg}
	{\rm Reg}_T^j := \sum_{t=1}^{T} \mathbb{E}\big[r_t^j(a_t^j)\big] - \sum_{t=1}^{T} r_t^j \big(\Phi^{j*}({\cal K}_t^j)\big).
\end{equation} 
The rationale behind this regret definition is that instead of sticking to the best server in hindsight which might be unaccessible, we resort to the best available server in $\phi^{j*}$. The following assumption is used in the subsequent analysis. 

\vspace{0.2cm}
\noindent\textbf{(as1)} \textit{The security risk satisfies $\max_{t,j,k} r_t^j (k) \leq 1$.}
\vspace{0.2cm}

Basically, (as1) requires the security risk to be bounded, which is typical for online learning \cite{chen2017tsp,bubeck2012,hazan2016,auer2002a}. Building on (as1), the following theorem establishes the regret bound.



\begin{theorem}\label{theo.1}
	For stochastically chosen ${\cal K}_t^j$, the regret for device $j$ can be bounded by
	\begin{equation}\label{eq.theo1}
	\mathbb{E} \big[{\rm Reg}_T^j \big] \leq \mathbb{E} \bigg[\sum_{t=1}^{T} \bigg( \mu_t^j+ \frac{\eta_t^j}{2} \bigg) Q_t^j + \frac{\ln K}{\eta_{T+1}} \bigg]
	\end{equation}
	where the expectation is over the randomness of ${\cal K}_t^j$, and the auxiliary variable $Q_t^j$ is
	\begin{equation}\label{eq.def-q}
		Q_t^j:= \sum_{k=1}^{K}\frac{p_t^j(k)}{\mu_t^j+\sum_{(m, k) \in {\cal G}_t^j} p_t^j(m)}.
	\end{equation}
\end{theorem}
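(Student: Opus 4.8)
The plan is to cast SAVE-S as exponential weights (equivalently, follow-the-regularized-leader with a negative-entropy regularizer) restricted to the available set $\mathcal{K}_t^j$, and to run the classical log-partition potential argument, adapted to the sleeping arms, the time-varying stepsizes, and the graph-based biased estimator. Because the right-hand side of \eqref{eq.theo1} itself retains the expectation over $\{\mathcal{K}_t^j\}$, I would first derive a pathwise inequality conditioned on the realized availability sequence and on the algorithm's internal randomization, and only take total expectation at the very end. The first reduction is to pass from the true-loss regret \eqref{eq.sreg} to an estimated-loss regret: comparing the algorithm against the best list $\phi^{j*}$ through its per-round representative $\Phi^{j*}(\mathcal{K}_t^j)$, and replacing $r_t^j$ by $\hat{r}_t^j$. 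The gap incurred in this replacement is precisely the bias of the estimator, and since $\mathbb{E}[\hat{r}_t^j(k)] < r_t^j(k)$ by the underestimation recorded before the theorem, this gap contributes the $\sum_t \mu_t^j Q_t^j$ term.

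For the core argument I would introduce the normalization $W_t^j = \sum_{k\in\mathcal{K}_t^j} w_t^j(k)$ and study the one-step evolution of $\ln W_t^j$, using the elementary inequality $e^{-x}\le 1-x+\tfrac{x^2}{2}$, valid for $x\ge 0$ (here $x=\eta_t^j \hat{r}_t^j(k)\ge 0$ since risks and weights are nonnegative). Telescoping the resulting one-step bounds over $t=1,\dots,T$ produces the regularization term $\frac{\ln K}{\eta_{T+1}}$ (from the entropy of the initial uniform weights, bounded by $\ln K$) together with a cumulative second-order term of the form $\frac{\eta_t^j}{2}\sum_k p_t^j(k)\,(\hat{r}_t^j(k))^2$. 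Taking the conditional expectation of this second-order term over $a_t^j\sim\mathbf{p}_t^j$ and over the side observations, and invoking the mean-square identity for the estimator \eqref{eq.est3} already derived in the excerpt, each summand collapses to $\frac{\eta_t^j}{2}Q_t^j$ with $Q_t^j$ as in \eqref{eq.def-q}: the factor $1/(\mu_t^j+\sum_{(m,k)\in\mathcal{G}_t^j}p_t^j(m))$ carried by $\mathbb{E}[(\hat{r}_t^j(k))^2]$ is exactly what turns $\sum_k p_t^j(k)(\cdots)$ into $Q_t^j$.

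The main obstacle, and the step requiring the most care, is the sleeping-arm benchmark: the comparator $\Phi^{j*}(\mathcal{K}_t^j)$ is not a fixed arm but changes across rounds with the realized availability. I would handle this by working conditionally on the availability realization and exploiting that the weights are renormalized over $\mathcal{K}_t^j$ alone, so that the comparator at each round is always an available arm whose weight appears in $W_t^j$; lower-bounding $W_{T+1}^j$ by the comparator's contribution then yields a regret against the best available server in $\phi^{j*}$ rather than against a single globally best arm. A second technical point is the time-varying stepsize $\eta_t^j$: I would use the standard decreasing-stepsize FTRL bookkeeping (monotone $\eta_{t+1}^j\le\eta_t^j$), so that the regularization term is evaluated at the final $\eta_{T+1}$, matching \eqref{eq.theo1}.

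Collecting the three contributions --- the bias term $\sum_t \mu_t^j Q_t^j$, the stability term $\sum_t \frac{\eta_t^j}{2}Q_t^j$, and the regularization term $\frac{\ln K}{\eta_{T+1}}$ --- and finally taking the total expectation over $\{\mathcal{K}_t^j\}$ delivers \eqref{eq.theo1}. I expect the sleeping-arm comparator bookkeeping and the adaptive-stepsize telescoping to be the delicate parts, whereas the side-observation variance computation is essentially mechanical given the first- and second-moment formulas already established before the theorem statement.
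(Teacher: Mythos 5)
Your potential-function machinery coincides with the paper's: the $e^{-x}\le 1-x+\tfrac{x^2}{2}$ expansion of the log-partition function, the first-moment bias of the implicit-exploration estimator producing $\sum_t \mu_t^j Q_t^j$, the second-moment bound producing $\sum_t \tfrac{\eta_t^j}{2}Q_t^j$, and the decreasing-stepsize bookkeeping yielding $\ln K/\eta_{T+1}$. That part of your sketch matches the paper's Lemma \ref{lemma.2} (the fixed-availability case) essentially step for step.

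The gap is in how you handle the sleeping-arm comparator, which you flag as the delicate step but then dispose of incorrectly. You propose a pathwise inequality conditioned on the realized availability sequence, with $W_{T+1}^j$ lower-bounded ``by the comparator's contribution.'' But $W_{T+1}^j=\sum_k \exp\big(-\eta_{T+1}^j\hat R_T^j(k)\big)$ can only be lower-bounded by the weight of a single fixed arm (or a fixed mixture), which tracks that arm's cumulative estimated loss over all $T$ rounds; it cannot track the loss of the round-dependent comparator $\Phi^{j*}({\cal K}_t^j)$, which names a different server on different rounds. Pathwise, exponential weights over the $K$ servers therefore only yields regret against a fixed arm, not against the best ordering evaluated on the realized availability sets --- indeed, if such a pathwise bound held it would equally cover adversarially chosen ${\cal K}_t^j$, contradicting the paper's need for the list-space expansion in SAVE-A. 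The missing ingredient is the decoupling argument of Kanade et al.\ that the paper uses in \eqref{eq.corollary1-1}: because ${\cal K}_t^j$ is i.i.d.\ with a fixed distribution, one writes
\begin{equation*}
\sum_{t=1}^{T}\mathbb{E}\big[V_t^j\big]=\sum_{\tilde{\cal K}\subseteq{\cal K}}\mathbb{P}\big({\cal K}_t^j=\tilde{\cal K}\big)\sum_{t=1}^{T}\mathbb{E}\big[V_t^j\,\big|\,{\cal K}_t^j=\tilde{\cal K}\big],
\end{equation*}
and conditioned on the event ${\cal K}_t^j=\tilde{\cal K}$ the comparator collapses to the single fixed server $\Phi^{j*}(\tilde{\cal K})$, so the fixed-comparator lemma applies term by term. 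This is precisely where the hypothesis that ${\cal K}_t^j$ is stochastically chosen enters, and without this step your argument does not establish \eqref{eq.theo1}.
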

\begin{proof}
	See Appendix \ref{appendix.stheo}.
\end{proof}

 The bound on $\mathbb{E} \big[{\rm Reg}_T^j \big]$ for SAVE-S depends on an auxiliary variable $Q_t^j$ capturing the influence of side observations as well as the biased $\hat{r}_t^j (k)$. 
 For example, in EXP3 (with $\mu_t^j = 0$ and no side information), we have $Q_t^j = \sum_{k=1}^K p_t^j(k)/p_t^j(k) = K$. 
For SAVE-S, it is shown in Lemma \ref{lemma.1} of Appendix \ref{appendix.sQ} that $Q_t^j$ is bounded by $\min \{ K, \alpha_t^j+1\}$. Intuitively, a small $\alpha_t^j$ associated with a closely connected graph ${\cal G}_t^j$ will lead to lower regret.

To evaluate the performance gain of cooperation, we revisit the performance without information sharing among devices.
\begin{corollary}\label{col.fixstepsize}
	Consider the case without device cooperation, e.g., skip line 7 in Alg. \ref{algo1}. If the stepsizes are chosen as $\eta_t^j =\sqrt{{\ln K}/(KT)}$ and $\mu_t^j = \frac{\eta_t^j}{2}, \forall t$, the regret is bounded by
	\begin{equation}\label{eq.fixreg1}
		\mathbb{E}\big[{\rm Reg}_T^j\big] \leq 2\sqrt{TK\ln K}
	\end{equation}
	 where the expectation is w.r.t. the randomness of the servers' availability. If $\eta_t^j =\sqrt{\frac{\ln K}{2Kt}}, ~\forall t$ and $\mu_t^j = \frac{\eta_t^j}{2}, \forall t$, we have 
	 \begin{equation}\label{eq.fixreg2}
	 \mathbb{E}\big[{\rm Reg}_T^j\big] \leq 2\sqrt{2TK\ln K}.
	 \end{equation}
\end{corollary}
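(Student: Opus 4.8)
The plan is to specialize the general bound of Theorem~\ref{theo.1} to the no-cooperation regime and then substitute the two prescribed stepsize schedules. First I would observe that skipping line~7 in Alg.~\ref{algo1} removes all side observations, so the graph ${\cal G}_t^j$ retains only its self-loops; the only edge pointing to node $k$ then originates at $k$ itself, giving $\sum_{(m,k)\in{\cal G}_t^j} p_t^j(m) = p_t^j(k)$. Plugging this into the definition \eqref{eq.def-q} yields
\begin{equation}
Q_t^j = \sum_{k=1}^{K}\frac{p_t^j(k)}{\mu_t^j + p_t^j(k)} \le \sum_{k\in{\cal K}_t^j} 1 \le K,
\end{equation}
which is exactly the $\alpha_t^j = K$ instance of the bound $Q_t^j \le \min\{K,\alpha_t^j+1\}$ mentioned after Theorem~\ref{theo.1}. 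This is the only structural fact required; everything else is arithmetic.

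Next I would exploit the coupling $\mu_t^j = \eta_t^j/2$, which collapses the prefactor to $\mu_t^j + \eta_t^j/2 = \eta_t^j$, so that Theorem~\ref{theo.1} reduces to
\begin{equation}
\mathbb{E}\big[{\rm Reg}_T^j\big] \le K\sum_{t=1}^{T}\eta_t^j + \frac{\ln K}{\eta_{T+1}}.
\end{equation}
The task then becomes estimating $\sum_{t=1}^T \eta_t^j$ and $\ln K/\eta_{T+1}$ for each schedule. For the constant choice $\eta_t^j=\sqrt{\ln K/(KT)}$, the first term equals $KT\sqrt{\ln K/(KT)}=\sqrt{TK\ln K}$, while $\ln K/\eta_{T+1}=\ln K\,\sqrt{KT/\ln K}=\sqrt{TK\ln K}$; adding the two gives \eqref{eq.fixreg1}.

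For the decreasing schedule $\eta_t^j=\sqrt{\ln K/(2Kt)}$, I would factor out the constants and invoke the standard estimate $\sum_{t=1}^{T} t^{-1/2}\le 2\sqrt{T}$, which yields $K\sum_{t=1}^T\eta_t^j \le \sqrt{K\ln K/2}\cdot 2\sqrt{T}=\sqrt{2TK\ln K}$. The term $\ln K/\eta_{T+1}$ contributes the matching $\sqrt{2TK\ln K}$ (treating $T+1$ as $T$ in the evaluation of the final stepsize, equivalently taking $\eta_{T+1}=\eta_T$), and summing gives \eqref{eq.fixreg2}.

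Since the corollary is a direct consequence of Theorem~\ref{theo.1}, there is no genuine difficulty; the only points requiring care are the harmonic-type bound $\sum_{t=1}^{T} t^{-1/2}\le 2\sqrt{T}$ and the convention used to evaluate $\eta_{T+1}$ in the time-varying case. Both must be handled consistently so that the constants land exactly on $2\sqrt{TK\ln K}$ and $2\sqrt{2TK\ln K}$ rather than merely on a generic ${\cal O}\big(\sqrt{TK\ln K}\big)$ order.
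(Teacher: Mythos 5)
Your proposal is correct and follows essentially the same route as the paper: bound $Q_t^j \le K$ in the absence of side observations, use $\mu_t^j+\eta_t^j/2=\eta_t^j$ to reduce Theorem~\ref{theo.1} to $K\sum_{t=1}^T\eta_t^j+\ln K/\eta_{T+1}^j$, and evaluate the two schedules with $\sum_{t=1}^T t^{-1/2}\le 2\sqrt{T}$. The $\eta_{T+1}$-versus-$\eta_T$ convention you flag in the diminishing-stepsize case is glossed over in the paper as well, so there is no substantive difference.
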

\begin{proof}
	See Appendix \ref{appendix.sfdstep}.
\end{proof}

Corollary \ref{col.fixstepsize} asserts that if we choose fixed $\eta_t^j$ and $\mu_t^j$, an ${\cal O}(\sqrt{TK \ln K})$ regret is guaranteed even without cooperation, which matches that of \cite{kanade2009,kleinberg2010}. Instead, if we choose diminishing $\eta_t^j$ and $\mu_t^j$, a slightly worse bound can be achieved. Then in the following corollary, we present a tighter bound by adopting the underlying graph introduced earlier.  
\begin{corollary}\label{col.adastepsize}
	If the devices cooperate and we choose stepsizes adaptively, namely, $\eta_t^j = \sqrt{ (\ln K) / \big( K+ \sum_{\tau = 1}^{t-1}Q_\tau^j\big)}$, and $\mu_t^j =\eta_t^j/2, \forall t$, the regret can be bounded as
	\begin{equation}\label{eq.adareg}
	\mathbb{E}\big[{\rm Reg}_T^j\big]\!\leq 2\mathbb{E}\Bigg[\sqrt{ \Big(\delta + \sum_{t=1}^{T} Q_t^j \Big) \ln K}\Bigg]\!\!\approx 2\mathbb{E}\Bigg[\sqrt{ \sum_{t=1}^{T} Q_t^j \ln K} \Bigg]
	\end{equation}
	where $\delta := \min_t \{ K- Q_t^j \}$, and the expectation is w.r.t. the randomness of the servers' availability.
\end{corollary}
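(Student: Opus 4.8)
The plan is to start from the per-device bound of Theorem~\ref{theo.1} and reduce it to a single sum that can be controlled by an AdaGrad-style telescoping argument. First I would set $\mu_t^j = \eta_t^j/2$, which collapses the coefficient $\mu_t^j + \eta_t^j/2$ in \eqref{eq.theo1} to $\eta_t^j$, giving
\begin{equation}
\mathbb{E}\big[{\rm Reg}_T^j\big] \le \mathbb{E}\bigg[\sum_{t=1}^T \eta_t^j Q_t^j + \frac{\ln K}{\eta_{T+1}^j}\bigg].
\end{equation}
Abbreviating $D_t^j := K + \sum_{\tau=1}^{t-1} Q_\tau^j$, the prescribed stepsize is $\eta_t^j = \sqrt{(\ln K)/D_t^j}$, so the trailing term is exactly $\ln K/\eta_{T+1}^j = \sqrt{(\ln K)\,D_{T+1}^j}$ while $\sum_t \eta_t^j Q_t^j = \sqrt{\ln K}\sum_t Q_t^j/\sqrt{D_t^j}$. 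Everything then hinges on bounding $\sum_{t=1}^T Q_t^j/\sqrt{D_t^j}$.

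The key step, and the place where $\delta$ enters, is to replace the ``leading'' normalization $D_t^j$ (which omits the current $Q_t^j$) by a cumulative quantity that includes it. Here Lemma~\ref{lemma.1} is essential: because $Q_t^j \le \min\{K,\alpha_t^j+1\}\le K$, the gap $K - Q_t^j$ is nonnegative and is at least $\delta = \min_s\{K-Q_s^j\}$ for every $t$. Consequently
\begin{equation}
D_t^j = \sum_{\tau=1}^t Q_\tau^j + \big(K - Q_t^j\big) \ge \delta + \sum_{\tau=1}^t Q_\tau^j =: \tilde D_t^j ,
\end{equation}
so that $Q_t^j/\sqrt{D_t^j}\le Q_t^j/\sqrt{\tilde D_t^j}$ with $Q_t^j = \tilde D_t^j - \tilde D_{t-1}^j$ and $\tilde D_0^j=\delta$. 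Now the current increment sits under its own cumulative sum, and the elementary inequality $x/\sqrt{S}\le 2\big(\sqrt{S}-\sqrt{S-x}\big)$ for $0\le x\le S$ telescopes to
\begin{equation}
\sum_{t=1}^T \frac{Q_t^j}{\sqrt{D_t^j}} \le 2\Big(\sqrt{\tilde D_T^j} - \sqrt{\delta}\,\Big) \le 2\sqrt{\delta + \sum_{t=1}^T Q_t^j} ,
\end{equation}
which, multiplied by $\sqrt{\ln K}$, is precisely the claimed $2\sqrt{(\delta+\sum_t Q_t^j)\ln K}$. Taking the single expectation over the availability randomness then passes through since the estimate holds pathwise, and $\delta\ll\sum_t Q_t^j$ yields the final approximation.

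I expect the delicate point to be the accounting of the two pieces of \eqref{eq.theo1} rather than any individual inequality. The telescoping above controls $\sum_t\eta_t^jQ_t^j$, but the residual $\ln K/\eta_{T+1}^j=\sqrt{(\ln K)\,D_{T+1}^j}$ is of the same order; folding it into the same radical without inflating the leading constant requires either charging it against the slack $2(\sqrt{\tilde D_T^j}-\sqrt\delta)$ or re-deriving the telescoping so the boundary term is absorbed. Keeping the constant at exactly $2$ (rather than a larger value) is therefore the bookkeeping I would scrutinize most carefully, and it is the only genuine obstacle — the $\delta$-domination and the telescoping lemma are both routine once $Q_t^j\le K$ from Lemma~\ref{lemma.1} is in hand.
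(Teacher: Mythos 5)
Your route is the same as the paper's: substitute $\mu_t^j=\eta_t^j/2$ into Theorem~\ref{theo.1}, lower-bound the normalizer via $K+\sum_{\tau=1}^{t-1}Q_\tau^j=(K-Q_t^j)+\sum_{\tau=1}^{t}Q_\tau^j\ge \delta+\sum_{\tau=1}^{t}Q_\tau^j$, and telescope; your elementary inequality $x/\sqrt{S}\le 2\big(\sqrt{S}-\sqrt{S-x}\big)$ is exactly the paper's Lemma~\ref{lemma.3} (derived there from $x/2\le 1-\sqrt{1-x}$), and deferring the expectation to the end matches the paper as well.

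The point you flag as unresolved is a genuine soft spot, and the paper does not actually resolve it either. The telescoping alone gives $\sum_t \eta_t^j Q_t^j \le 2\sqrt{(\delta+\sum_t Q_t^j)\ln K}-2\sqrt{\delta\ln K}$, and the boundary term $\ln K/\eta_{T+1}^j=\sqrt{(K+\sum_{\tau=1}^{T}Q_\tau^j)\ln K}$ is of the same order, so a direct combination yields a leading constant of roughly $3$, not $2$. The paper arrives at $2$ by (i) applying Lemma~\ref{lemma.3} in step (b) of \eqref{eq.kocak-fault} with the factor of $2$ silently dropped, and (ii) implicitly bounding $\ln K/\eta_{T+1}^j$ by $\sqrt{(\delta+\sum_t Q_t^j)\ln K}$, which would require $K\le\delta$ --- false, since $\delta=\min_t\{K-Q_t^j\}<K$ whenever $Q_t^j>0$, which Lemma~\ref{lemma.1} guarantees. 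So your bookkeeping concern is correct: the order $O\big(\sqrt{\sum_t Q_t^j\ln K}\big)$ and all downstream conclusions about the value of cooperation survive with the constant inflated to $3$, but neither your sketch nor the paper's own argument legitimately delivers the constant $2$ as stated.
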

\begin{proof}
	See Appendix \ref{appendix.sadastep}.
\end{proof}

The stepsizes $\eta_t^j$ and $\mu_t^j$ depend only on the history, regardless of the current $Q_t^j$. At first glance, it is clear that the bound in \eqref{eq.adareg} is better than \eqref{eq.fixreg1} and \eqref{eq.fixreg2} since $\sum_{t=1}^{T} Q_t^j \leq KT$. 
To characterize the value of cooperation, define the cooperation value $\lambda^j$ for device $j$ as the regret bound in \eqref{eq.adareg} divided by that in \eqref{eq.fixreg1}, which quantifies the improvement of leveraging cooperation among devices. Ideally, $\lambda^j \leq 1$ suggests that the cooperation reduces the security risk in the worst case. 
In addition, the cooperation value for this IoT network is the average of $\lambda^j$, namely
\begin{equation}
	\lambda = \frac{1}{J}\sum_{j=1}^J\lambda^j := \frac{1}{J}\sum_{j=1}^J\frac{\mathbb{E}\Bigg[\sqrt{ \Big(\delta + \sum_{t=1}^{T} Q_t^j \Big) \ln K}\Bigg]}{\sqrt{TK\ln K} }.
\end{equation}

The upper bound of the cooperation value $\lambda$ is given in the following corollary.
\begin{corollary}
	The cooperation value of SAVE-S satisfies
	\begin{equation}\label{eq.scoop}
		\lambda \leq \frac{1}{J}\sum_{j=1}^J \sqrt{\frac{1}{T}+ \frac{1}{KT}\sum_{t=1}^T \min \left\{K, K+1 - |{\cal S}_t^j| \right\}}.
	\end{equation}
\end{corollary}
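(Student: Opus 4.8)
The plan is to derive \eqref{eq.scoop} directly from the definition of $\lambda^j$ stated just before the corollary, so that no fresh regret analysis is required; all the work is in simplifying and upper bounding the ratio. Starting from
\begin{equation*}
\lambda^j = \frac{\mathbb{E}\big[\sqrt{(\delta + \sum_{t=1}^T Q_t^j)\ln K}\,\big]}{\sqrt{TK\ln K}},
\end{equation*}
the first step is to cancel the common $\sqrt{\ln K}$, which leaves $\lambda^j = \mathbb{E}\big[\sqrt{\delta + \sum_t Q_t^j}\,\big]/\sqrt{TK}$. The next move is to push the expectation inside the root: since $\sqrt{\cdot}$ is concave, Jensen's inequality gives $\mathbb{E}[\sqrt{X}] \le \sqrt{\mathbb{E}[X]}$, hence $\lambda^j \le \sqrt{(\mathbb{E}[\delta] + \sum_t \mathbb{E}[Q_t^j])/(TK)}$, which already exhibits the additive structure of the target bound.

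Two elementary bounds then close the gap. For the constant term I would use $\delta = \min_t\{K - Q_t^j\} \le K$, valid because $Q_t^j \ge 0$; this converts $\mathbb{E}[\delta]/(TK)$ into the leading $1/T$. For the sum I would invoke Lemma~\ref{lemma.1}, namely $Q_t^j \le \min\{K, \alpha_t^j + 1\}$, and then replace the independence number by the side-observation count through $\alpha_t^j \le K - |{\cal S}_t^j|$, so that $\min\{K, \alpha_t^j + 1\} \le \min\{K, K+1-|{\cal S}_t^j|\}$. Summing over $t$, dividing by $KT$, gathering the two pieces back under one square root, and averaging over $j$ then reproduces \eqref{eq.scoop} exactly.

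The hard part will be the graph-theoretic inequality $\alpha_t^j \le K - |{\cal S}_t^j|$ that bridges Lemma~\ref{lemma.1} and the observation count in \eqref{eq.scoop}. The argument must exploit that the $|{\cal S}_t^j|$ side-observed servers are out-neighbors of the played arm $a_t^j$ in the graph ${\cal G}_t^j$, so they cannot all be placed into a maximum independent set alongside $a_t^j$; the configurations in Fig.~\ref{fig.soexample}, where $\alpha_t^j + |{\cal S}_t^j| = K$ holds with equality, are the tight guiding cases. A further subtlety is that $|{\cal S}_t^j|$ is random through the realized $a_t^j$ and ${\cal K}_t^j$, so the per-slot bound on $Q_t^j$ should be kept under the expectation introduced by Jensen rather than forced to hold deterministically, and I would carry that expectation through the final averaging step.
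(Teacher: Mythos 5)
Your proposal is correct and follows essentially the same route as the paper: cancel the common $\sqrt{\ln K}$, bound $\delta\le K$, invoke Lemma~\ref{lemma.1} to get $Q_t^j\le\min\{K,\alpha_t^j+1\}$, identify $\alpha_t^j$ with $K-|{\cal S}_t^j|$, and assemble (the paper bounds the random quantity pathwise inside the monotone square root rather than via Jensen, but that is immaterial). The only caveat is in your sketch of the graph step: the reason $\alpha_t^j\le K-|{\cal S}_t^j|$ holds is not that the side-observed servers are out-neighbors of the played arm $a_t^j$ (that alone would still leave ${\cal K}\setminus\{a_t^j\}$ independent), but that they are observed regardless of which server is chosen, so every node of ${\cal G}_t^j$ has an edge into each node of ${\cal S}_t^j$, making each such node adjacent to all others and hence excluded from any non-singleton independent set --- a point the paper itself glosses over by simply asserting $\alpha_t^j=K-|{\cal S}_t^j|$ for the considered graph.
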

\begin{proof}
	See Appendix \ref{appendix.sQ}.
\end{proof}

The bound in \eqref{eq.scoop} asserts that more side observations reduce the regret. Specifically, suppose $K/2$ servers' information can be obtained by device $j$ via information sharing. In this case, we have $|{\cal S}_t^j|= \frac{K}{2}$, which leads to $\lambda^j \leq \sqrt{\frac{1}{T}+ \frac{1}{2}+\frac{1}{K}}$.

The multiple choices of stepsizes are also tailored for the heterogeneity of IoT devices. For those isolated devices, fixed $\mu_t^j$ and $\eta_t^j$ could be adopted; and for those devices with sufficient side observations, adaptive stepsizes are recommended for improved performance.

\section{Edge Computing under Adversarial Jamming}
The stochastic jamming considered so far is relatively simple to deal with since the accessible servers are invariant in expectation, even if their realization is time-varying. In this section, we introduce schemes for adversarial jammers.

\subsection{From server to server list selection}
With insights gained from \cite{kanade2009} and \cite{kleinberg2010}, the high-level idea is to deal with the available server set in the presence of adversaries, by expanding the search space from the best single server to the best server ordering (a.k.a. server list). 
Recall that a server list $\phi\in\mathds{N}^K$ is a permutation of server indices.
Let set $\check{\cal K}$ collect all the permutations of ${\cal K}$ with cardinality $\check{K}:=|\check{\cal K}| = K!$. 
With $\phi_t^j$ denoting the server list selected by IoT device $j$ at slot $t$ and the current available set ${\cal K}_t^j$, $\Phi_t^j({\cal K}_t^j)$ represents the highest ranked server in the list $\phi_t^j$ that is also in ${\cal K}_t^j$, see Fig. \ref{fig.listsoeg} (a) for example. Given the available server set ${\cal K}_t^j$ for device $j$, the mapping $\Phi_t^j(\cdot)$ plays the role of a policy at time $t$, which outputs the arm $a_t^j$.  
Therefore, if we use a $\check{K}\times K$ matrix $\bm{\Gamma}({\cal K}_t^j)$ to represent all these mappings at time $t$, the $(i, k)$th entry of $\bm{\Gamma}({\cal K}_t^j)$ can be written as 
\begin{equation}\label{eq.GammaDef}
	\left[\bm{\Gamma}({\cal K}_t^j)\right]_{i,k}=\mathds{1}(\Phi_i^j({\cal K}_t^j)=k).
\end{equation}
Accordingly, for adversarial jammers, \eqref{eq.prob2} can be rewritten as the following problem over distributions $\{\mathbf{q}_t^j\}$, namely
\begin{align}\label{eq.prob3}
	\min_{\{\mathbf{q}_t^j\in{\Delta^{\check{K}}},\forall j\}}\,\sum_{j=1}^J\sum_{t=1}^{T} \big(\mathbf{q}_t^j\big)^\top \check{\mathbf{r}}_t^j
\end{align}
where $\check{\mathbf{r}}_t^j$ is a $\check{K}$-dimensional vector, $\check{\mathbf{r}}_t^j = \bm{\Gamma}({\cal K}_t^j) \mathbf{r}_t^j$; and the $\check{\cal K}$-dimensional probability simplex is defined as
\begin{equation}
	\Delta^{\check{K}}:=\left\{\mathbf{q}\in\mathds{R}_+^{\check{K}}\Bigg|\sum_{k\in \check{\cal K}} q(k)=1 \right\}.	
\end{equation}
Instead of finding a $K$-dimensional vector $\mathbf{p}^j$ to weigh all the servers, our solution here is to search for a $\check{K}$-dimensional vector $\mathbf{q}^j$ that weighs all the server lists in $\check{\cal K}$. The following lemma establishes that \eqref{eq.prob2} is equivalent to \eqref{eq.prob3}. 
\begin{lemma}\label{lemma.reformulation}
	For each $\mathbf{p}_t^j$, there exists at least one $\mathbf{q}_t^j \in \Delta^{\check{K}}$, such that $\big(\mathbf{q}_t^j \big)^\top \check{\mathbf{r}}_t^j = \big(\mathbf{p}_t^j\big)^\top \mathbf{r}_t^j$.
\end{lemma}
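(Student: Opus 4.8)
The plan is to linearize everything through the defining relation $\check{\mathbf{r}}_t^j = \bm{\Gamma}({\cal K}_t^j)\mathbf{r}_t^j$ and thereby reduce the claim to a surjectivity statement. First I would note that for any $\mathbf{q}_t^j \in \Delta^{\check{K}}$,
\begin{equation}
(\mathbf{q}_t^j)^\top \check{\mathbf{r}}_t^j = (\mathbf{q}_t^j)^\top \bm{\Gamma}({\cal K}_t^j)\mathbf{r}_t^j = \big(\bm{\Gamma}({\cal K}_t^j)^\top \mathbf{q}_t^j\big)^\top \mathbf{r}_t^j ,
\end{equation}
so the asserted identity $(\mathbf{q}_t^j)^\top \check{\mathbf{r}}_t^j = (\mathbf{p}_t^j)^\top \mathbf{r}_t^j$ holds the instant I exhibit a $\mathbf{q}_t^j \in \Delta^{\check{K}}$ whose induced vector $\bm{\Gamma}({\cal K}_t^j)^\top \mathbf{q}_t^j$ equals $\mathbf{p}_t^j$. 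Hence it suffices to prove that the linear map $\mathbf{q} \mapsto \bm{\Gamma}({\cal K}_t^j)^\top \mathbf{q}$ sends $\Delta^{\check{K}}$ onto $\Delta({\cal K}_t^j)$.

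Next I would identify the image of $\Delta^{\check{K}}$ under this map. Since a linear map carries a simplex to the convex hull of the images of its vertices, and the $i$-th vertex $\mathbf{e}_i$ maps to $\bm{\Gamma}({\cal K}_t^j)^\top \mathbf{e}_i$, i.e. the $i$-th row of $\bm{\Gamma}({\cal K}_t^j)$, definition \eqref{eq.GammaDef} shows this row is the canonical basis vector $\mathbf{e}_{\Phi_i^j({\cal K}_t^j)}\in\mathds{R}^K$. Because $\Phi_i^j({\cal K}_t^j)$ always returns the highest-ranked index lying in ${\cal K}_t^j$, each vertex image is some $\mathbf{e}_k$ with $k \in {\cal K}_t^j$; conversely, for each $k \in {\cal K}_t^j$ there is a permutation ranking $k$ ahead of every other accessible server, so every $\mathbf{e}_k$ with $k\in{\cal K}_t^j$ is attained. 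Thus the image equals $\mathrm{conv}\{\mathbf{e}_k : k \in {\cal K}_t^j\} = \Delta({\cal K}_t^j)$, which is exactly the surjectivity I need, so any $\mathbf{p}_t^j\in\Delta({\cal K}_t^j)$ admits a preimage.

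To keep the argument explicit and constructive (useful since the algorithm must actually produce such a $\mathbf{q}_t^j$), I would replace the convexity appeal by a direct recipe: for each $k \in {\cal K}_t^j$ fix one permutation $\phi^{(k)} \in \check{\cal K}$ that ranks $k$ above all other servers of ${\cal K}_t^j$, place mass $p_t^j(k)$ on its index, and assign zero to every other permutation. Because $\mathbf{p}_t^j \in \Delta({\cal K}_t^j)$ is nonnegative with $\sum_{k\in{\cal K}_t^j} p_t^j(k)=1$, the resulting $\mathbf{q}_t^j$ lies in $\Delta^{\check{K}}$; and since the mapping associated with $\phi^{(k)}$ returns $k$ while the chosen permutations have pairwise-distinct top accessible servers, the $k'$-th entry of $\bm{\Gamma}({\cal K}_t^j)^\top \mathbf{q}_t^j$ collapses to $\sum_{k} \mathds{1}(k=k')\,p_t^j(k) = p_t^j(k')$, recovering $\mathbf{p}_t^j$.

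I do not anticipate a genuine obstacle: the lemma is essentially a change of variables between server distributions and list distributions. The only points demanding care are the bookkeeping that $\Phi_i^j$ depends on a permutation solely through the relative order it imposes on the servers in ${\cal K}_t^j$ (so an ordering of ${\cal K}_t^j$ may be lifted to a full permutation of $\cal K$ without altering the selected arm), and the routine verification that the constructed $\mathbf{q}_t^j$ is a bona fide probability vector.
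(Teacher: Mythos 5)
Your proof is correct and follows essentially the same route as the paper's: both reduce the claim to exhibiting $\mathbf{q}_t^j$ with $\bm{\Gamma}({\cal K}_t^j)^\top \mathbf{q}_t^j = \mathbf{p}_t^j$ and then exploit that each row of $\bm{\Gamma}({\cal K}_t^j)$ is the standard basis vector indexed by that list's top-ranked accessible server, so the mass $p_t^j(k)$ need only be distributed over lists whose policy outputs $k$. Your explicit one-permutation-per-server construction is a welcome concretization of the paper's bare assertion that the resulting linear system ``has at least one solution in $\Delta^{\check{K}}$.''
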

\begin{proof}
	See Appendix \ref{appendix.sreform}.
\end{proof}

Another advantage of expanding the search space to a server list is reflected on the regret analysis. The benchmark to compare with in \eqref{eq.sreg} is exactly the best server list $\phi^{j*}$, which is equivalent to finding $\mathbf{q}^{j*} = [0,\ldots, 1,\ldots,0]^\top$, namely,
\begin{equation}\label{eq.phistar}
	\Phi^{j*}({\cal K}_t^j) =\big(\mathbf{q}^{j*}\big)^\top\bm{\Gamma}({\cal K}_t^j) \mathbf{r}_t^j = \big(\mathbf{q}^{j*}\big)^\top\check{\mathbf{r}}_t^j.
\end{equation}
The regret in \eqref{eq.sreg} can be rewritten as [cf. \eqref{eq.phistar}]
\begin{equation}\label{eq.regreformulate}
	{\rm Reg}_T^j = \sum_{t= 1}^{T} \big( \mathbf{q}_t^j\big)^\top \check{\mathbf{r}}_t^j -  \big(\mathbf{q}^{j*}\big)^\top\check{\mathbf{r}}_t^j
\end{equation}
which will further facilitate the analysis.

\subsection{SAVE-A for adversarial jamming}
For \eqref{eq.prob3}, one can still implement EXP3 \cite{auer2002a} over the expanded search space $\Delta^{\check{K}}$, but its efficiency can be significantly improved with cooperation among devices. 
Tailored for this setting, we will develop next a Security-Aware edge serVer sElection under adversarial jammer (SAVE-A) algorithm.

To address the challenge, we can again rely on the graph-encoded feedback structure. However, corresponding to the enlarged search space, the node set of ${\cal G}_t^j$ comprises all server lists and thus has cardinality $\check{K}$. Different from the stochastic setup, instead of selecting a signle server directly, we are seeking the best server list, where cooperation plays a more important role. That is to say, in addition to adding new edges in the underlying graph, the side observations change the structure of the original graph. With reference to Fig. \ref{fig.listsoeg}, the side observation ${\cal S}_t^j = \{1\}$ changes the original graph in two ways: i) it changes the structure by creating a virtual available set $\tilde{\cal K}_t^j:= {\cal K}_t^j \cup {\cal S}_t^j$; although the original available set is ${\cal K}_t^j = \{2,3\}$, with side observation ${\cal S}_t^j = \{1\}$, some nodes are enabled to have $\Phi_t^j \big(\tilde{{\cal K}}_t^j\big) = \{1\}$; and, ii) it introduces extra edges from each node (server list) to server lists with $\Phi_t^j\big(\tilde{{\cal K}}_t^j\big) = \{1\}$.

\begin{figure}[t]
	\vspace{-0.2cm}
	\centering
	\includegraphics[height=0.25\textwidth]{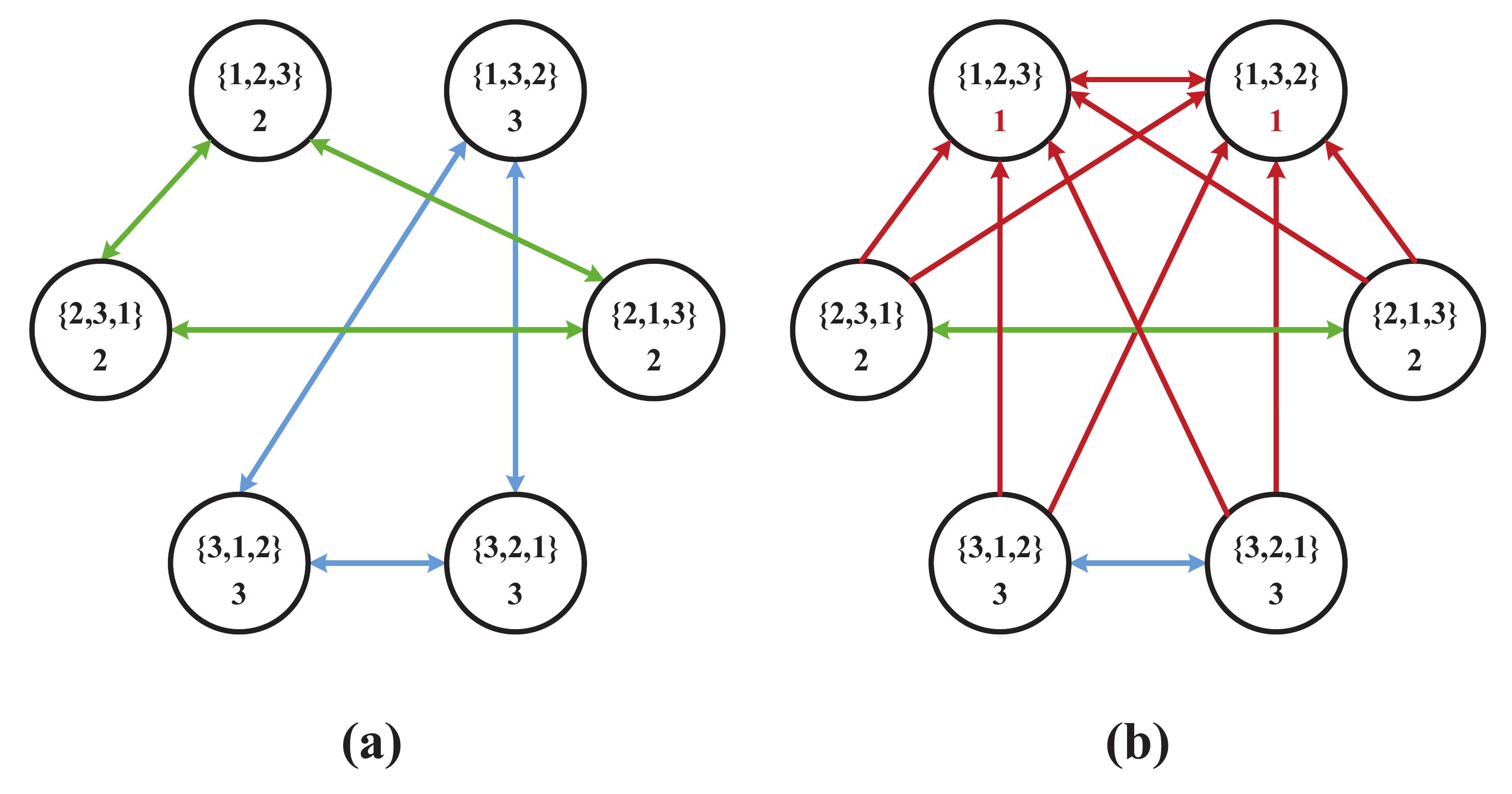}
	\vspace{-0.3cm}
	\caption{ A graph representation of side observation with each node denoting a server list. 	
	(a) considers ${\cal K}= \{1,2,3\}$, with ${\cal K}_t^j= \{2,3\}$ and no side observation. The directed arrows from node (server list) $\phi_t^{jm}$ to $\phi_t^{jn}$ denotes that when selecting list $\phi_t^{jm}$, the security risk of $\phi_t^{jn}$ is also revealed. The green lines connect server lists with $\Phi_t^j({\cal K}_t) = 2$, while $\Phi_t^j({\cal K}_t) = 3$ are linked through the blue lines. (b) considers active set ${\cal K}_t^j = \{2,3\}$, but with side observation ${\cal S}_t^j = \{1\}$. The red lines are the changes compared with (a).}
	\label{fig.listsoeg}
	\vspace{-0.3cm}
\end{figure}

Building upon the side information graph ${\cal G}_t^j$, an approach is developed next, in the search space $\Delta^{\check{K}}$. Specifically, IoT device $j$ maintains a weight $\mathbf{w}_t^j \in \mathds{R}^{\check{K}}$. Per slot $t$, based on the historical security risks, evaluation on each server lists' performances is carried out via
\begin{equation}\label{eq.aweight}
w_t^j(k) = \exp\Big(-\eta_t^j \hat{R}_{t-1}^j (k)\Big), ~ \forall k \in \check{\cal K}
\end{equation}
where the estimated cumulative risk $\hat{R}_{t-1}^j (k)$ is given by
\begin{equation}
\hat{R}_{t-1}^j (k) = \sum_{\tau =1}^{t-1} \hat{r}_{\tau}^j(k), ~\forall k \in \check{\cal K}
\end{equation}
with $\hat{r}_{t}^j (k)$ being the estimated security risk encountered with by server list $k$ at slot $t$. Then after the available set ${\cal K}_t^j$ is revealed, device $j$ computes the probability of selecting server list $k \in \check{\cal K}$ by normalizing $w_t^j(k)$; that is
\begin{equation}\label{eq.aprob}
q_t^j(k) = \frac{w_t^j(k) }{ \sum_{m \in \check{\cal K} } w_t^j(m)}.
\end{equation}
Then a server list $\phi^j_t$ is chosen according to the distribution $\mathbf{q}_t^j$, followed by the selection of an edge server $a_t^j = \Phi_t^j \big({\cal K}_t^j\big)$. After the computation tasks are finished, device $j$ observes the security risk of $a_t^j$ as well as ${\cal S}_t^j$,  upon which the side observation graph ${\cal G}_t^j$ is constructed. Then estimators of $\check{\mathbf{r}}_t^j$ are formed as
\begin{equation}\label{eq.aest}
\hat{r}_t^j (k) =\left\lbrace \begin{array}{ll}
 	\frac{ \check{r}_t^j (k)  \mathds{1} \big( a_t^j = k \big) }{ \mu_t^j + \sum_{(m, k) \in {\cal G}_t^j} q_t^j(m)}, ~&\forall k=\Phi_t^j\big(\tilde{\cal K}_t^j\big) \notin {\cal S}_t^j\\
 	~~~~~~~~\frac{\check{r}_t^j (k) }{ \mu_t^j + 1}, ~&\forall k=\Phi_t^j \big(\tilde{\cal K}_t^j\big) \in {\cal S}_t^j
 \end{array}\!\right.
	\end{equation}	
which are then adopted to update $\hat{R}_t^j (k)$. The proposed SAVE-A approach is summarized in Algorithm \ref{algo2}.

\begin{algorithm}[t]
	\caption{SAVE-A for IoT device $j$}\label{algo2}
	\begin{algorithmic}[1]
		\State \textbf{Initialize:}
		weight $\mathbf{w}_1^j = \mathbf{1}/\check{K}$, exploration factor $\mu_t^j$, and learning rate $\eta_t^j$.
		\For {$t=1,2,\dots, T$}
		\State Compute $\mathbf{w}_t^j$ via \eqref{eq.aweight}.
		\State Available server set ${\cal K}_t^j$ is revealed.
		\State Compute $\mathbf{q}_t^j$ via \eqref{eq.aprob} and choose server list $\phi_t^j \sim \mathbf{q}_t^j$.
		\State Select server $\!a_t^j \!\!=\!\! \Phi_t^j\big({\cal K}_t^j\big)$, receive $\!\gamma_{1,t} (a_t^j)\!$ and $\!\gamma_{2,t} (a_t^j)$.
		\State Broadcast $\!\gamma_{1,t}(a_t^j)\!$ and $\!\gamma_{2,t} (a_t^j)\!$ to devices in $\!\{i\,|\, j\!\in\!{\cal S}_t^i\}$.
		\State Compute security risk for $\{a_t^j\} \cup {\cal S}_t^j$ via \eqref{eq.risk}.
		\State Estimate risk via \eqref{eq.aest}.
		\EndFor
	\end{algorithmic}
\end{algorithm}

\begin{remark}
	To reduce the computation and memory complexity of SAVE-A, one pertinent idea is to leverage recent advances on function approximation to represent $\bm{\Gamma}(\cdot)$ via the low-dimensional random basis functions ``on-the-fly'' \cite{shen2018}. 
\end{remark}

\subsection{Performance analysis}
In this subsection, we analytically assess the performance of SAVE-A.

\begin{theorem}\label{theo.2}
	For adversarially chosen ${\cal K}_t^j$, the regret of SAVE-A in \eqref{eq.regreformulate} can be bounded by
	\begin{equation}
	{\rm Reg}_T^j \leq\sum_{t=1}^{T} \bigg( \mu_t^j+ \frac{\eta_t^j}{2} \bigg) Q_t^j + \frac{\ln \check{K}}{\eta_{T+1}}
	\end{equation}
	where $Q_t^j$ is defined as
	\begin{equation}\label{eq.def-q2}
		Q_t^j:= \sum_{k=1}^{\check{K}}\frac{q_t^j(k)}{\mu_t^j+\sum_{(m, k) \in {\cal G}_t^j} q_t^j(m)}.
	\end{equation} 
\end{theorem}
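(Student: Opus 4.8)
The plan is to mirror the proof of Theorem~\ref{theo.1}, but now over the expanded list simplex $\Delta^{\check K}$ and with the list-level side-observation graph ${\cal G}_t^j$. First I would observe that SAVE-A is exactly exponential weights (equivalently, follow-the-regularized-leader with the negative-entropy regularizer) run on the surrogate losses $\hat{\mathbf r}_t^j$ over $\Delta^{\check K}$. The standard exponential-weights bound with a non-increasing stepsize (via $e^{-x}\le 1-x+x^2/2$ for $x\ge 0$ and telescoping the entropic potential) then gives, for the comparator vertex $\mathbf q^{j*}$ of \eqref{eq.phistar},
\[
\sum_{t=1}^T \big(\mathbf q_t^j-\mathbf q^{j*}\big)^\top\hat{\mathbf r}_t^j \le \frac{\ln\check K}{\eta_{T+1}} + \sum_{t=1}^T \frac{\eta_t^j}{2}\sum_{k\in\check{\cal K}} q_t^j(k)\big(\hat r_t^j(k)\big)^2 .
\]
This step is identical in spirit to the stochastic case; the only change is that the ambient dimension is $\check K=K!$, which is why $\ln K$ becomes $\ln\check K$.

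The second and main step passes from the surrogate losses $\hat{\mathbf r}_t^j$ back to the true losses $\check{\mathbf r}_t^j$, which is where the implicit-exploration bias and the factor $\mu_t^j$ enter. Let $\mathbb{E}_t[\cdot]$ denote the conditional expectation given the history before round $t$ (so $\mathbf q_t^j$ is measurable), and write $o_t^j(k):=\sum_{(m,k)\in{\cal G}_t^j} q_t^j(m)$ for the probability that list $k$ is observed. The key computation is that, under the list-level graph built from the virtual available set $\tilde{\cal K}_t^j={\cal K}_t^j\cup{\cal S}_t^j$, the estimator \eqref{eq.aest} satisfies $\mathbb{E}_{t}[\hat r_t^j(k)]=\frac{o_t^j(k)}{\mu_t^j+o_t^j(k)}\,\check r_t^j(k)$, so it always underestimates the true loss. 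This yields two bounds. For the variance term, using $\check r_t^j(k)\le 1$ from \textbf{(as1)},
\[
\sum_{k}q_t^j(k)\,\mathbb{E}_t\big[(\hat r_t^j(k))^2\big]\le \sum_k q_t^j(k)\frac{o_t^j(k)}{(\mu_t^j+o_t^j(k))^2}\le \sum_k\frac{q_t^j(k)}{\mu_t^j+o_t^j(k)}=Q_t^j ,
\]
converting the second term above into $\tfrac{\eta_t^j}{2}Q_t^j$. For the bias term, underestimation gives $\big(\mathbf q_t^j\big)^\top\big(\check{\mathbf r}_t^j-\mathbb{E}_t[\hat{\mathbf r}_t^j]\big)=\sum_k q_t^j(k)\frac{\mu_t^j\check r_t^j(k)}{\mu_t^j+o_t^j(k)}\le \mu_t^j Q_t^j$, while on the comparator side $\big(\mathbf q^{j*}\big)^\top\big(\check{\mathbf r}_t^j-\mathbb{E}_t[\hat{\mathbf r}_t^j]\big)\ge 0$ and can simply be dropped.

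Combining the two steps, I would take conditional expectations, use $\mathbb{E}_t\big[(\mathbf q_t^j-\mathbf q^{j*})^\top\hat{\mathbf r}_t^j\big]=(\mathbf q_t^j-\mathbf q^{j*})^\top\mathbb{E}_t[\hat{\mathbf r}_t^j]$, and add the bias bound to obtain
\[
\big(\mathbf q_t^j-\mathbf q^{j*}\big)^\top\check{\mathbf r}_t^j\le \big(\mathbf q_t^j-\mathbf q^{j*}\big)^\top\mathbb{E}_t[\hat{\mathbf r}_t^j]+\mu_t^j Q_t^j ;
\]
summing over $t$ and invoking the exponential-weights bound recovers $\sum_t(\mu_t^j+\tfrac{\eta_t^j}{2})Q_t^j+\ln\check K/\eta_{T+1}$, matching \eqref{eq.regreformulate}'s target. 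I expect the main obstacle to be the bookkeeping behind the conditional-mean identity: unlike Theorem~\ref{theo.1}, the side observations here reshape the list graph by enlarging the active set to $\tilde{\cal K}_t^j$ and adding edges into every list whose top accessible server is the newly observed one, so one must verify carefully that $o_t^j(k)=\sum_{(m,k)\in{\cal G}_t^j}q_t^j(m)$ really is the observation probability of list $k$ before the identity can be asserted. A secondary point worth flagging is that the variance bound is an equality only after applying $\mathbb{E}_t$ (the factor $o_t^j(k)$ in the numerator is essential), so the stated bound — like \eqref{eq.theo1} — is to be read in expectation over the algorithm's internal randomization.
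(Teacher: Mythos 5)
Your proposal is correct and follows exactly the route the paper intends: the paper omits the proof of Theorem~\ref{theo.2} as ``similar to that of Theorem~\ref{theo.1},'' and your argument is precisely the Lemma~\ref{lemma.2} potential-function analysis transplanted to $\Delta^{\check K}$ with $\ln K\to\ln\check K$, the same bias bound $\mu_t^j Q_t^j$ and variance bound $\tfrac{\eta_t^j}{2}Q_t^j$, and the correct observation that the stochastic decoupling step \eqref{eq.corollary1-1} is no longer needed because the comparator in \eqref{eq.regreformulate} is already a fixed vertex $\mathbf q^{j*}$ of the list simplex. The two caveats you flag --- verifying that $\sum_{(m,k)\in{\cal G}_t^j}q_t^j(m)$ is the observation probability of list $k$ under the reshaped graph, and that the bound holds in expectation over the internal randomization --- are exactly the points the omitted proof would need to check.
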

\begin{proof}
	The proof is similar to that of Theorem \ref{theo.1}, and it is thus omitted. 
\end{proof}

Similar to Corollary \ref{col.fixstepsize}, choosing $\eta_t^j =\sqrt{{\ln \check{K}}/(KT)}$ and $\mu_t^j = {\eta_t^j}/{2}, \forall t$, the regret of Algorithm \ref{algo2} without device cooperation is bounded by
	\begin{equation}\label{eq.afixreg}
	{\rm Reg}_T^j \leq 2\sqrt{TK\ln \check{K}}  \stackrel{(a)}{=} {\cal O} \big( \sqrt{TK^2\ln K}\big)
	\end{equation}
	where (a) follows from the Stirling's approximation $\ln \check{K} = K \ln K - K + {\cal O}(\ln K)$.
	If instead $\eta_t^j =\sqrt{\frac{\ln \check{K}}{2Kt}}$ and $\mu_t^j = \frac{\eta_t^j}{2}$, the regret of Algorithm \ref{algo2} without cooperation is bounded as
	\begin{equation}\label{eq.fixreg22}
	{\rm Reg}_T^j \leq 2\sqrt{2TK \ln \check{K}} =  {\cal O} \big( \sqrt{TK^2\ln K} \big).
	\end{equation}
	
The following corollary establishes the sublinear regret when SAVE-A is employed with cooperation. 
\begin{corollary}\label{col.afixstepsize}
	If $\eta_t^j = \sqrt{ (\ln \check{K}) / \big(K+ \sum_{\tau = 1}^{t-1}Q_\tau^j\big)}$, and $\mu_t^j =\eta_t^j/2, \forall t$, then
	\begin{equation}\label{eq.adareg2}
	{\rm Reg}_T^j \!\leq\! 2\sqrt{ \bigg(\delta + \sum_{t=1}^{T} Q_t^j \bigg) \!\ln \check{K}} \!=\! {\cal O}  \bigg( \sqrt{ \sum_{t=1}^{T} Q_t^j K\!\ln {K}} \bigg)\!\!\!
	\end{equation}
	where $\delta := \min_t \{ K- Q_t^j \}$.
\end{corollary}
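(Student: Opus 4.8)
The plan is to mirror the proof of Corollary~\ref{col.adastepsize}, now working pathwise (no expectation is needed, since for a fixed realization of the adversarial availability the bound of Theorem~\ref{theo.2} already holds deterministically) and with $K$ replaced by $\check{K}$ inside the logarithm. First I would start from Theorem~\ref{theo.2}: substituting $\mu_t^j=\eta_t^j/2$ collapses the per-round coefficient to $\mu_t^j+\eta_t^j/2=\eta_t^j$, so that
\begin{equation}
{\rm Reg}_T^j \le \sum_{t=1}^T \eta_t^j Q_t^j + \frac{\ln\check{K}}{\eta_{T+1}}.
\end{equation}
Writing $S_t:=K+\sum_{\tau=1}^{t-1}Q_\tau^j$ so that $\eta_t^j=\sqrt{(\ln\check{K})/S_t}$, the first term becomes $\sqrt{\ln\check{K}}\sum_{t=1}^T Q_t^j/\sqrt{S_t}$ and the second equals $\sqrt{(\ln\check{K})\,S_{T+1}}$, with $S_{T+1}=K+\sum_{t=1}^T Q_t^j$.

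The crux is a telescoping bound on $\sum_{t=1}^T Q_t^j/\sqrt{S_t}$, which carries a history-only denominator and is therefore not immediately amenable to the standard integral-comparison argument. The key enabling fact is the adversarial analog of Lemma~\ref{lemma.1}: because the server-list side-observation graph decomposes into at most $K$ cliques (all lists sharing the same value of $\Phi_t^j(\cdot)$ are mutually observed, as illustrated in Fig.~\ref{fig.listsoeg}), one has $Q_t^j\le K\le S_t$. Setting $\delta:=\min_t\{K-Q_t^j\}=K-\max_t Q_t^j$, this bound gives $S_t=\delta+\max_\tau Q_\tau^j+\sum_{\tau<t}Q_\tau^j\ge \delta+\sum_{\tau\le t}Q_\tau^j=:\hat S_t$, i.e. the history-only denominator dominates the companion quantity $\hat S_t$ that already includes the current increment. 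Hence $Q_t^j/\sqrt{S_t}\le (\hat S_t-\hat S_{t-1})/\sqrt{\hat S_t}\le 2(\sqrt{\hat S_t}-\sqrt{\hat S_{t-1}})$, and telescoping from $\hat S_0=\delta$ to $\hat S_T=\delta+\sum_{t=1}^T Q_t^j$ yields $\sum_{t=1}^T Q_t^j/\sqrt{S_t}\le 2\sqrt{\delta+\sum_{t=1}^T Q_t^j}-2\sqrt{\delta}$.

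Combining the two terms then gives the leading-order bound ${\rm Reg}_T^j\le 2\sqrt{(\delta+\sum_{t=1}^T Q_t^j)\ln\check{K}}$ claimed in \eqref{eq.adareg2}, with the residual $\sqrt{\delta}$-type constants absorbed into the leading term exactly as in the approximate treatment of Corollary~\ref{col.adastepsize}. Finally, invoking Stirling's approximation $\ln\check{K}=K\ln K-K+{\cal O}(\ln K)$ converts this into the stated ${\cal O}(\sqrt{\sum_{t=1}^T Q_t^j\,K\ln K})$ rate.

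I expect the main obstacle to be the telescoping lemma with the self-confident, history-only stepsize: naively comparing $\sum_t Q_t^j/\sqrt{S_t}$ against $\int dx/\sqrt{x}$ fails because the denominator excludes the current increment, and a crude fix inflates the constant beyond $2$. The clean constant hinges on the $\delta$-shift above, which in turn requires the $Q_t^j\le K$ clique bound; so verifying that the server-list graph really does collapse to at most $K$ cliques (the adversarial counterpart of Lemma~\ref{lemma.1}) is the load-bearing step, while the remainder is routine algebra parallel to the stochastic case.
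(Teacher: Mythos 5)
Your proposal is correct and follows essentially the same route as the paper, which defers this corollary to the proof of Corollary~\ref{col.adastepsize}: the $\delta$-shift from the history-only denominator $K+\sum_{\tau<t}Q_\tau^j$ to $\delta+\sum_{\tau\le t}Q_\tau^j$ (justified by the adversarial bound $Q_t^j\le K$ from Lemma~\ref{lemma.4}) followed by telescoping is exactly the paper's Lemma~\ref{lemma.3} argument, with your inequality $(\hat S_t-\hat S_{t-1})/\sqrt{\hat S_t}\le 2(\sqrt{\hat S_t}-\sqrt{\hat S_{t-1}})$ being an equivalent restatement of the paper's $x/2\le 1-\sqrt{1-x}$. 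The only residual looseness (the $\ln\check K/\eta_{T+1}$ term carries $K+\sum_t Q_t^j$ rather than $\delta+\sum_t Q_t^j$) is present in the paper's own treatment as well and does not affect the stated order.
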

\begin{proof}
The proof of \eqref{eq.adareg2} follows similar steps as that of Corollary \ref{col.adastepsize}, and thus it is omitted.
\end{proof}

Compared with Corollaries \ref{col.fixstepsize} and \ref{col.adastepsize}, the bound in \eqref{eq.adareg2} has slightly worse dependence on $K$ due to the expansion of the search spaces. Similar to the stochastic case, we will rely on cooperation value $\lambda$ as the device-averaged ratio of the upper bound in \eqref{eq.adareg2} and that of \eqref{eq.afixreg}. Through $\lambda$, the ensuing corollary quantifies how side observations facilitate the security computing tasks.

\begin{corollary}\label{col.alambda}
	The cooperation value of SAVE-A satisfies
\begin{equation}\label{eq.scoop2}
\!\lambda \!\leq \!\frac{1}{J}\sum_{j=1}^J \!\sqrt{\frac{1}{T} \!+\! \frac{1}{KT}\!\sum_{t=1}^T\!  \Big( \big|{\cal K}_t^j\! \cup \!{\cal S}_t^j\big|\!-\!\big|{\cal S}_t^j\big| \!+\! \mathds{1} \big({\cal S}_t^j \!\neq \! \emptyset \big)\Big)}.\!\!\!\!
\end{equation}
\end{corollary}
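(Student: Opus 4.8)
The plan is to mirror the derivation of the stochastic cooperation value in Corollary~\ref{col.adastepsize}, but to redo its combinatorial core for the enlarged server-list graph. First I would form the per-device ratio directly from its definition: dividing the cooperative bound \eqref{eq.adareg2} by the non-cooperative bound \eqref{eq.afixreg}, the common factors $2$ and $\sqrt{\ln\check{K}}$ cancel, leaving
\[
\lambda^j = \sqrt{\frac{\delta + \sum_{t=1}^{T} Q_t^j}{TK}}, \qquad \delta := \min_t\{K - Q_t^j\}.
\]
Since every $Q_t^j \ge 0$, we have $\delta \le K$, hence $\delta/(TK) \le 1/T$ and $\lambda^j \le \sqrt{1/T + (1/TK)\sum_t Q_t^j}$. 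It then remains only to upper bound the $Q_t^j$ in \eqref{eq.def-q2} by the stated combinatorial quantity and average over $j$.

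The main step, and the main obstacle, is establishing the per-slot bound
\[
Q_t^j \le \big|{\cal K}_t^j \cup {\cal S}_t^j\big| - \big|{\cal S}_t^j\big| + \mathds{1}\big({\cal S}_t^j \neq \emptyset\big),
\]
the server-list analogue of Lemma~\ref{lemma.1}. I would group the $\check{K}$ server lists into equivalence classes according to their highest-ranked server in the virtual available set, $\sigma(k):=\Phi_t^j(\tilde{\cal K}_t^j)$ with $\tilde{\cal K}_t^j = {\cal K}_t^j \cup {\cal S}_t^j$. Two lists in the same class select the same server and therefore observe one another, so each class forms a clique with self-loops in ${\cal G}_t^j$. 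These classes split into \emph{non-free} ones, whose representative lies in ${\cal K}_t^j \setminus {\cal S}_t^j$ and is revealed only when a list from that class is drawn, and \emph{free} ones, whose representative lies in ${\cal S}_t^j$ and whose risk is revealed every slot through information sharing. There are exactly $|{\cal K}_t^j \setminus {\cal S}_t^j| = |{\cal K}_t^j \cup {\cal S}_t^j| - |{\cal S}_t^j|$ non-free classes, and free classes exist precisely when ${\cal S}_t^j \neq \emptyset$.

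For each non-free class $C$, every $k \in C$ satisfies $\sum_{(m,k)\in{\cal G}_t^j} q_t^j(m) \ge \sum_{m\in C} q_t^j(m)$, so $\sum_{k\in C} q_t^j(k)/(\mu_t^j + \sum_{(m,k)}q_t^j(m)) \le 1$; summing over the non-free classes produces the first term. For the free classes, the shared observation forces $\sum_{(m,k)}q_t^j(m) = 1$ for every free $k$, so their total contribution is $\sum_{k\,\text{free}} q_t^j(k)/(\mu_t^j+1) \le 1$, which is present only when ${\cal S}_t^j \neq \emptyset$ and supplies the indicator term. Extra edges from free-class selections into non-free lists only enlarge denominators and may be discarded. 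Substituting this bound on $Q_t^j$ into $\lambda^j$ and averaging over the $J$ devices then yields \eqref{eq.scoop2}.

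I expect the delicate part to be the clean accounting in the enlarged graph rather than any heavy algebra: one must verify that for non-free classes the top real and top virtual choice coincide (because $\sigma(k)\in{\cal K}_t^j$ is ranked above every other member of the virtual set, hence of the real set), so that the self-loop denominators are exactly as claimed, and that the shared servers act as a single dominating block contributing at most one unit to $Q_t^j$. Once this clique decomposition is in place, the remaining manipulations, including the set identity $|{\cal K}_t^j \cup {\cal S}_t^j| - |{\cal S}_t^j| = |{\cal K}_t^j \setminus {\cal S}_t^j|$ and the bound $\delta\le K$, are routine.
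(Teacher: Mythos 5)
Your proposal is correct and follows essentially the same route as the paper: the ratio-of-bounds computation with $\delta\le K$ matches Appendix F's use of the $\lambda^j$ definition, and your partition of the $\check{K}$ server lists into ``free'' and ``non-free'' classes keyed by $\Phi_t^j(\tilde{\cal K}_t^j)$ is exactly the paper's split into ${\cal N}_1$ (lists mapping into ${\cal S}_t^j$, contributing at most one unit) and the $|{\cal K}_t^j\cup{\cal S}_t^j|-|{\cal S}_t^j|$ cliques of ${\cal N}_2$ in Lemma \ref{lemma.4}. No gaps.
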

\begin{proof}
	See Appendix \ref{appendix.aQ}.
\end{proof}

Similar to the stochastic jamming case, Corollary \ref{col.alambda} asserts that more side observations lead to lower security risk.

\section{Simulation Tests}
In this section, numerical tests are presented based on both synthetic and real data.

\begin{figure}[t]
	\vspace{-0.1cm}
	\centering
	\includegraphics[height=0.25\textwidth]{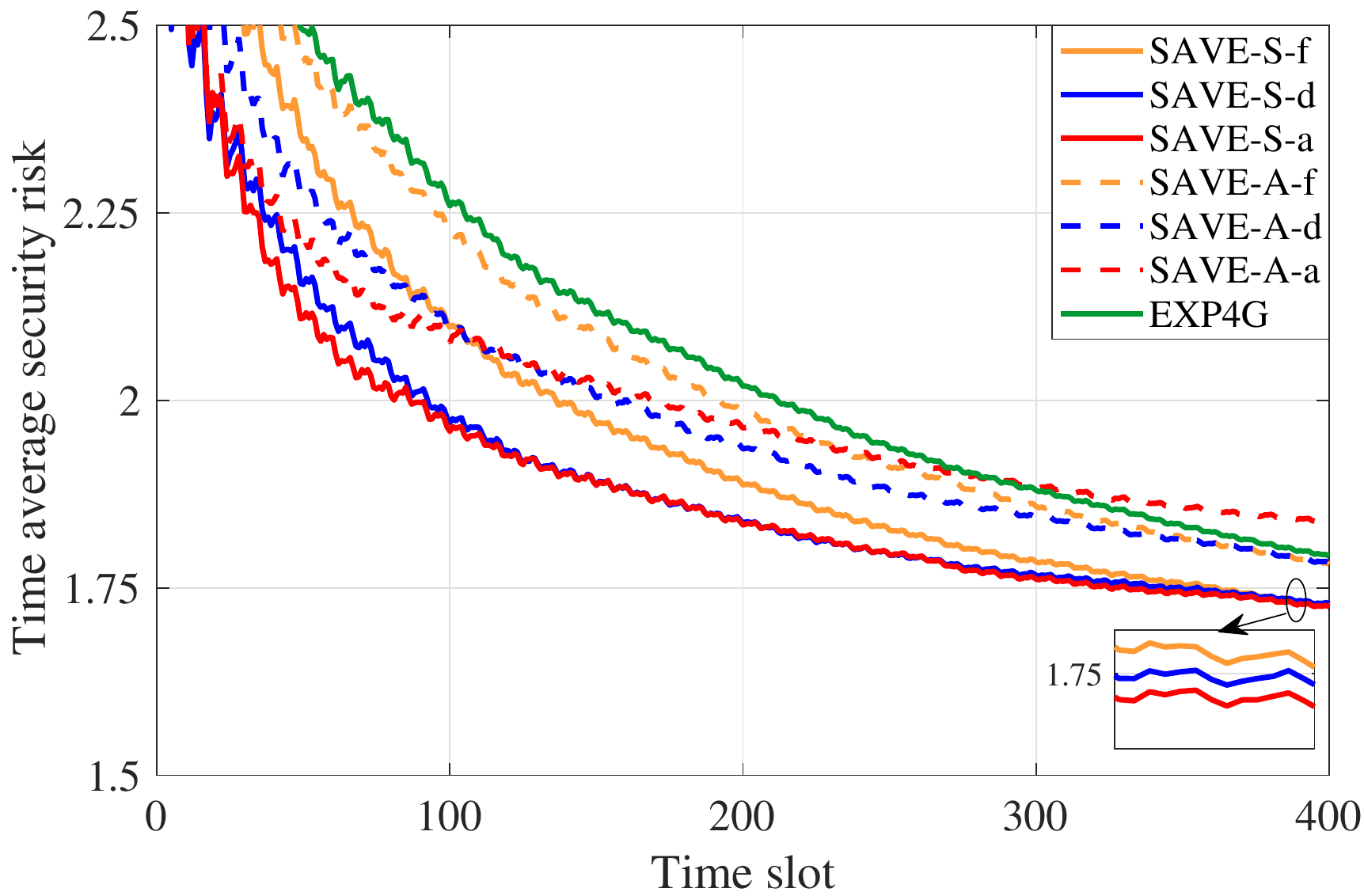}
	\vspace{-0.20cm}
	\caption{A comparison of SAVE-S and SAVE-A without cooperations under stochastic jamming attacks.}
	\label{fig.fStoSleepNoCoSingleDevice_magnify}
	\vspace{-0.1cm}
\end{figure}

\begin{figure*}[t]
	\begin{tabular}{ccc}
		\hspace*{-2ex}
		\includegraphics[width=5.8cm]{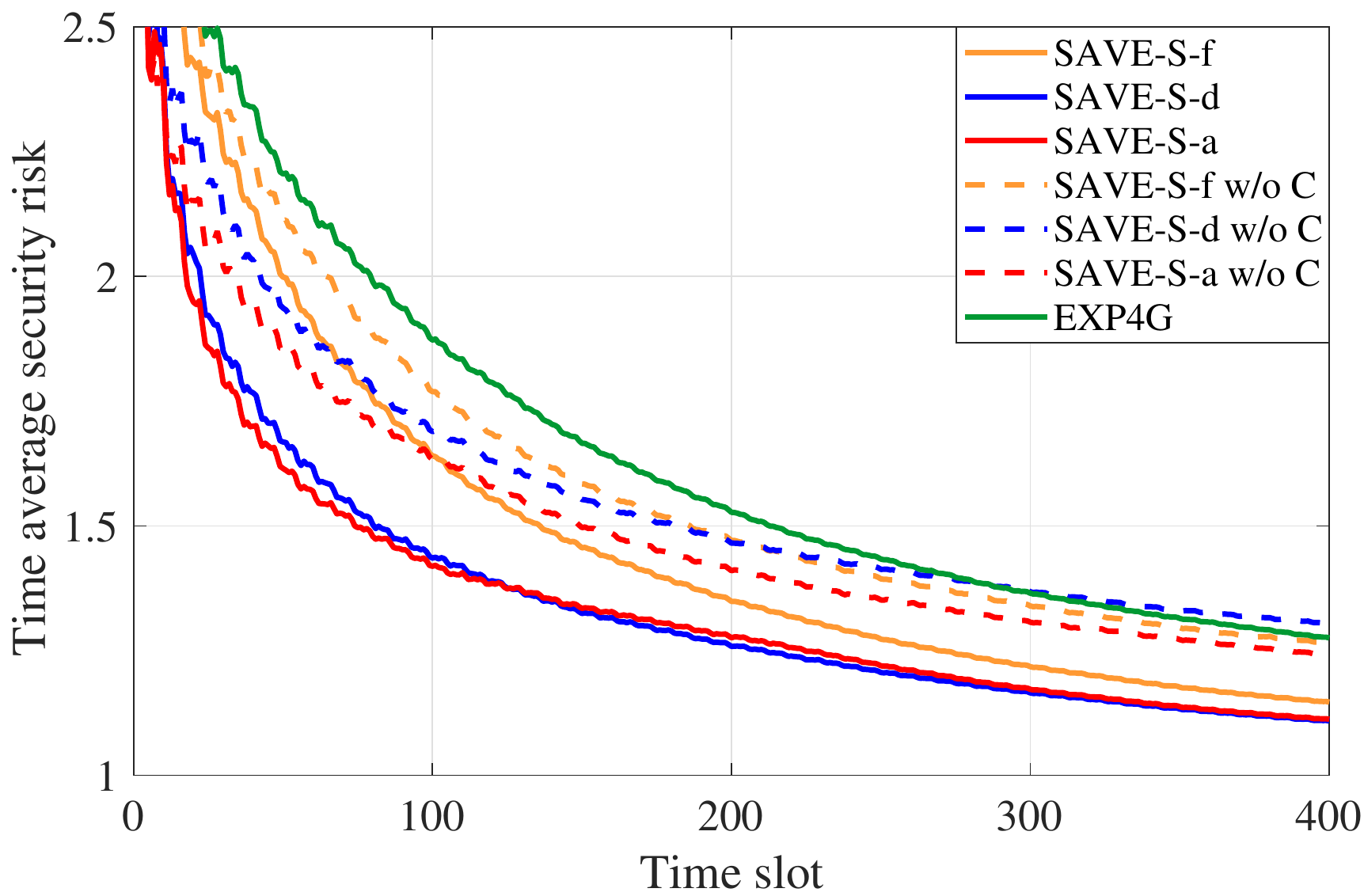}&
		\hspace*{-2ex}
		\includegraphics[width=5.8cm]{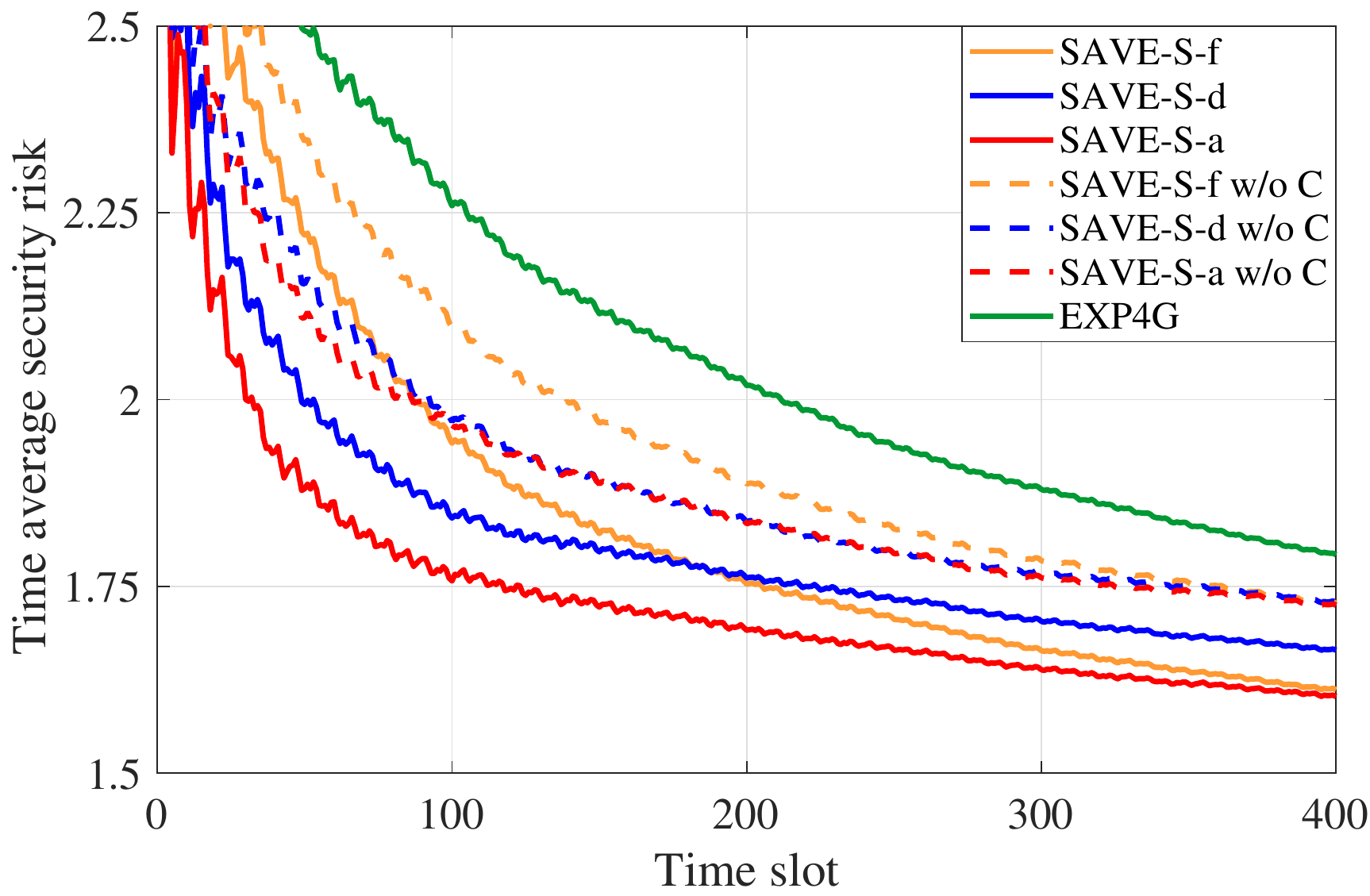}&
		\hspace*{-2ex}
		\includegraphics[width=5.8cm]{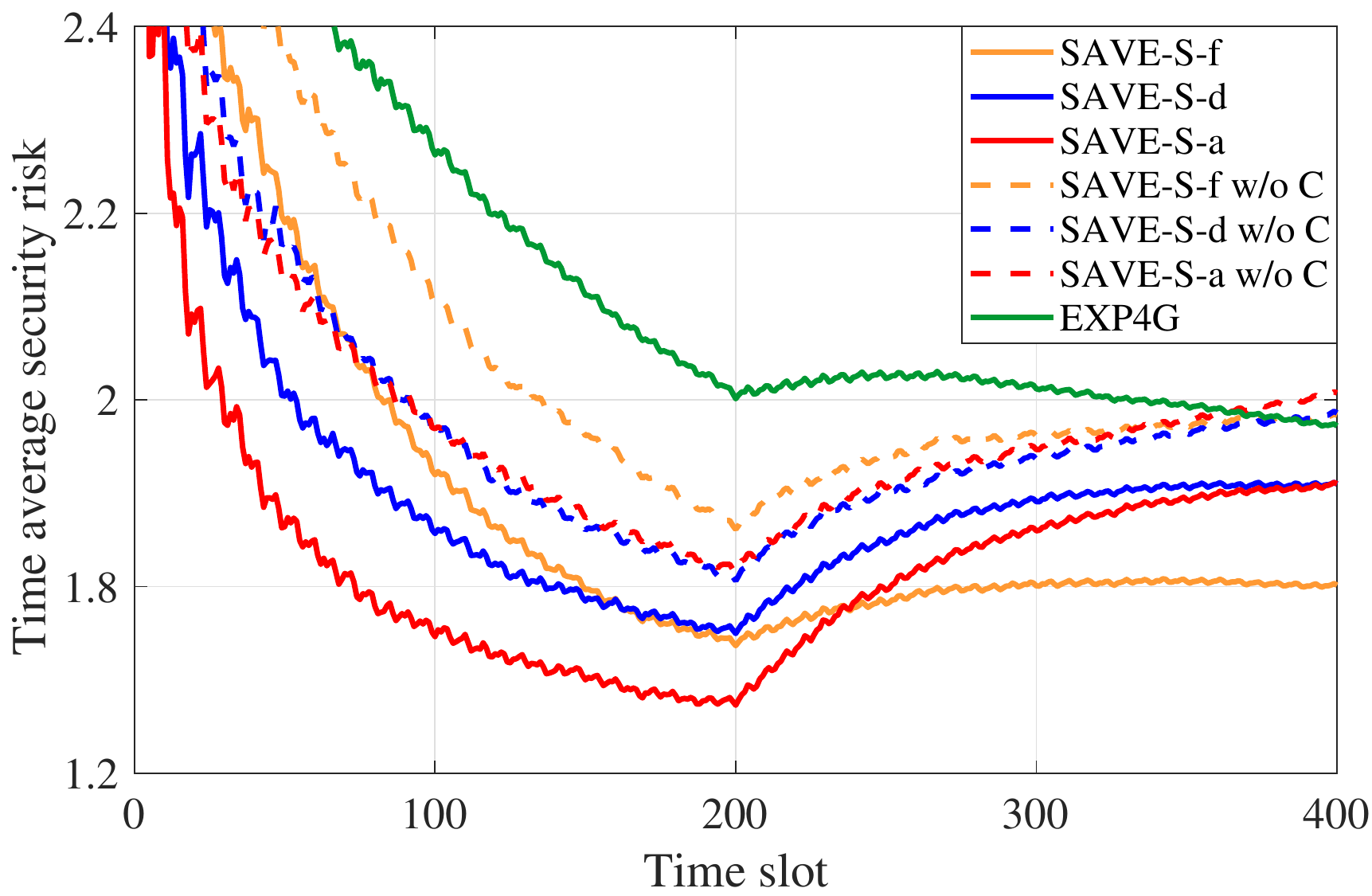}
		\\
		(a1)&(b1)&(c1)\\
		\includegraphics[width=5.8cm]{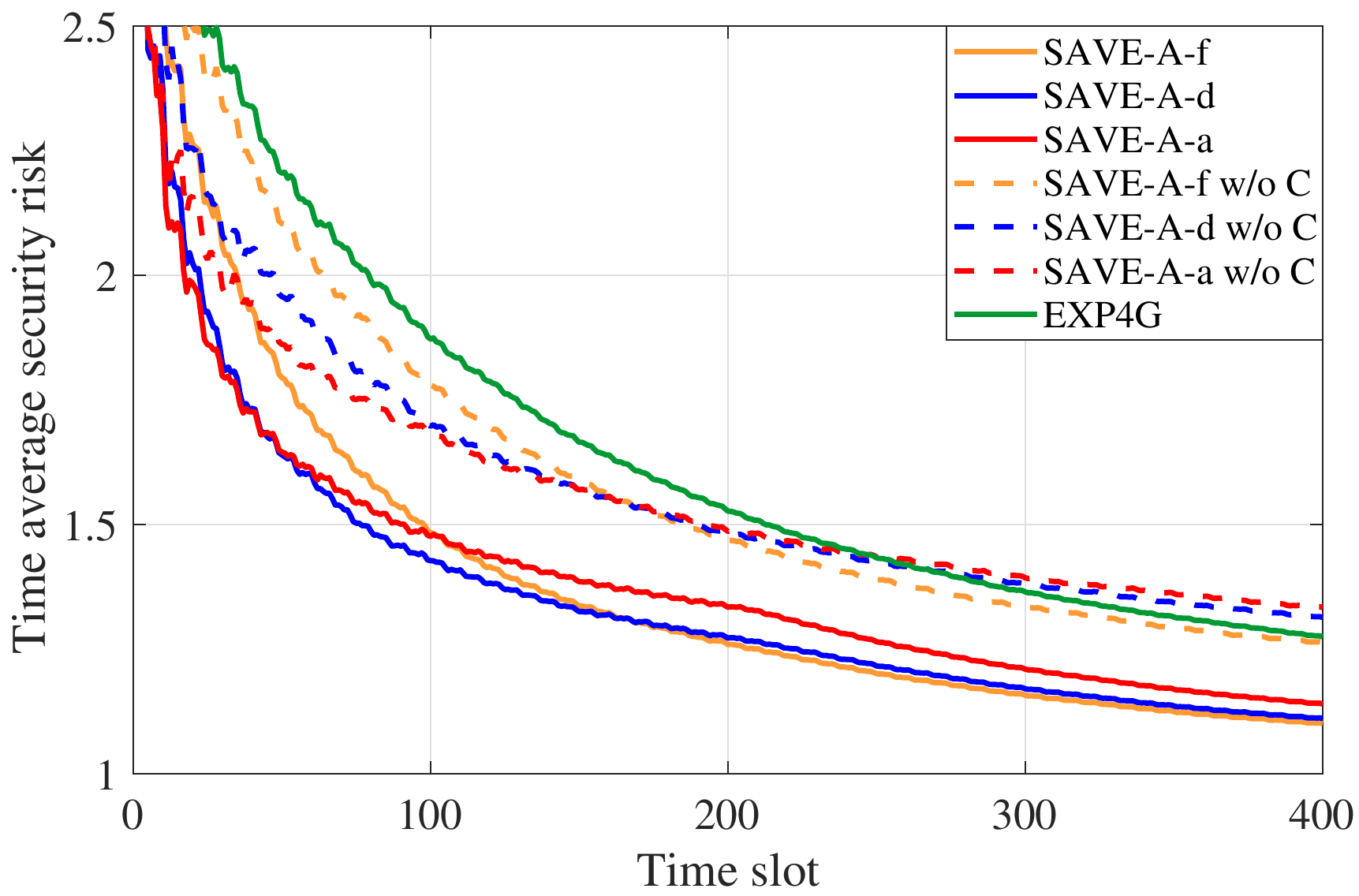}&
		\hspace*{-2ex}
		\includegraphics[width=5.8cm]{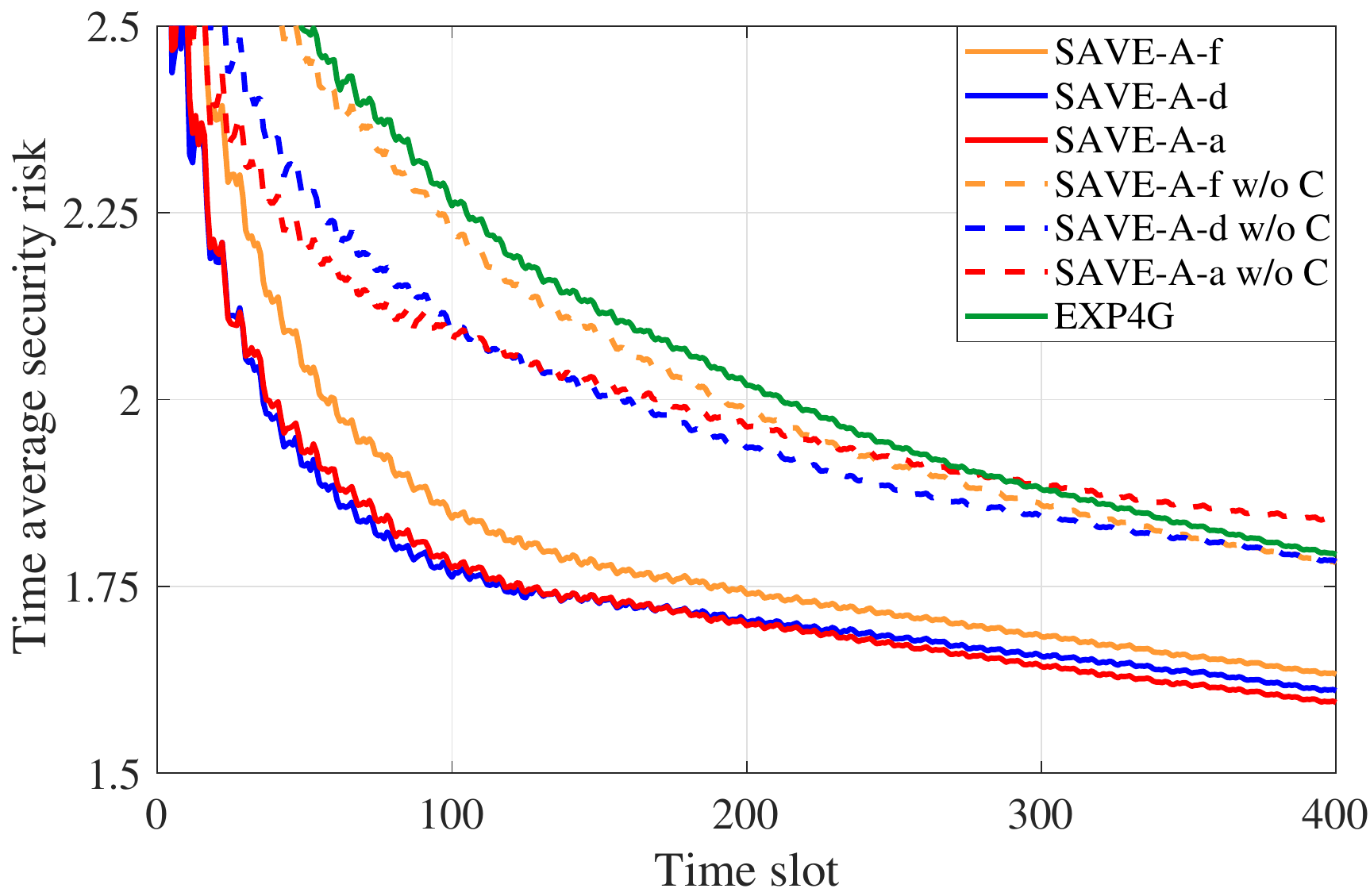}&
		\hspace*{-2ex}
		\includegraphics[width=5.8cm]{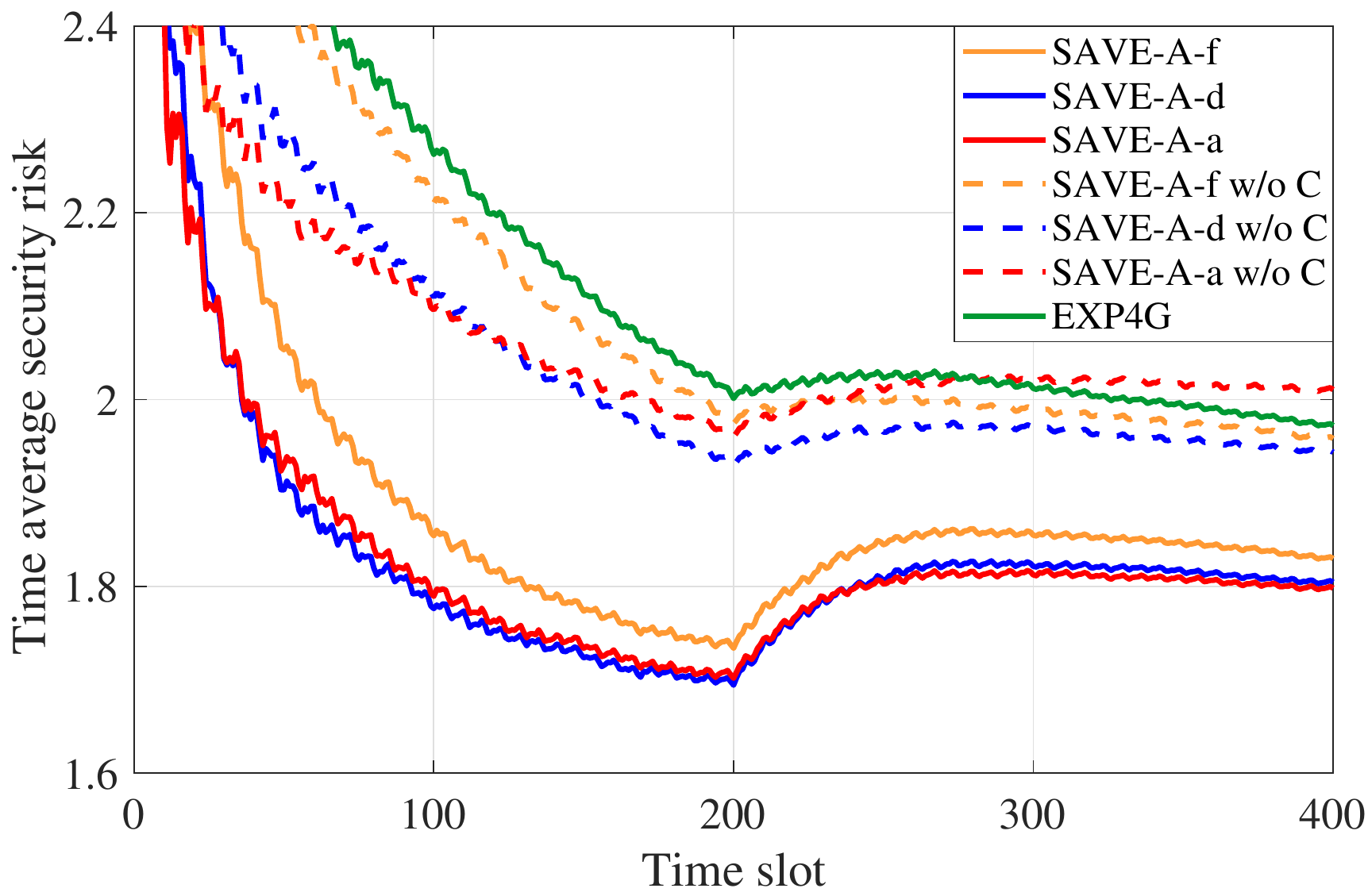}
		\\
		(a2)& (b2)&(c2)
	\end{tabular}
	\caption{Synthetic data tests: 
		(a1) SAVE-S without jamming attacks; (a2) SAVE-A without jamming attacks; (b1) SAVE-S with stochastic jamming attacks; (b2) SAVE-A with stochastic jamming attacks; (c1) SAVE-S with adversarial jamming attacks; (c2) SAVE-A with adversarial attacks.} \label{fig.cor}
\end{figure*}

\subsection{Synthetic data tests}
Our first test will rely on synthetic data. Consider $K=5$ edge servers, and $J = 1$ device with $\rho = 0.8$ over $T=400$ slots. 
The tasks' resource requirement $c_t$ is generated by $c_t = (0.6+0.5v_t)\cos{2t}$, 
where $v_t$ is uniformly distributed in $[0,1]$; and $s_t$ is given by $s_t= (0.25+0.3v_t)x_t$, 
where $v_t$ is again uniformly distributed in $[0,1]$; and $x_t$ is uniform random variable in $[0.8,1.2]$. For the corresponding security risks $\bm{\gamma}_{1,t}$ is generated by
\begin{equation}
	\gamma_{1,t} (k)= \frac{2k}{3} \big( |\sin{t}|+0.8 + |v_1|\big)
\end{equation}
with $v_1$ being a Gaussian random variable $v_1\sim {\cal N}(0,1.44)$, and $\bm{\gamma}_{2,t}$ is generated by
\begin{equation}
\gamma_{2,t} (k)= \frac{k}{2} \big( 0.5\sin{t}+0.75 + |v_2|\big)
\end{equation}
with $v_2\sim {\cal N}(0,0.64)$.

\begin{figure*}[t]
	\begin{tabular}{ccc}
		\hspace*{-2ex}
		\includegraphics[width=5.9cm]{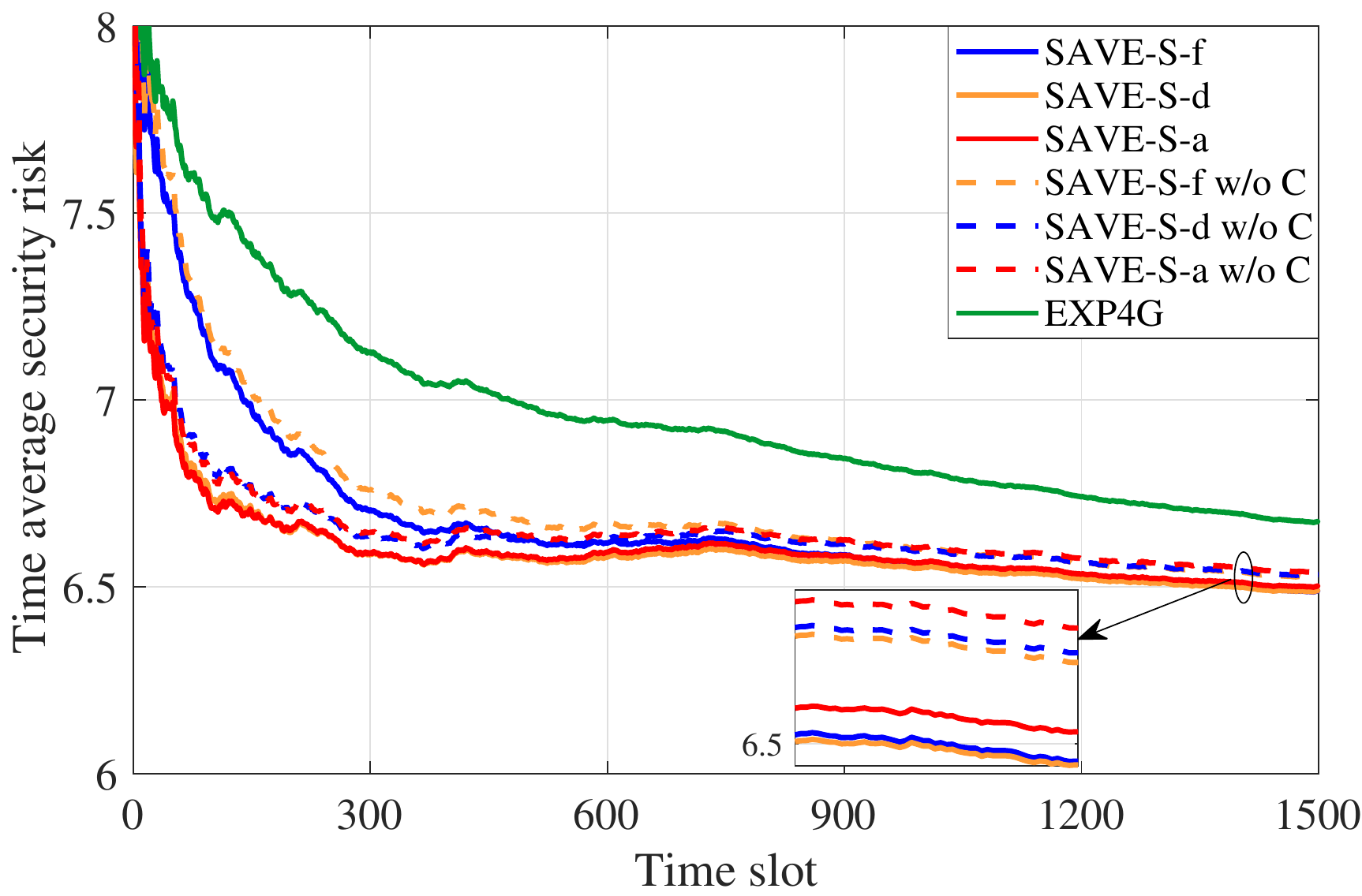}&
		\hspace*{-2ex}
		\includegraphics[width=5.9cm]{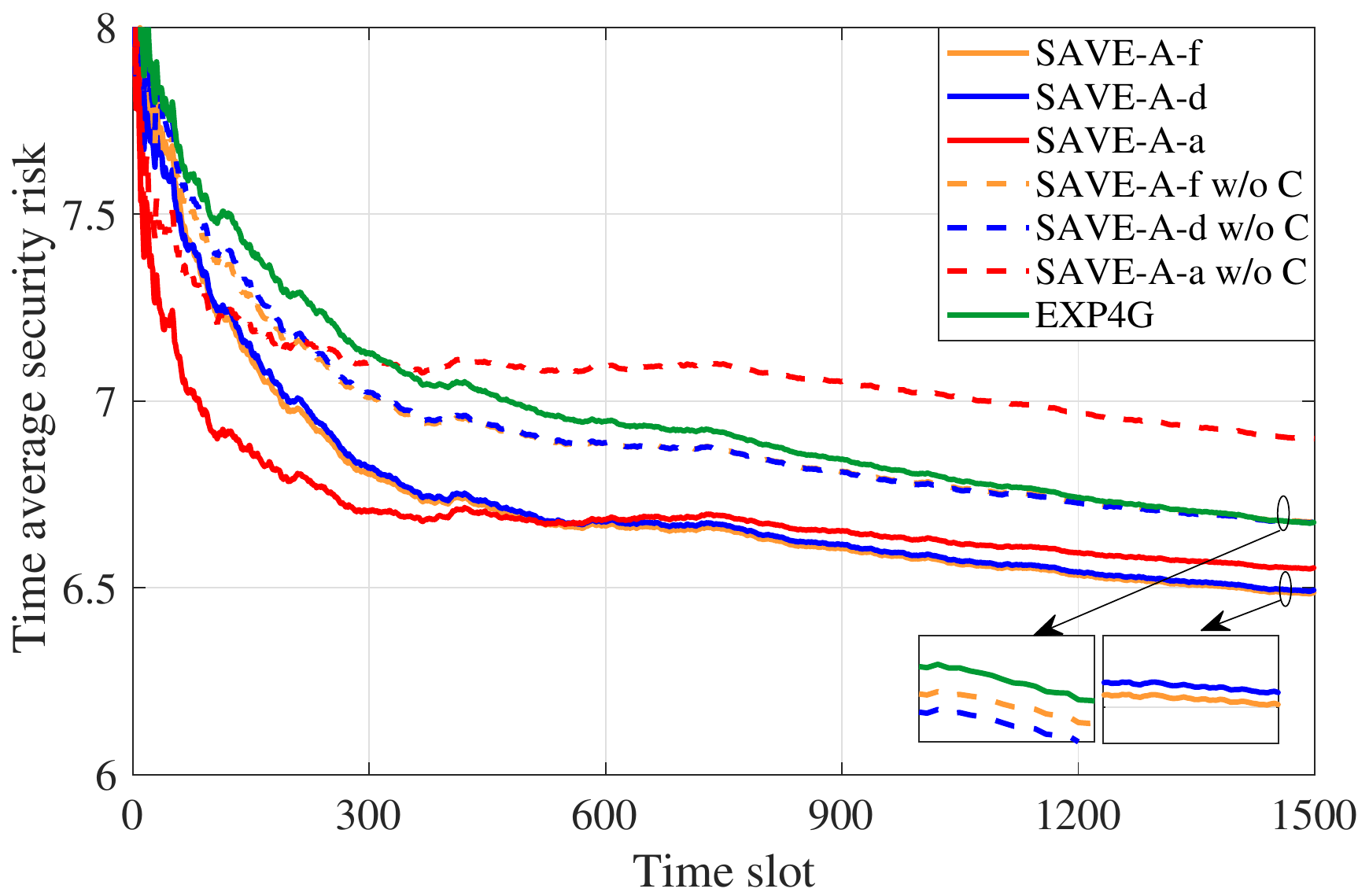}&
		\hspace*{-2ex}
		\includegraphics[width=5.9cm]{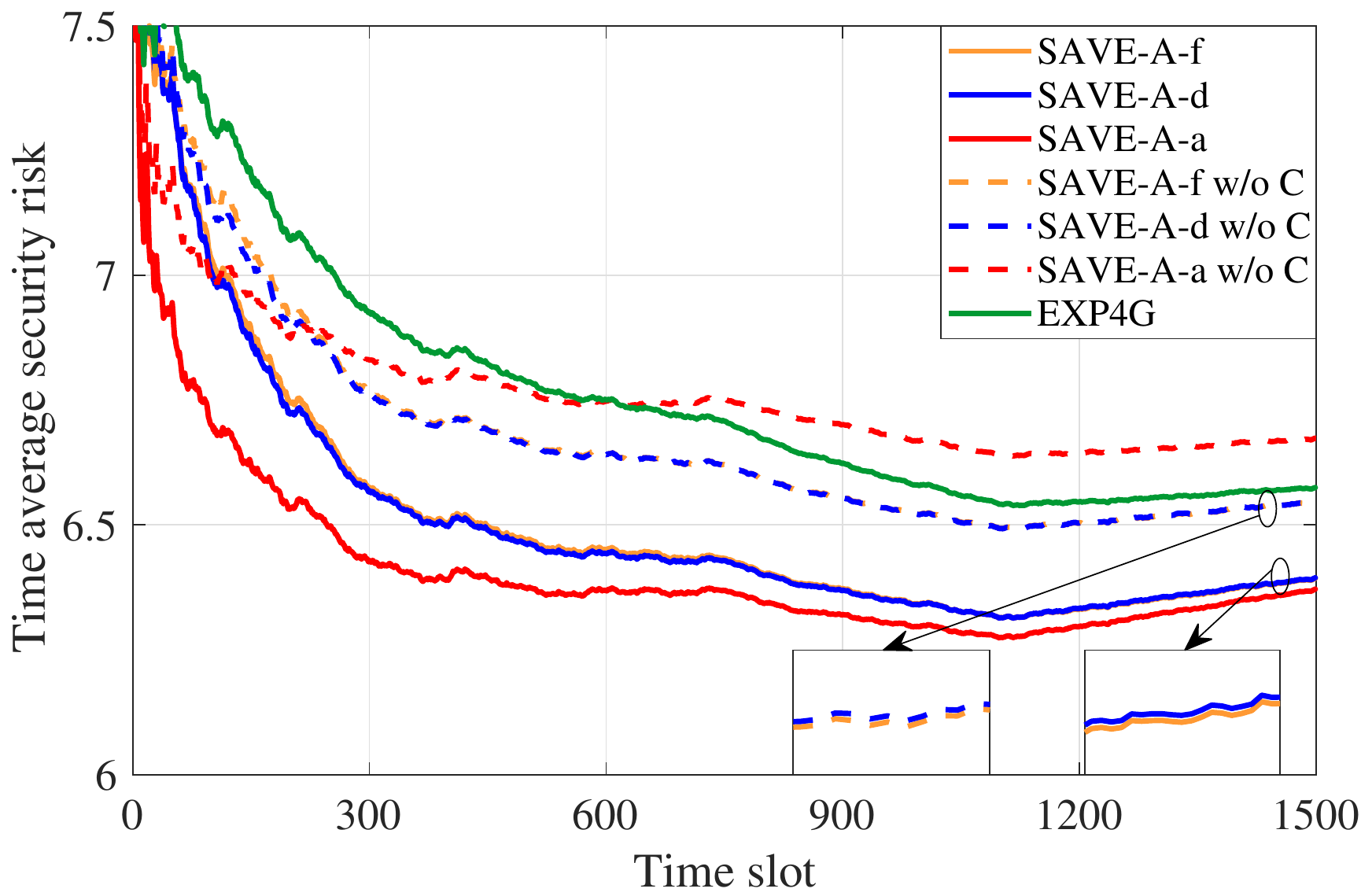}
		\\
		(a)& (b)&(c)
	\end{tabular}
	\caption{Real data tests: (a) SAVE-S for stochastic jamming attacks; (b) SAVE-A for stochastic attacks; (c) SAVE-A for adversarial jamming attacks.} \label{fig.real_data}
\end{figure*}

Before considering side observations, it is beneficial to demonstrate the effectiveness of SAVE-S and SAVE-A under a standard bandit setting without  cooperation among devices. For fairness, we consider the stochastic jammers, where SAVE-S and SAVE-A both enjoy theoretical guarantees. The jamming probability of edge servers is listed in the left part of Table \ref{tab.Sleep1}. The security risks of SAVE-S and SAVE-A with different stepsizes, and the EXP4G in \cite{li2018} slightly modified to suit our scenario, are plotted in Fig. \ref{fig.fStoSleepNoCoSingleDevice_magnify}. 
The proposed SAVE-S and SAVE-A with different stepsizes all outperform EXP4G except for SAVE-A with adaptive stepsizes. It can be seen that, SAVE-S outperforms SAVE-A, which can be explained because the number of arms in SAVE-S is smaller and thus fewer exploitation are needed. 

To showcase the improvement attained from side observations, instead of receiving security risks directly from other devices, an alternative method is adopted where the side observations are obtained probabilistically. Specifically, for the first $200$ slots, the probability of revealing each edge server's risk is listed in the white part of Table \ref{tab.Co1}; and for the rest of the slots, it is listed in the blue part of Table \ref{tab.Co1}. 

\begin{table}[t]\addtolength{\tabcolsep}{0pt}
	\centering \caption{Side Observation (SO) Probability}\label{tab.Co1} 
	\vspace{-0.1cm}
	\begin{tabular}{ c *{5}{|c}||c *{4}{|c}}
		\hline
		Server   & S1 & 	S2  & S3 & S4& S5& S1 & 	S2  & S3 & S4 & S5 \\ \hline
		Is SO & 1 &  1  & 0 & 0 & 1 & \cellcolor{blue!15} 0.3 & 	\cellcolor{blue!15}1  & \cellcolor{blue!15}0.6 & \cellcolor{blue!15}0.5 & \cellcolor{blue!15}0\\
		\hline
		Not SO & 0 & 0  & 1 & 1 & 0 & \cellcolor{blue!15}0.7 & \cellcolor{blue!15}0  & \cellcolor{blue!15}0.4 & \cellcolor{blue!15}0.5 & \cellcolor{blue!15}1\\
		\hline
	\end{tabular} 
	\vspace{-0.2cm}
\end{table}


\begin{table}[t]\addtolength{\tabcolsep}{0pt}
	\centering \caption{Server On/Off Probability}\label{tab.Sleep1} 
	\vspace{-0.1cm}
	\begin{tabular}{ c *{5}{|c}||c *{4}{|c}}
		\hline
		Server   & S1 & 	S2  & S3 & S4& S5& S1 & 	S2  & S3 & S4
		& S5 \\ \hline
		On & 0.7 & 	0.8  & 0.9 & 1 & 0.6 & \cellcolor{blue!15}0.3 & 	\cellcolor{blue!15}1  & \cellcolor{blue!15}0.6 & \cellcolor{blue!15}0.5 & \cellcolor{blue!15}0.8 \\
		\hline
		Off & 0.3 & 0.2  & 0.1 & 0 & 0.4 &\cellcolor{blue!15}0.7 & \cellcolor{blue!15}0  & \cellcolor{blue!15}0.4 & \cellcolor{blue!15}0.5 & \cellcolor{blue!15}0.2\\
		\hline
	\end{tabular} 
	\vspace{-0.2cm}
\end{table}

\begin{table}[t]\addtolength{\tabcolsep}{0pt}
	\centering 
	\caption{Server On/Off Probability}\label{tab.CorRealData} \vspace{-0.1cm}
	\begin{tabular}{ c *{6}{|c}}
		\hline
		Link   & 1 to 2 & 1 to 3  & 2 to 1 & 2 to 3
		& 3 to 1 & 3 to 2 \\ \hline
		Cooperation & 0.1 & 0.4  & 0 & 0.5 & 0.6&0.3 \\
		\hline
		No Cooperation & 0.9 & 0.6  & 1 & 0.5 & 0.4& 0.7\\
		\hline
	\end{tabular} 
	\vspace{-0.2cm}
\end{table}

\textbf{No jamming:} In Figs. \ref{fig.cor} (a1) and (a2), the SAVE-S and SAVE-A are compared with their corresponding non-cooperative variants, respectively. Clearly, the cooperation improves the regret of SAVE-S by a percentage of $54.49\%$, $53.08\%$ and $47.47\%$ for fixed, diminishing, and adaptive stepsizes, respectively. Regarding SAVE-A, the improvement thanks to cooperation is $50.22\%$, $52.17\%$ and $50.03\%$  along with fixed, diminishing, and adaptive stepsizes. As confirmed by simulations, the cooperation significantly improves the regret performance of SAVE-S and SAVE-A; e.g., the cooperation values are $\lambda = 0.5074$ and $\lambda = 0.4985$, respectively.

\textbf{Stochastic jamming attacks:} Suppose that the servers are under attack by stochastic jammers, where the on-off probability of edge servers is listed in Table \ref{tab.Sleep1}. The simulations shown in Figs. \ref{fig.cor} (b1) and (b2) illustrate that the cooperation improves the regret of SAVE-S by a percentage of $28.53\%$, $16.34\%$ and $29.07\%$ for fixed, diminishing, and adaptive stepsizes, respectively. In the present test, the cooperation value is $\lambda = 0.4959$. Regarding SAVE-A, the improvement provided by cooperation is $37.33\%$, $34.31\%$, and $45.84\%$ when fixed, diminishing, and adaptive stepsizes are adopted, where the cooperation value is $\lambda = 0.5196$.

\textbf{Adversarial jamming attacks:}
With adversarial jammers, the difference in data generation is that in the first $200$ slots the probability of server being jammed follows the left part of Table \ref{tab.Sleep1}, while the rest of the time the probability follows the right part of Table \ref{tab.Sleep1}. Fig. \ref{fig.cor} (c1) depicts the performance of SAVE-S, which is not guaranteed to obtain sublinear regret. On the other hand, Fig. \ref{fig.cor} (c2) compares the performance of SAVE-A with different stepsizes. The cooperation improves the regret of SAVE-A by a percentage of $21.82\%$, $30.24\%$ and $37.75\%$ for fixed, diminishing, and adaptive stepsizes respectively, along with $\lambda = 0.5412$.

\subsection{Real data tests}

The performance of SAVE-S and SAVE-A is further tested on a real world dataset \cite{noor2016}, which contains the customers' feedback on cloud service from public websites such as Cloud Hosting Reviews, where more than 10,000 feedback information form nearly 7,000 consumers over 113 cloud services are collected. The consumers' feedback is the service trust (using risk $\bm{\gamma}_{1,t}$ and $\bm{\gamma}_{2,t}$ for negative trust). In this test, we consider $K=3$ edge servers and $J = 3$ IoT devices. The information sharing probability values are listed in Table \ref{tab.CorRealData}.

The SAVE-S for combating stochastic jammers is shown in Fig. \ref{fig.real_data} (a). SAVE-S with different stepsizes outperform EXP4G with diminishing stepsize exposing the smallest average security risk. In this case, cooperation slightly improves the regret performance at a percentage of $20.54\%$, $23.52\%$ and $19.38\%$ for fixed, diminishing, and adaptive stepsizes. The cooperation value in this case is $\lambda = 0.7123$. The SAVE-A under stochastic jamming is shown in Fig. \ref{fig.real_data} (b). Even without cooperation, SAVE-A with fixed stepsize and diminishing stepsize outperforms EXP4G. Meanwhile, the cooperation improves the regret $58.87\%$, $53.70\%$ and $56.96\%$ for fixed, diminishing, and adaptive stepsizes respectively, together with a cooperation value $\lambda = 0.6301$. 

\begin{figure}[t]
	\vspace{-0.1cm}
	\centering
	\includegraphics[height=0.28\textwidth]{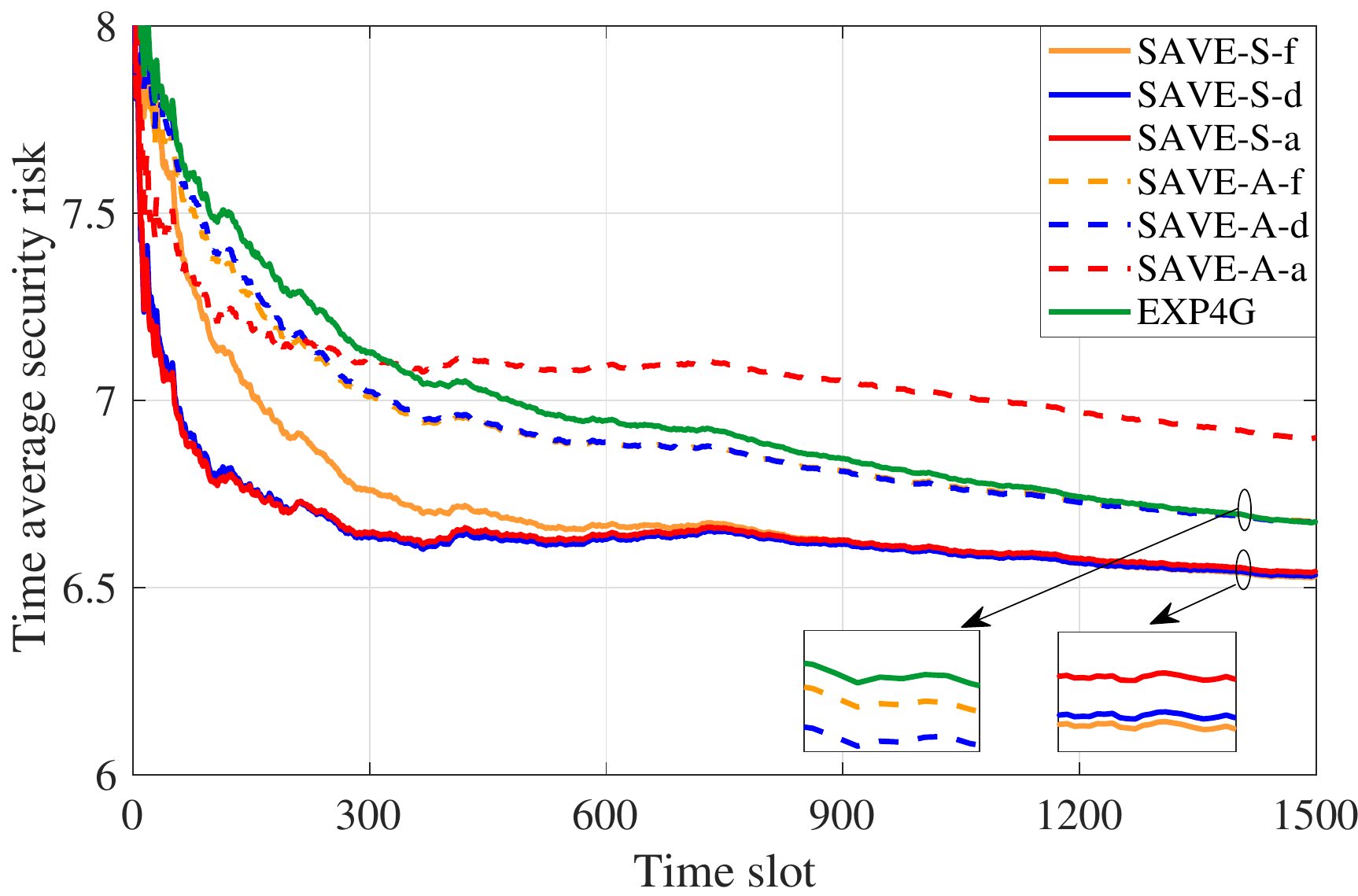}
	\vspace{-0.2cm}
	\caption{A comparison of SAVE-S and SAVE-A using real data.}
	\label{fig.fSA_comp_StoSleep_realdata_magnify}
	\vspace{-0.2cm}
\end{figure}

Regarding adversarial jammers, Fig. \ref{fig.real_data} (c) shows how cooperation improves the regret of SAVE-A by a percentage of $50.18\%$, $49.52\%$ and $63.83\%$ for fixed, diminishing, and adaptive stepsizes, respectively; while the cooperation value is $\lambda = 0.6265$.
 We further compare the time averaged security risk in SAVE-S with SAVE-A in Fig. \ref{fig.fSA_comp_StoSleep_realdata_magnify} under stochastic jamming attacks without cooperations. It is seen that SAVE-S with different stepsizes all outperform SAVE-A.

\section{Conclusions}
Online security-aware edge computing under jamming attacks was studied in this paper. Different from the common ways such as expanding spectrum or increasing transmission power, we developed schemes suitable for low-power IoT devices. Specifically, we developed our SAVE-S and SAVE-A algorithms to offload tasks to the most reliable server under stochastic and adversarial jamming attacks, respectively. Sublinear regret for both schemes was analytically established. 
Performance of SAVE-S and SAVE-A was further enhanced via cooperation among devices. 
Analysis confirmed the value of cooperation via the marked improvement on the regret bound. Numerical tests on both synthetic and real datasets demonstrated the effectiveness of the proposed schemes.

\appendix

\subsection{Proof of Theorem \ref{theo.1}}\label{appendix.stheo}
The proof starts with a simple case, where for a single device $j$, we have ${\cal K}_t^j = \tilde{\cal K}, ~\forall t$. 
\begin{lemma}\label{lemma.2}
	If ${\cal K}_t^j = \tilde{\cal K}, ~\forall t$, then SAVE-S guarantees that
	\begin{align}
	 \sum_{t=1}^{T}\sum_{k=1}^{K}  p_t^j (k) & {r}_{t}^{j}(k)  -  \sum_{t=1}^{T}  r_t^{j} (k^*) \!\leq\! \sum_{t=1}^{T} \bigg( \mu_t^j\!+\!\frac{\eta_t^j}{2} \bigg) Q_t^j\! +\! \frac{\ln K}{\eta_{T+1}^j}
	\end{align}
where $k^* $ denotes the best fixed server among $\tilde{\cal K}$ in hindsight.
\end{lemma}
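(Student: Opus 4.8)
The plan is to adapt the standard exponential-weights (EXP3-type) argument to the three features of SAVE-S: the biased implicit-exploration estimator, the graph-encoded side observations, and the time-varying stepsizes $\eta_t^j$. Write $W_t:=\sum_{k}\exp(-\eta_t^j\hat R_{t-1}^j(k))$, so that on the fixed set $\tilde{\cal K}$ one has $p_t^j(k)=w_t^j(k)/W_t$. The first step is a per-slot potential bound evaluated at a \emph{frozen} stepsize $\eta_t^j$: combining $e^{-x}\le 1-x+x^2/2$ (valid since $\eta_t^j\hat r_t^j(k)\ge 0$) with $\ln(1+u)\le u$ yields
\begin{equation*}
\frac{1}{\eta_t^j}\ln\!\Big(\sum_k e^{-\eta_t^j\hat R_t^j(k)}\Big)-\frac{1}{\eta_t^j}\ln\!\Big(\sum_k e^{-\eta_t^j\hat R_{t-1}^j(k)}\Big)\le -\sum_k p_t^j(k)\hat r_t^j(k)+\frac{\eta_t^j}{2}\sum_k p_t^j(k)\big(\hat r_t^j(k)\big)^2 .
\end{equation*}

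Second, I would handle the stepsize variation by telescoping a \emph{normalized} potential $A_t:=\tfrac{1}{\eta_t^j}\ln\!\big(\tfrac1K\sum_k e^{-\eta_t^j\hat R_{t-1}^j(k)}\big)$. The key observation is that $\eta\mapsto\tfrac1\eta\ln(\tfrac1K\sum_k e^{-\eta x_k})$ is non-decreasing in $\eta$; since $Q_\tau^j\ge 0$ forces $\eta_t^j$ to be non-increasing in every stepsize regime of Corollaries \ref{col.fixstepsize}--\ref{col.adastepsize}, the update from $\eta_t^j$ to $\eta_{t+1}^j$ only helps, giving $A_{t+1}\le \tfrac{1}{\eta_t^j}\ln(\tfrac1K\sum_k e^{-\eta_t^j\hat R_t^j(k)})$. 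Together with the per-slot bound this telescopes to $A_{T+1}-A_1\le -\sum_t\sum_k p_t^j(k)\hat r_t^j(k)+\tfrac12\sum_t\eta_t^j\sum_k p_t^j(k)(\hat r_t^j(k))^2$. Using $A_1=0$ and the comparator lower bound $A_{T+1}\ge -\hat R_T^j(k^*)-\ln K/\eta_{T+1}^j$ (retain only $k^*$ in the log-sum) produces the ``regret-in-estimates'' inequality
\begin{equation*}
\sum_t\sum_k p_t^j(k)\hat r_t^j(k)-\sum_t\hat r_t^j(k^*)\le \frac{\ln K}{\eta_{T+1}^j}+\frac12\sum_t\eta_t^j\sum_k p_t^j(k)\big(\hat r_t^j(k)\big)^2 .
\end{equation*}

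Third, I would pass to expectations and convert estimates into true risks using the moment identities already derived in the excerpt. With $P_t(k):=\sum_{(m,k)\in{\cal G}_t^j}p_t^j(m)$, the bias satisfies $\sum_k p_t^j(k)\big(r_t^j(k)-\mathbb{E}_t[\hat r_t^j(k)]\big)=\mu_t^j\sum_k \tfrac{p_t^j(k)r_t^j(k)}{\mu_t^j+P_t(k)}\le\mu_t^j Q_t^j$ by (as1) and the definition of $Q_t^j$; the underestimation $\mathbb{E}[\hat r_t^j(k^*)]\le r_t^j(k^*)$ lets me replace $\hat r_t^j(k^*)$ by $r_t^j(k^*)$ in the correct direction; and the second-moment bound $\mathbb{E}_t[(\hat r_t^j(k))^2]\le 1/(\mu_t^j+P_t(k))$ gives $\tfrac{\eta_t^j}{2}\sum_k p_t^j(k)\mathbb{E}_t[(\hat r_t^j(k))^2]\le\tfrac{\eta_t^j}{2}Q_t^j$. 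Collecting the $\mu_t^j Q_t^j$ and $\tfrac{\eta_t^j}{2}Q_t^j$ contributions produces exactly $\sum_t(\mu_t^j+\tfrac{\eta_t^j}{2})Q_t^j+\ln K/\eta_{T+1}^j$, establishing the lemma.

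I expect the main obstacle to be the time-varying-stepsize telescoping in the second step: one must verify the monotonicity of the normalized log-sum-exp potential in $\eta$ and confirm that $\eta_t^j$ is indeed non-increasing (which follows from $Q_\tau^j\ge0$). Once the potential is set up correctly, the bias/variance bookkeeping is routine, since the required conditional moments of $\hat r_t^j$ have already been computed immediately before the theorem.
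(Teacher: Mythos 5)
Your proposal is correct and follows essentially the same route as the paper: a frozen-stepsize potential bound via $e^{-x}\le 1-x+x^2/2$ and $\ln(1+u)\le u$, a correction term for the stepsize change, and then the bias/second-moment bounds on $\hat r_t^j$ to pass to true risks. Your two deviations are cosmetic rather than substantive: the monotonicity in $\eta$ of the normalized log-sum-exp is exactly what the paper proves by applying Jensen to $u\mapsto u^{\eta_{t+1}^j/\eta_t^j}$ to compare $W_{t+1}^j$ with $\tilde W_{t+1}^j$, and retaining only $k^*$ in the final log-sum is the degenerate case of the paper's fixed-distribution comparator.
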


\begin{proof}

Upon defining auxiliary variables $W^j_t \!:=\! \sum_{k=1}^{K} \exp \big[ -\eta_{t}^j \hat{R}_{t-1}^j (k) \big]$, and $\tilde{W}^j_t \!:=\! \sum_{k=1}^{K} \exp \big[ \!-\eta_{t-1}^j \hat{R}_{t-1}^j (k)\big]$,
we have 
\begin{align}\label{eq.39}
	&\frac{1}{\eta_t^j} \ln \bigg( \frac{\tilde{W}^j_{t+1}}{W^j_t}\bigg) = \frac{1}{\eta_t^j}\ln \bigg( \frac{\sum_{k=1}^{K} \exp \big[ -\eta_{t}^j \hat{R}_{t}^j (k)\big]}{W^j_t}\bigg) \nonumber \\
	=&\frac{1}{\eta_t^j}\ln \bigg(\sum_{k=1}^{K} \frac{ w_t^j(k)\exp \big[ -\eta_{t}^j \hat{r}_{t}^j (k)\big]}{W^j_t}\bigg) \nonumber \\
	=& \frac{1}{\eta_t^j}\ln \bigg(\sum_{k=1}^{K}  p_t^j(k)\exp \big[ -\eta_{t}^j \hat{r}_{t}^j (k)\big]\bigg)\nonumber\\
		\stackrel{(a)}{\leq} &\frac{1}{\eta_t^j}\ln \bigg(\sum_{k=1}^{K}  p_t^j(k)  \Big(1-\eta_t^j \hat{r}_{t}^j (k) +\frac{1}{2} \big(\eta_t^j \hat{r}_{t}^j (k) \big)^2 \Big)   \bigg)
\end{align}
where (a) is due to $e^{-x} \leq 1- x +\frac{x^2}{2},~\forall x \geq 0$. The bound in \eqref{eq.39} can be further bounded as
\begin{align}
&\frac{1}{\eta_t^j}\ln \bigg(\sum_{k=1}^{K}  p_t^j(k)  \Big(1-\eta_t^j \hat{r}_{t}^j (k) +\frac{1}{2} \big(\eta_t^j \hat{r}_{t}^j (k) \big)^2 \Big)   \bigg) \nonumber \\
	 =& \frac{1}{\eta_t^j}\ln \bigg(1  -\eta_t^j \sum_{k=1}^{K} \hat{r}_{t}^j (k)p_t^j(k) +\frac{\big(\eta_t^j \big)^2}{2} \sum_{k=1}^{K} p_t^j(k) \big( \hat{r}_{t}^j (k) \big)^2   \bigg) \nonumber \\
	\stackrel{(b)}{\leq} &- \sum_{k=1}^{K} p_t^j(k) \hat{r}_{t}^j (k) +\frac{\eta_t^j }{2} \sum_{k=1}^{K} p_t^j(k) \big(\hat{r}_{t}^j (k) \big)^2
\end{align}
where (b) follows from $\ln (1-x)\leq -x, ~\forall x \geq 0$. Therefore, we have
\begin{equation}\label{eq.step1}
	\frac{1}{\eta_t^j} \ln \bigg( \frac{\tilde{W}^j_{t+1}}{W^j_t}\bigg)\leq - \sum_{k=1}^{K} p_t^j(k) \hat{r}_{t}^j (k) +\frac{\eta_t^j }{2} \sum_{k=1}^{K} p_t^j(k) \big(\hat{r}_{t}^j (k) \big)^2.
\end{equation}

Rearranging \eqref{eq.step1}, we arrive at
\begin{align}\label{eq.step2}
\!\!&	\sum_{k=1}^{K}  p_t^j(k) \hat{r}_{t}^j (k)\leq \frac{\eta_t^j }{2} \sum_{k=1}^{K} p_t^j(k) \big(\hat{r}_{t}^j (k) \big)^2 \!+\! \frac{1}{\eta_t^j} \ln \frac{W^j_t}{\tilde{W}^j_{t+1}}\\ 
\!\!=& \frac{\eta_t^j }{2}\! \sum_{k=1}^{K} p_t^j(k) \big(\hat{r}_{t}^j (k) \big)^2\!  
	\!+\!\bigg(\!\frac{\ln W^j_t}{\eta_t^j}\!-\!\frac{\ln W^j_{t+1}}{\eta_{t+1}^j}\!\bigg)\!\!+\!\!\bigg(\!\frac{\ln W^j_{t+1}}{\eta_{t+1}^j}\!-\!\frac{\ln \tilde{W}^j_{t+1}}{\eta_{t}^j}\!\bigg). \nonumber 
\end{align}
To bound $\frac{\ln W^j_{t+1}}{\eta_{t+1}^j} - \frac{\ln \tilde{W}^j_{t+1}}{\eta_{t}^j}  $, notice that
\begin{align}
\!\!W^j_{t+1}& \!=\! \sum_{k=1}^{K}\!\exp\! \big[\!-\eta_{t+1}^{j} \hat{r}_t^j (k) \big]\! \nonumber \\
	&\stackrel{(c)}{\leq}\! K\! \bigg( \sum_{k=1}^{K} \frac{1}{K} \exp[-\eta_{t}^{j} \hat{r}_t^j (k)]\!\bigg)^{{\eta_{t+1}^j}/{\eta_{t}^{j}}} \nonumber \\ 
	&= K^{({\eta_t^j - \eta_{t+1}^j})/{\eta_t^j}} \bigg(\sum_{k=1}^{K} \exp \big[ -\eta_{t}^{j} \hat{r}_t^j (k) \big] \bigg)^{{\eta_{t+1}^j}/{\eta_{t}^{j}}} \nonumber \\
	&= K^{({\eta_t^j - \eta_{t+1}^j})/{\eta_t^j}} (\tilde{W}^j_{t+1})^{{\eta_{t+1}^{j}}/{\eta_{t}^{j}}}
\end{align}
where (c) stems from $\eta_{t+1}^j \leq \eta_{t}^j$, and the concavity of $(\cdot)^{\eta_{t+1}^j/\eta_{t}^j}$. Taking logarithms on both sides, and rearranging terms leads to 
\begin{align}\label{eq.step3}
\frac{\ln W^j_{t+1}}{ \eta_{t+1}^j} - \frac{\ln \tilde{W}^j_{t+1}}{ \eta_{t}^j} \leq \bigg( \frac{1}{\eta_{t+1}^j} - \frac{1}{\eta_{t}^j}\bigg) \ln K.
\end{align}

Plugging \eqref{eq.step3} into \eqref{eq.step2} and summing up over $t=1,2,\ldots,T$, we arrive at
\begin{align}\label{eq.step4}
\!\!\!\!&\sum_{t=1}^{T} \sum_{k=1}^{K}  p_t^j(k) \hat{r}_{t}^j (k)\leq \sum_{t=1}^{T} \frac{\eta_t^j }{2} \sum_{k=1}^{K} p_t^j(k) \big(\hat{r}_{t}^j (k) \big)^2 \nonumber \\
\!\!\!\!\!	 &\qquad\qquad\qquad+ \bigg(\frac{1}{\eta_{T+1}^j}-\frac{1}{\eta_1^j} \bigg) \ln K + \frac{\ln W^j_1}{\eta_1^j} -  \frac{\ln W^j_{T+1}}{\eta_{T+1}^j} \nonumber \\
\!\!\!\!\!	\! & \stackrel{(d)}{=}\sum_{t=1}^{T} \!\frac{\eta_t^j }{2}\! \sum_{k=1}^{K}\! p_t^j(k)\!  \big(\hat{r}_{t}^j (k)\big)^2 \!+\! \bigg(\!\frac{1}{\eta_{T+1}^j}\!-\!\frac{1}{\eta_1^j} \!\bigg) \ln K \!-\! \frac{\ln W^j_{T+1}}{\eta_{T+1}^j}\!\!\!
\end{align}
where (d) follows from $W^j_1 = 1$. Since the estimators in \eqref{eq.est} can be written compactly using \eqref{eq.est3}, with $\tilde{p}_t^{j}(k):= \sum_{(m, k) \in {\cal G}_t^j} p_t^j(m)$, we have
\begin{align}\label{eq.1st}
	&\mathbb{E}\bigg[ \sum_{k=1}^{K}  p_t^j(k) \hat{r}_{t}^j (k) \bigg] = \sum_{k=1}^{K} p_t^j(k) r_t^j (k) \bigg(1  -\frac{\mu_t^j}{\mu_t^j+\tilde{p}_t^{j}(k)} \bigg) \nonumber \\
	&= \sum_{k=1}^{K} p_t^j(k) r_t^j (k) - \sum_{k=1}^{K} p_t^j(k) r_t^j (k) \frac{\mu_t^j}{\mu_t^j+\tilde{p}_t^{j}(k)}  \nonumber \\
	& \stackrel{(e)}{\geq} \sum_{k=1}^{K} p_t^j(k) r_t^j (k) - \mu_t^j Q_t^j 
\end{align}
where $\mathbb{E}$ is w.r.t. the probability that $r_t^j (k)$ is observed; and (e) follows from the assumption $r_t^j (k) \leq 1$ and the definition of $Q_t^j$. The mean square in \eqref{eq.step4} can be bounded as
\begin{align}\label{eq.2nd}
	&\mathbb{E}\bigg[ \sum_{k=1}^{K}  p_t^j(k) \big( \hat{r}_{t}^j (k) \big)^2 \bigg] \leq \sum_{k=1}^{K}  p_t^j(k) \tilde{p}_t^{j}(k) \frac{\big(r_t^j (k)\big)^2}{ \big( \mu_t^j+\tilde{p}_t^{j}(k)\big)^2} \nonumber \\
\leq &\sum_{k=1}^{K}  p_t^j(k)\frac{\big(r_t^j (k)\big)^2}{ \mu_t^j+\tilde{p}_t^{j}(k)} \leq \sum_{k=1}^{K}  p_t^j(k)\frac{1}{ \mu_t^j+\tilde{p}_t^{j}(k)} \leq Q_t^j.
\end{align}

For the third term in \eqref{eq.step4}, we have 
\begin{align}\label{eq.step5}
\!\!	& \mathbb{E} \bigg[ -\frac{\ln W^j_{T+1}}{\eta_{T+1}^j} \bigg] = \mathbb{E} \bigg[ -\frac{\ln \sum_{k=1}^{K}w_{T+1}^j (k)}{\eta_{T+1}^j} \bigg] \nonumber \\
 \!\!\!\stackrel{(f)}{\leq} &\mathbb{E} \bigg[ \!-\! \frac{\ln \sum_{k=1}^{K} p^j(k)w_{T+1}^k}{\eta_{T+1}^j} \bigg] \stackrel{(g)}{\leq} \mathbb{E} \bigg[ \!-\! \sum_{k=1}^{K} p^j(k)\frac{\ln  w_{T+1}^j(k)}{\eta_{T+1}^j} \bigg] \nonumber \\
 \!\!\!\!= &\mathbb{E} \bigg[\!\!-\!\sum_{k=1}^{K} p^j(k)\frac{ - \eta_{T+1}^j \hat{R}_{T+1}^j (k)}{\eta_{T+1}^j} \bigg] \!\!\stackrel{(h)}{\leq} \!\sum_{k=1}^{K}p^j(k)\!\sum_{t=1}^{T}\! r_t^j (k)\!\!
\end{align}
where (f) follows since $\ln (\cdot)$ is monotonically increasing and $\{p^j(k)\}$ is a fixed distribution; (g) is due to Jensen's inequality; and (h) follows since $\hat{r}_t^j(k)$ as well as $\hat{R}_t^j(k)$ are underestimators. Taking expectation on \eqref{eq.step4}, and combining \eqref{eq.1st}-\eqref{eq.step5}, we have
\begin{align}\label{eq.lemma2reg}
	\sum_{t=1}^T\sum_{k=1}^{K} & p_t^j(k) {r}_{t}^{j}(k) -\sum_{t=1}^{T} \sum_{k=1}^{K}p^j(k) r_t^j (k) \nonumber \\
	&\leq \sum_{t=1}^{T} \bigg( \mu_t^j+ \frac{\eta_t^j}{2} \bigg) Q_t^j + \bigg(\frac{1}{\eta_{T+1}^j}-\frac{1}{\eta_1^j} \bigg) \ln K
\end{align}
which completes the proof.
\end{proof}

Lemma \ref{lemma.2} bounds the regret when the active server set is time-invariant. 
Similar to \cite{kanade2009,li2018}, with the instantaneous regret of device $j$ defined as $V_t^j\big({\cal K}_t\big):= \sum_{k=1}^{K} {p}_t^{j}(k) r_t^j (k) - \Phi^{j*}({\cal K}_t^j)$,
the key step is to decouple the expected regret as
\begin{align}\label{eq.corollary1-1}
&\sum_{t=1}^{T}\mathbb{E}\Big[V_t^j\big({\cal K}_t\big)\Big] = \sum_{t=1}^{T}\sum_{\tilde{\cal K} \subseteq {\cal K}} \mathbb{P}({\cal K}_t^j = \tilde{\cal K}) \mathbb{E}\Big[ V_t^j\big({\cal K}_t\big) \big{|} {\cal K}_t^j = \tilde{\cal K}  \Big] \nonumber \\
& = \sum_{\tilde{\cal K} \subseteq {\cal K}} \mathbb{P}({\cal K}_t^j = \tilde{\cal K}) \sum_{t=1}^{T} \mathbb{E}\Big[ V_t^j\big(\tilde{\cal K}\big) \big{|} {\cal K}_t^j = \tilde{\cal K}  \Big] \nonumber \\
&\stackrel{(a)}{=} \sum_{\tilde{\cal K} \subseteq {\cal K}} \mathbb{P}({\cal K}_t^j = \tilde{\cal K}) \mathbb{E}\Bigg[\sum_{t=1}^{T} \bigg( \mu_t^j+ \frac{\eta_t^j}{2} \bigg) Q_t^j + \frac{\ln K}{\eta_{T+1}^j} \bigg{|} {\cal K}_t^j = \tilde{\cal K} \Bigg] \nonumber \\
&=\mathbb{E}\Bigg[\sum_{t=1}^{T} \bigg( \mu_t^j+ \frac{\eta_t^j}{2} \bigg) Q_t^j + \frac{\ln K}{\eta_{T+1}^j} \Bigg]
\end{align}
where in (a) we used the result of Lemma \ref{lemma.2}.

\subsection{Bound of $Q_t^j$ for stochastic jamming attacks}\label{appendix.sQ}
To evaluate the regret bound in Theorem \ref{theo.1}, we provide a bound on $Q_t^j$ in the following lemma.

\begin{lemma}\label{lemma.1}
	If $\mu_t^j \leq 1$ for every $t$, then $Q_t^j$ is bounded by
	\begin{align}
		\frac{1}{1+\mu_t^j}\leq Q_t^j \leq \alpha_t^j  + \sum_{k \in {\cal S}_t^j}  p_t^j(k) - \sum_{k \in {\cal S}_t^j}\frac{\mu_t^j p_t^j(k)}{2}
	\end{align}
	where $\alpha_t^j$ is the independence number of ${\cal G}_t^j$.
\end{lemma}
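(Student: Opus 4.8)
The plan is to bound $Q_t^j$ directly by exploiting the explicit structure of the side-observation graph ${\cal G}_t^j$ that is induced by the SAVE-S estimator, rather than invoking a generic graph-feedback lemma. Writing $\tilde p_t^j(k) := \sum_{(m,k)\in{\cal G}_t^j} p_t^j(m)$, I first read off from the two-case estimator \eqref{eq.est} (equivalently its compact form \eqref{eq.est3}) the key dichotomy: $\tilde p_t^j(k)=1$ whenever $k\in{\cal S}_t^j$, since a shared server is observed no matter which arm is played, so every node carries an edge into it; while $\tilde p_t^j(k)=p_t^j(k)$ whenever $k\notin{\cal S}_t^j$, since such a server is seen only when it is itself selected, so it carries only its self-loop. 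This split is the backbone of both inequalities.

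For the lower bound I would use that $\tilde p_t^j(k)\le 1$, being a partial sum of the probabilities $p_t^j(\cdot)$, so that $\mu_t^j+\tilde p_t^j(k)\le 1+\mu_t^j$ for every $k$. Summing $p_t^j(k)/(\mu_t^j+\tilde p_t^j(k))\ge p_t^j(k)/(1+\mu_t^j)$ over $k$ and using $\sum_k p_t^j(k)=1$ then yields $Q_t^j\ge 1/(1+\mu_t^j)$.

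For the upper bound I would split $Q_t^j$ along the same dichotomy. On the side-observation part each summand equals $p_t^j(k)/(1+\mu_t^j)$, and here the hypothesis $\mu_t^j\le 1$ enters through the elementary inequality $1/(1+\mu_t^j)\le 1-\mu_t^j/2$, which is equivalent to $\mu_t^j(1-\mu_t^j)\ge 0$; this gives $\sum_{k\in{\cal S}_t^j} p_t^j(k)/(1+\mu_t^j)\le \sum_{k\in{\cal S}_t^j} p_t^j(k)-\sum_{k\in{\cal S}_t^j}\mu_t^j p_t^j(k)/2$. On the complementary part each summand $p_t^j(k)/(\mu_t^j+p_t^j(k))$ is at most $1$, so that part is bounded by the number of non-shared servers, $K-|{\cal S}_t^j|$. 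The remaining step is to dominate this count by $\alpha_t^j$: since every $k\notin{\cal S}_t^j$ has no incoming edge other than its self-loop, no two non-shared servers are adjacent, so ${\cal K}\setminus{\cal S}_t^j$ is an independent set of ${\cal G}_t^j$ and therefore $K-|{\cal S}_t^j|\le\alpha_t^j$. Adding the two parts produces the claimed $Q_t^j\le\alpha_t^j+\sum_{k\in{\cal S}_t^j}p_t^j(k)-\sum_{k\in{\cal S}_t^j}\mu_t^j p_t^j(k)/2$.

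The only genuinely delicate point is the last one, namely recognizing that the non-shared servers form an independent set so that their cardinality is dominated by $\alpha_t^j$; everything else is bookkeeping built on the estimator structure. I also expect the $\mu_t^j\le 1$ hypothesis to be needed exactly where I used it, in the side-observation terms, and nowhere else, so I would flag that location when checking tightness of the bound.
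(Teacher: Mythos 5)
Your proof is correct and follows essentially the same route as the paper's: the same dichotomy $\tilde p_t^j(k)=1$ for $k\in{\cal S}_t^j$ versus $\tilde p_t^j(k)=p_t^j(k)$ otherwise, the same lower bound from $\tilde p_t^j(k)\le 1$, and the same upper bound via $1/(1+x)\le 1-x/2$ on the shared part and a per-term bound of $1$ on the complement. Your explicit observation that ${\cal K}\setminus{\cal S}_t^j$ is an independent set, so its cardinality is dominated by $\alpha_t^j$, is a step the paper leaves implicit (it only records $\alpha_t^j=K-|{\cal S}_t^j|$ after the proof), so no gap.
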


\begin{proof}
Recall the definition of $Q_t^j$ in \eqref{eq.def-q}.
If $k \in {\cal S}_t^j$, we have ${p_t^j(k)}/\big(\mu_t^j+\sum_{(m, k) \in {\cal G}_t^j} p_t^j(m) \big) = p_t^j(k)/(1+\mu_t^j)$, while for $k \notin {\cal S}_t^j$, we have  
${p_t^j(k)}/\big(\mu_t^j+\sum_{(m, k) \in {\cal G}_t^j} p_t^j(m) \big) = p_t^j(k)/(p_t^j(k)+\mu_t^j)$. 
First, we readily find that
\begin{equation}
\!Q_t^j\!=\!\sum_{k=1}^{K}\frac{p_t^j(k)}{\mu_t^j+\sum_{(m, k) \in {\cal G}_t^j} p_t^j(m)} \geq \sum_{k=1}^{K}\frac{p_t^j(k)}{\mu_t^j+1}\!=\!\frac{1}{1+\mu_t^j}.\!\!\!
\end{equation}

On the other hand, for $k \in {\cal S}_t^j$, it holds that ${p_t^j(k)}/\big(\mu_t^j+\sum_{(m, k) \in {\cal G}_t^j} p_t^j(m) \big) = p_t^j(k)/(1+\mu_t^j)$; while for $k \notin {\cal S}_t^j$, we have  
${p_t^j(k)}/\big(\mu_t^j+\sum_{(m, k) \in {\cal G}_t^j} p_t^j(m) \big) = p_t^j(k)/(p_t^j(k)+\mu_t^j)$. Hence,
\begin{align}
	Q_t^j  &= \sum_{k \in {\cal S}_t^j} \frac{p_t^j(k)}{1+\mu_t^j} + \sum_{k \notin {\cal S}_t^j} \frac{p_t^j(k)}{p_t^j(k)+\mu_t^j} \nonumber \\
	& \stackrel{(a)}{\leq} \sum_{k \in {\cal S}_t^j} p_t^j(k) \big(1- \frac{\mu_t^j}{2}\big) + \sum_{k \notin {\cal S}_t^j} \frac{p_t^j(k)}{p_t^j(k) + \mu_t^j} \nonumber \\
	&\leq \alpha_t^j  + \sum_{k \in {\cal S}_t^j}  p_t^j(k) - \sum_{k \in {\cal S}_t^j}\frac{\mu_t^j p_t^j(k)}{2}
\end{align}
where (a) uses the inequality $\frac{1}{1+x}\leq 1-\frac{x}{2}, \forall x \in [0,1]$.
\end{proof}

Lemma \ref{lemma.1} can be adopted to bound the cooperation value $\lambda^j$. Since for the considered graph we have $\alpha_t^j = K - |{\cal S}_t^j|$, the upper bound of $Q_t^j$ in Lemma \ref{lemma.1} can be rewritten as $Q_t^j \leq \min \{ K, K - |{\cal S}_t^j|+1\}$. Then plugging it into the definition of $\lambda^j$, and using the fact that $\delta \leq K$, we have
\begin{align}
	\lambda^j \leq \sqrt{\frac{1}{T} + \frac{1}{KT}\sum_{t=1}^T \min \left\{K, K+1 - |{\cal S}_t^j| \right\}}.
\end{align}
Summing over $j$, yields the upper bound of $\lambda$.

\subsection{Proof of Corollary \ref{col.fixstepsize}}\label{appendix.sfdstep}
Without cooperation, it clearly holds that
\begin{equation}
	Q_t^j = \sum_{k=1}^K \frac{p_t^j(k)}{p_t^j(k)+ \mu_t^j} \leq  K.
\end{equation}
For $\eta_t^j =\sqrt{\frac{\ln K}{KT}}$ and $\mu_t^j = \frac{\eta_t^j}{2}$, \eqref{eq.theo1} becomes
\begin{align}
\mathbb{E} \big[\text{Reg}_T^j \big] \leq \eta_{T+1}^j KT + \frac{\ln K}{\eta_{T+1}^j} = 2\sqrt{TK\ln K}.
\end{align}

On the other hand, if $\eta_t^j =\sqrt{\frac{\ln K}{2Kt}}$ and $\mu_t^j = \frac{\eta_t^j}{2}, \forall t$, and the independence between $\eta_t^j$ and $Q_t^j$ implies
\begin{align}
\mathbb{E} \big[\text{Reg}_T^j \big] \leq K\sum_{t=1}^{T}\eta_t^j + \frac{\ln K}{\eta_{T+1}^j} \leq 2\sqrt{2TK\ln K},
\end{align}
where the inequality follows since $\sum_{t=1}^{T} 1/\sqrt{t} \leq 2\sqrt{T}$.

\vspace{-0.4cm}

\subsection{Proof of Corollary \ref{col.adastepsize}}\label{appendix.sadastep}

The proof builds on the following lemma.
\begin{lemma}\label{lemma.3}
	With $Q_1, Q_2, \cdots, Q_T$ and $K$ denoting positive real numbers, the following inequality holds 
	\begin{align}
	\sum_{t=1}^{T} \frac{Q_t}{2\sqrt{\delta + \sum_{\tau=1}^{t} Q_\tau} } 
	\leq \sqrt{\delta + \sum_{t=1}^{T} Q_t} - \sqrt{\delta}.
	\end{align}
\end{lemma}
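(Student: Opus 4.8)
The plan is to recognize this as the standard ``AdaGrad-type'' telescoping inequality and reduce it to a term-by-term comparison. First I would introduce the partial sums
\begin{equation}
S_t := \delta + \sum_{\tau=1}^{t} Q_\tau, \qquad S_0 := \delta,
\end{equation}
so that $Q_t = S_t - S_{t-1}$ and the left-hand side becomes $\sum_{t=1}^{T} (S_t - S_{t-1})/(2\sqrt{S_t})$. The key observation is that the right-hand side $\sqrt{\delta + \sum_{t=1}^{T} Q_t} - \sqrt{\delta} = \sqrt{S_T} - \sqrt{S_0}$ is itself a telescoping sum $\sum_{t=1}^{T}(\sqrt{S_t} - \sqrt{S_{t-1}})$, so it suffices to establish the per-term inequality $(S_t - S_{t-1})/(2\sqrt{S_t}) \leq \sqrt{S_t} - \sqrt{S_{t-1}}$ for each $t$ and then sum.

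To prove the per-term bound I would rationalize the difference of square roots, writing
\begin{equation}
\sqrt{S_t} - \sqrt{S_{t-1}} = \frac{S_t - S_{t-1}}{\sqrt{S_t} + \sqrt{S_{t-1}}}.
\end{equation}
Because each $Q_t$ is a positive real, the partial sums are nondecreasing, $S_{t-1} \leq S_t$, hence $\sqrt{S_{t-1}} \leq \sqrt{S_t}$ and consequently $\sqrt{S_t} + \sqrt{S_{t-1}} \leq 2\sqrt{S_t}$. Since the numerator $S_t - S_{t-1} = Q_t \geq 0$, replacing the denominator by the larger quantity $2\sqrt{S_t}$ can only decrease the fraction, which yields exactly $(S_t - S_{t-1})/(2\sqrt{S_t}) \leq \sqrt{S_t} - \sqrt{S_{t-1}}$. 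Summing this over $t = 1, \ldots, T$ and telescoping the right-hand side completes the argument.

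Honestly, there is no serious obstacle here; the whole content is the substitution $S_t = \delta + \sum_{\tau \le t} Q_\tau$ followed by the elementary bound on $\sqrt{S_t}+\sqrt{S_{t-1}}$. The only points worth stating explicitly are that positivity of the $Q_t$ guarantees the monotonicity $S_{t-1}\le S_t$ needed for the denominator comparison, and that the denominator in the summand uses the \emph{inclusive} partial sum $\sum_{\tau=1}^{t}Q_\tau$ (i.e.\ $S_t$ rather than $S_{t-1}$), which is precisely what makes the telescoping close without leftover terms.
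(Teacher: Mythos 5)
Your proof is correct and follows essentially the same route as the paper: both reduce the claim to the per-term bound $Q_t/(2\sqrt{S_t}) \leq \sqrt{S_t}-\sqrt{S_{t-1}}$ and telescope. The only cosmetic difference is that you justify the per-term step by rationalizing $\sqrt{S_t}-\sqrt{S_{t-1}}$ and bounding the denominator, while the paper invokes the equivalent elementary inequality $x/2 \leq 1-\sqrt{1-x}$ for $x\leq 1$.
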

\begin{proof}
	For $x\leq 1$, we have the inequality $\frac{x}{2} \leq 1 - \sqrt{1-x}$.
	Replacing $x$ with $Q_t / \big( \delta + \sum_{\tau=1}^{t} Q_\tau  \big)\leq 1$, we have 
	\begin{equation}
	\frac{Q_t}{2\big( \delta + \sum_{\tau=1}^{t} Q_\tau  \big)} \leq 1 - \sqrt{1-\frac{Q_t}{ \delta + \sum_{\tau=1}^{t} Q_\tau }}.
	\end{equation}
	Then multiplying both sides with $ \!\sqrt{\delta \!+\! \sum_{\tau=1}^{t} \!Q_\tau}$, we arrive at 
	\begin{equation}
	\frac{Q_t}{2\sqrt{ \delta\! +\! \sum_{\tau=1}^{t} Q_\tau}} \!\leq\! \sqrt{\delta\! +\! \sum_{\tau=1}^{t} Q_\tau} \!-\! \sqrt{{\delta \!+\! \sum_{\tau=1}^{t-1} Q_\tau}}.
	\end{equation}
	Taking summation over $T$, completes the proof.
\end{proof}

We are ready to prove the corollary. For a specific realization of $Q_t^j$, upon choosing $\eta_t^j = \sqrt{ (\ln K) / \big( K+ \sum_{\tau = 1}^{t-1}Q_\tau^j\big)}$, and $\mu_t^j =\eta_t^j/2$, we arrive at
\begin{align}\label{eq.kocak-fault}
	& ~~~~\sum_{t=1}^{T} \bigg( \mu_t^j+ \frac{\eta_t^j}{2} \bigg) Q_t^j = \sum_{t=1}^{T} \frac{Q_{t}^j\sqrt{ \ln K}}{ \sqrt{K + \sum_{\tau = 1}^{t-1}Q_\tau^j  }}  \nonumber \\
	& \leq  \sum_{t=1}^{T} \frac{Q_{t}^j \sqrt{\ln K}}{ \sqrt{K -Q_t^j + \sum_{\tau = 1}^{t}Q_\tau^j  }} \stackrel{(a)}{\leq} \sum_{t=1}^{T} \frac{Q_{t}^j \sqrt{\ln K}}{ \sqrt{\delta + \sum_{\tau = 1}^{t}Q_\tau^j  }} \nonumber \\
	& \stackrel{(b)}{\leq} \sqrt{ \bigg(\delta + \sum_{t=1}^{T} Q_t \bigg) \ln K} 
\end{align}
where (a) uses $\delta := \min_t \{ K- Q_t^j \}$ which is strictly greater than $0$ according to Lemma \ref{lemma.1}; and (b) follows from Lemma \ref{lemma.3}. Then, it is easy to see that
\begin{align}\label{eq.sreg_aux1}
	\sum_{t=1}^{T} \bigg( \mu_t^j+ \frac{\eta_t^j}{2} \bigg) Q_t^j + \frac{\ln K}{\eta_{T+1}^j} \leq 2\sqrt{ \bigg(\delta + \sum_{t=1}^{T} Q_t \bigg) \ln K}
\end{align}
Taking $\mathbb{E}$ w.r.t. the servers on/off probability on \eqref{eq.sreg_aux1}, we complete the proof. The bound \eqref{eq.sreg_aux1} can be approximated by $2\sqrt{\sum_{t=1}^{T} Q_t \ln K}$, since $\delta \ln K$ is not the dominant term.

\subsection{Proof of Lemma \ref{lemma.reformulation}}\label{appendix.sreform}
To show that $\big(\mathbf{q}_t^j \big)^\top \check{\mathbf{r}}_t^j = \big(\mathbf{p}_t^j\big)^\top \mathbf{r}_t^j$, it suffices to prove $ \big(\mathbf{q}_t^j\big)^\top \bm{\Gamma}\big({\cal K}_t^j\big) = \big(\mathbf{p}_t^j \big)^\top $. The key is the special structure of $\bm{\Gamma}\big({\cal K}_t^j\big) \in \{0,1\}^{\check{K}\times K}$. For $k \notin {\cal K}_t^j$, the $k$-th column is all $0$. And for $k \in {\cal K}_t^j$, the $k$-th column has $\check{K}/K$ entries equal to $1$ and other entries $0$. Besides, each row of $\bm{\Gamma}\big({\cal K}_t^j\big)$ has only one non-zero entry since each server list only has one output given ${\cal K}_t^j$. Without loss of generality, let rows $(m-1)\check{K}/K +1,\ldots \check{K}/K$ of $\bm{\Gamma}\big({\cal K}_t^j\big)$ be of the form $[0,\cdots,1,\cdots,0]$ with the $m$-th entry being $1$. Then $ \big(\mathbf{q}_t^j\big)^\top \bm{\Gamma}\big({\cal K}_t^j\big)\!=\!\big(\mathbf{p}_t^j \big)^{\!\top}$ becomes $\sum_{k= (m-1)\check{K}/K+1}^{m\check{K}/K} q_t^j(k) = p_t^j(m), \forall m \in {\cal K}$, which has at least one solution in $\Delta^{\check{K}}$.

\subsection{Bound of $Q_t^j$ for adversarial jamming attacks}\label{appendix.aQ}

Recall the definition of $Q_t^j$ for adversarial jammers in \eqref{eq.def-q2},
where ${\cal G}_t^j$ has $\check{K}$ nodes. Define node set ${\cal N}_1$ as the nodes satisfing $\Phi(\tilde{\cal K}_t^j) \in  {\cal S}_t^j$; and collect other nodes in set ${\cal N}_2$. Specifically, for $k \in {\cal N}_1$, we have ${q_t^j(k)}/\big(\mu_t^j+\sum_{(m, k) \in {\cal G}_t^j} q_t^j(m) \big) = q_t^j(k)/(1+\mu_t^j)$. Since the structure of ${\cal G}_t^j$ depends on the available set ${\cal K}_t^j$ and the side observation set ${\cal S}_t^j$, the next lemma presents ${\cal K}_t^j$ and ${\cal S}_t^j$ dependent bounds on $Q_t^j$.

\begin{lemma}\label{lemma.4}
	Quantity $Q_t^j$ satisfies the following
	\begin{align*}
		\frac{1}{1+\mu_t^j}\!\leq Q_t^j \leq \! |{\cal K}_t^j \cup {\cal S}_t^j\big|-\big|{\cal S}_t^j\big| + \mathds{1} \big({\cal S}_t^j \neq \emptyset \big).
	\end{align*}
\end{lemma}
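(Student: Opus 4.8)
The plan is to exploit the bipartition of the $\check{K}$ server-list nodes into ${\cal N}_1$ and ${\cal N}_2$ introduced just before the lemma, and to bound the two resulting pieces of $Q_t^j$ separately, mirroring the stochastic treatment in Lemma~\ref{lemma.1}. The lower bound is immediate: for every node $k$ the neighbor sum $\sum_{(m,k)\in{\cal G}_t^j}q_t^j(m)$ ranges over a subset of the nodes (and always contains the self-loop at $k$), hence is at most $\sum_m q_t^j(m)=1$, so each denominator is at most $\mu_t^j+1$ and
\[
Q_t^j=\sum_{k=1}^{\check{K}}\frac{q_t^j(k)}{\mu_t^j+\sum_{(m,k)\in{\cal G}_t^j}q_t^j(m)}\geq\sum_{k=1}^{\check{K}}\frac{q_t^j(k)}{\mu_t^j+1}=\frac{1}{1+\mu_t^j}.
\]

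For the upper bound I would handle ${\cal N}_1$ and ${\cal N}_2$ in turn. For $k\in{\cal N}_1$, i.e.\ $\Phi_k(\tilde{\cal K}_t^j)\in{\cal S}_t^j$, the risk that list $k$ needs is delivered by the side observation no matter which list is actually played, so every node carries an edge into $k$, the denominator is exactly $\mu_t^j+1$, and $\sum_{k\in{\cal N}_1}q_t^j(k)/(1+\mu_t^j)\leq\sum_{k\in{\cal N}_1}q_t^j(k)\leq 1$. Since ${\cal N}_1$ is nonempty only when ${\cal S}_t^j\neq\emptyset$, this piece is bounded by $\mathds{1}({\cal S}_t^j\neq\emptyset)$. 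For $k\in{\cal N}_2$ I would first record the compatibility fact $\Phi_k(\tilde{\cal K}_t^j)=\Phi_k({\cal K}_t^j)\in{\cal K}_t^j\setminus{\cal S}_t^j$ (no element of ${\cal S}_t^j$ can outrank this output in list $k$, for otherwise it would already be the output under $\tilde{\cal K}_t^j$), then group the ${\cal N}_2$ nodes by their common output $s$. For such a node the neighbor sum equals $p_t^j(s):=\sum_{m:\,\Phi_m({\cal K}_t^j)=s}q_t^j(m)$, the total mass placed on lists whose played server is $s$; since the nodes of a group are themselves among those lists, each group contributes $\sum_k q_t^j(k)/(\mu_t^j+p_t^j(s))\leq p_t^j(s)/(\mu_t^j+p_t^j(s))\leq 1$. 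There are exactly $|{\cal K}_t^j\setminus{\cal S}_t^j|=|{\cal K}_t^j\cup{\cal S}_t^j|-|{\cal S}_t^j|$ admissible values of $s$ (by inclusion--exclusion), so the ${\cal N}_2$ piece is at most $|{\cal K}_t^j\cup{\cal S}_t^j|-|{\cal S}_t^j|$; adding the two pieces yields the claim.

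The main obstacle is not any calculation but pinning down the edge structure of ${\cal G}_t^j$ precisely enough to justify the two denominator identities---that each ${\cal N}_1$ node receives edges from all nodes (neighbor sum $1$), and that an ${\cal N}_2$ node with output $s$ receives edges exactly from the lists that actually select $s$ (neighbor sum $p_t^j(s)$). Both rest on the single observation that playing list $m$ reveals the risk of $\Phi_m({\cal K}_t^j)$ together with every server in ${\cal S}_t^j$, combined with the compatibility $\Phi_k(\tilde{\cal K}_t^j)=\Phi_k({\cal K}_t^j)$ on ${\cal N}_2$; once these are in hand, the bound reduces to counting the distinct played servers associated with ${\cal N}_2$.
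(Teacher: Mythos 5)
Your proposal is correct and follows essentially the same route as the paper: the lower bound from each denominator being at most $\mu_t^j+1$, and the upper bound by splitting over ${\cal N}_1$ (contributing at most $\mathds{1}({\cal S}_t^j\neq\emptyset)$) and ${\cal N}_2$, where your ``groups of lists sharing a common output server $s$'' are exactly the cliques ${\cal I}$ of the bidirectional subgraph $\bar{\cal G}_t^j$ that the paper sums over, each contributing at most $1$ with $|{\cal K}_t^j\cup{\cal S}_t^j|-|{\cal S}_t^j|$ of them. Your explicit justification of the edge structure (and of $\Phi_k(\tilde{\cal K}_t^j)=\Phi_k({\cal K}_t^j)$ on ${\cal N}_2$) is in fact slightly more careful than the paper's, which asserts these facts without argument.
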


\begin{proof}
Following steps similar to those in proving Lemma \ref{lemma.1}, it can be shown that $Q_t^j \geq \frac{1}{1+\mu_t^j}$. To derive the upper bound, consider first that there are no side observations, meaning ${\cal N}_1 = \emptyset$. Leveraging the definition of $Q_t^j$ and the symmetric structure of ${\cal G}_t^j$ in this case, it is easy to see that $Q_t^j \leq |{\cal K}_t^j|$.

Then consider ${\cal N}_1 \neq \emptyset$, which means that side observations are available. For the nodes in ${\cal N}_1$, we have
\begin{align}\label{eq.qpart1}
	\sum_{k \in {\cal N}_1 }\frac{q_t^j(k)}{\mu_t^j\!+\!\sum_{(m, k) \in {\cal G}_t^j} q_t^j(m)} =  \sum_{k \in {\cal N}_1 }\frac{q_t^j(k)}{\mu_t^j\!+\! 1} \stackrel{(a)}{\leq} 1,
\end{align}
where (a) uses the fact that $\sum_{k \in {\cal N}_1 } q_t^j(k) \leq 1$. Then leveraging the structure of ${\cal G}_t^j$, we have for the nodes of ${\cal N}_2$, 
\begin{align}\label{eq.qpart2}
&~~\sum_{k \in {\cal N}_2 }\frac{q_t^j(k)}{\mu_t^j\!+\!\sum_{(m, k) \in {\cal G}_t^j} q_t^j(m)} \leq  \sum_{{\cal I} } \sum_{k \in {\cal I}} \frac{ q_t^j(k)}{\sum_{(m, k) \in {\bar{\cal G}}_t^j} q_t^j(m)} \nonumber \\
&\leq \sum_{{\cal I} }\frac{ \sum_{k \in {\cal I}} q_t^j(k)}{\sum_{k \in {\cal I}} q_t^j(k)} \stackrel{(b)}{=} \big|{\cal K}_t^j \cup {\cal S}_t^j\big| - \big| {\cal S}_t^j\big|
\end{align}
where $\bar{\cal G}_t^j$ is the subgraph of ${\cal G}_t^j$ without unidirectional edges; ${\cal I}$ collects the cliques in $\bar{\cal G}_t^j$; and (b) uses $|{\cal I}| = \big|{\cal K}_t^j \cup {\cal S}_t^j\big| - \big| {\cal S}_t^j\big| $. Adding \eqref{eq.qpart1} and \eqref{eq.qpart2}, we obtain the upper bound on $Q_t^j$ for ${\cal N}_1 \neq \emptyset$. Writing the upper bound on $Q_t^j$ for both ${\cal N}_1 \neq \emptyset$ and ${\cal N}_1 = \emptyset$ compactly, completes the proof of the lemma.
\end{proof}

Simply plugging the results of Lemma \ref{lemma.4} into the definition of $\lambda^j$, we can upper bound $\lambda^j$ of SAVE-A as
\begin{align}
\!\!\lambda^j \!\leq\! \sqrt{\frac{1}{T} \!+\!\frac{1}{KT}\sum_{t=1}^T   \Big( \big|{\cal K}_t^j \cup {\cal S}_t^j\big|-\big|{\cal S}_t^j\big| + \mathds{1} \big({\cal S}_t^j \neq \emptyset \big)  \Big)}.
\end{align}
Summing over $j$, leads to the upperbound on $\lambda$.


\begin{thebibliography}{10}
\providecommand{\url}[1]{#1}
\csname url@samestyle\endcsname
\providecommand{\newblock}{\relax}
\providecommand{\bibinfo}[2]{#2}
\providecommand{\BIBentrySTDinterwordspacing}{\spaceskip=0pt\relax}
\providecommand{\BIBentryALTinterwordstretchfactor}{4}
\providecommand{\BIBentryALTinterwordspacing}{\spaceskip=\fontdimen2\font plus
\BIBentryALTinterwordstretchfactor\fontdimen3\font minus
  \fontdimen4\font\relax}
\providecommand{\BIBforeignlanguage}[2]{{%
\expandafter\ifx\csname l@#1\endcsname\relax
\typeout{** WARNING: IEEEtran.bst: No hyphenation pattern has been}%
\typeout{** loaded for the language `#1'. Using the pattern for}%
\typeout{** the default language instead.}%
\else
\language=\csname l@#1\endcsname
\fi
#2}}
\providecommand{\BIBdecl}{\relax}
\BIBdecl

\bibitem{chiang2016}
M.~Chiang and T.~Zhang, ``Fog and {IoT}: An overview of research
  opportunities,'' \emph{IEEE Internet of Things Journal}, vol.~3, no.~6, pp.
  854--864, Dec. 2016.

\bibitem{satyanarayanan2009}
M.~Satyanarayanan, P.~Bahl, R.~Caceres, and N.~Davies, ``The case for vm-based
  cloudlets in mobile computing,'' \emph{IEEE Pervasive Computing}, vol.~8,
  no.~4, Oct. 2009.

\bibitem{pan2018}
J.~Pan and J.~McElhannon, ``Future edge cloud and edge computing for internet
  of things applications,'' \emph{IEEE Internet of Things Journal}, vol.~5,
  no.~1, pp. 439--449, Feb. 2018.

\bibitem{wang2018}
F.~Wang, J.~Xu, X.~Wang, and S.~Cui, ``Joint offloading and computing
  optimization in wireless powered mobile-edge computing systems,''
  \emph{{IEEE} Trans. Wireless Commun.}, vol.~17, no.~3, pp. 1784--1797, Mar.
  2018.

\bibitem{chen2016efficient}
X.~Chen, L.~Jiao, W.~Li, and X.~Fu, ``Efficient multi-user computation
  offloading for mobile-edge cloud computing,'' \emph{{IEEE/ACM} Trans.
  Networking}, vol.~24, no.~5, pp. 2795--2808, Oct. 2016.

\bibitem{chen2017c}
\BIBentryALTinterwordspacing
T.~Chen and G.~B. Giannakis, ``Bandit convex optimization for scalable and
  dynamic {IoT} management,'' \emph{IEEE Internet of Things Journal}, to
  appear, 2018. [Online]. Available: \url{https://arxiv.org/abs/1707.09060}
\BIBentrySTDinterwordspacing

\bibitem{salameh2018}
H.~B. Salameh, S.~Almajali, M.~Ayyash, and H.~Elgala, ``Spectrum assignment in
  cognitive radio networks for internet-of-things delay-sensitive applications
  under jamming attacks,'' \emph{IEEE Internet of Things Journal}, to appear,
  2018.

\bibitem{garnaev2016}
A.~Garnaev, Y.~Liu, and W.~Trappe, ``Anti-jamming strategy versus a low-power
  jamming attack when intelligence of adversary's attack type is unknown,''
  \emph{IEEE Transactions on Signal and Information Processing over Networks},
  vol.~2, no.~1, pp. 49--56, Mar. 2016.

\bibitem{ni2018}
J.~Ni, K.~Zhang, X.~Lin, and X.~S. Shen, ``Securing fog computing for {Internet
  of Things} applications: Challenges and solutions,'' \emph{IEEE
  Communications Surveys Tutorials}, vol.~20, no.~1, pp. 601--628, First
  Quarter 2018.

\bibitem{yang2017}
Y.~Yang, L.~Wu, G.~Yin, L.~Li, and H.~Zhao, ``A survey on security and privacy
  issues in internet-of-things,'' \emph{IEEE Internet of Things Journal},
  vol.~4, no.~5, pp. 1250--1258, Oct. 2017.

\bibitem{ren2017}
J.~Ren, H.~Guo, C.~Xu, and Y.~Zhang, ``Serving at the edge: A scalable {IoT}
  architecture based on transparent computing,'' \emph{IEEE Network}, vol.~31,
  no.~5, pp. 96--105, 2017.

\bibitem{noor2016}
T.~H. Noor, Q.~Z. Sheng, L.~Yao, S.~Dustdar, and A.~H. Ngu, ``{CloudArmor}:
  Supporting reputation-based trust management for cloud services,''
  \emph{{IEEE} Trans. Parallel and Distrib. Syst.}, vol.~27, no.~2, pp.
  367--380, Feb. 2016.

\bibitem{zou2016}
Y.~Zou, J.~Zhu, X.~Wang, and L.~Hanzo, ``A survey on wireless security:
  Technical challenges, recent advances, and future trends,'' \emph{Proceedings
  of the IEEE}, vol. 104, no.~9, pp. 1727--1765, Sept. 2016.

\bibitem{roman2018}
R.~Roman, J.~Lopez, and M.~Mambo, ``Mobile edge computing, fog et al.: A survey
  and analysis of security threats and challenges,'' \emph{Future Generation
  Computer Systems}, vol.~78, pp. 680--698, Jan. 2018.

\bibitem{hu2018}
L.~Hu, H.~Wen, B.~Wu, F.~Pan, R.~F. Liao, H.~Song, J.~Tang, and X.~Wang,
  ``Cooperative jamming for physical layer security enhancement in internet of
  things,'' \emph{IEEE Internet of Things Journal}, vol.~5, no.~1, pp.
  219--228, Feb. 2018.

\bibitem{lu2014}
Z.~Lu, W.~Wang, and C.~Wang, ``Modeling, evaluation and detection of jamming
  attacks in time-critical wireless applications,'' \emph{IEEE Transactions on
  Mobile Computing}, vol.~13, no.~8, pp. 1746--1759, Aug. 2014.

\bibitem{oro2017}
S.~D'Oro, E.~Ekici, and S.~Palazzo, ``Optimal power allocation and scheduling
  under jamming attacks,'' \emph{IEEE/ACM Transactions on Networking}, vol.~25,
  no.~3, pp. 1310--1323, Jun. 2017.

\bibitem{zhang2016}
L.~Zhang, Z.~Guan, and T.~Melodia, ``United against the enemy: Anti-jamming
  based on cross-layer cooperation in wireless networks,'' \emph{{IEEE} Trans.
  Wireless Commun.}, vol.~15, no.~8, pp. 5733--5747, Aug. 2016.

\bibitem{wang2012}
Q.~Wang, P.~Xu, K.~Ren, and X.~Y. Li, ``Towards optimal adaptive ufh-based
  anti-jamming wireless communication,'' \emph{{IEEE} J. Select. Areas
  Commun.}, vol.~30, no.~1, pp. 16--30, Jan. 2012.

\bibitem{chang2017}
G.~Y. Chang, S.~Y. Wang, and Y.~X. Liu, ``A jamming-resistant channel hopping
  scheme for cognitive radio networks,'' \emph{{IEEE} Trans. Wireless Commun.},
  vol.~16, no.~10, pp. 6712--6725, Oct. 2017.

\bibitem{zhou2017}
P.~Zhou, Q.~Wang, W.~Wang, Y.~Hu, and D.~Wu, ``Near-optimal and practical
  jamming-resistant energy-efficient cognitive radio communications,''
  \emph{{IEEE} Trans. Inf. Forensic Secur.}, vol.~12, no.~11, pp. 2807--2822,
  Nov. 2017.

\bibitem{kanade2009}
V.~Kanade, H.~B. McMahan, and B.~Bryan, ``Sleeping experts and bandits with
  stochastic action availability and adversarial rewards,'' in \emph{Proc.
  Intl. Conf. on Artificial Intelligence and Statistics}, Clearwater Beach,
  Florida, April 2009, pp. 272--279.

\bibitem{kleinberg2010}
R.~Kleinberg, A.~Niculescu-Mizil, and Y.~Sharma, ``Regret bounds for sleeping
  experts and bandits,'' \emph{Machine learning}, vol.~80, no. 2-3, pp.
  245--272, April 2010.

\bibitem{li2018}
B.~Li, T.~Chen, X.~Wang, and G.~B. Giannakis, ``Sleeping multi-armed bandit
  with constraints,'' \emph{Proc. Conf. on Uncertainty in Artificial
  Intelligence}, Aug. 2018 (submitted).

\bibitem{chen2018task}
M.~Chen and Y.~Hao, ``Task offloading for mobile edge computing in software
  defined ultra-dense network,'' \emph{{IEEE} J. Select. Areas Commun.},
  vol.~PP, no.~99, pp. 1--1, 2018.

\bibitem{chen2017b}
L.~Chen and J.~Xu, ``Socially trusted collaborative edge computing in ultra
  dense networks,'' in \emph{Proc. of ACM/IEEE Symposium on Edge Computing},
  May 2017, p.~9.

\bibitem{cheng2017}
S.~M. Cheng, P.~Y. Chen, C.~C. Lin, and H.~C. Hsiao, ``Traffic-aware patching
  for cyber security in mobile {IoT},'' \emph{IEEE Communications Magazine},
  vol.~55, no.~7, pp. 29--35, 2017.

\bibitem{chen2017tsp}
T.~Chen, Q.~Ling, and G.~B. Giannakis, ``An online convex optimization approach
  to proactive network resource allocation,'' \emph{{IEEE} Trans. Signal
  Processing}, vol.~65, no.~24, pp. 6350--6364, Dec. 2017.

\bibitem{hazan2016}
E.~Hazan \emph{et~al.}, ``Introduction to online convex optimization,''
  \emph{Found. and Trends{\textregistered} in Optimization}, vol.~2, no. 3-4,
  pp. 157--325, 2016.

\bibitem{kocak2014}
T.~Koc{\'a}k, G.~Neu, M.~Valko, and R.~Munos, ``Efficient learning by implicit
  exploration in bandit problems with side observations,'' in \emph{Proc.
  Advances in Neural Info. Process. Syst.}, Montreal, Canada, Dec. 2014, pp.
  613--621.

\bibitem{kocak2016}
T.~Koc{\'a}k, G.~Neu, and M.~Valko, ``Online learning with noisy side
  observations.'' in \emph{Proc. Intl. Conf. on Artificial Intelligence and
  Statistics}, Cadiz, Spain, May 2016, pp. 1186--1194.

\bibitem{alon2017}
N.~Alon, N.~Cesa-Bianchi, C.~Gentile, S.~Mannor, Y.~Mansour, and O.~Shamir,
  ``Nonstochastic multi-armed bandits with graph-structured feedback,''
  \emph{SIAM Journal on Computing}, vol.~46, no.~6, pp. 1785--1826, 2017.

\bibitem{auer2002a}
P.~Auer, N.~Cesa-Bianchi, Y.~Freund, and R.~E. Schapire, ``The nonstochastic
  multiarmed bandit problem,'' \emph{SIAM Journal on Computing}, vol.~32,
  no.~1, pp. 48--77, 2002.

\bibitem{bubeck2012}
S.~Bubeck, N.~Cesa-Bianchi \emph{et~al.}, ``Regret analysis of stochastic and
  nonstochastic multi-armed bandit problems,'' \emph{Found. and
  Trends{\textregistered} in Machine Learning}, vol.~5, no.~1, pp. 1--122,
  2012.

\bibitem{shen2018}
Y.~Shen, T.~Chen, and G.~B. Giannakis, ``Online ensemble multi-kernel learning
  adaptive to non-stationary and adversarial environments,'' in \emph{Proc.
  Intl. Conf. on Artificial Intelligence and Statistics}, Lanzarote, Canary
  Islands, April 2018.

\end{thebibliography}

\end{document}